\definecolor{orange}{HTML}{E38256}
\definecolor{blue}{HTML}{4575AD}
\definecolor{green}{HTML}{A4D666}
\definecolor{pink}{HTML}{EC8EC1}
\def\ie{{\em i.e.},\ }
\def\eg{{\em e.g.},\ }
\def\Surv#1#2{S(#1 \mid #2)}
\def\SurvPred#1#2{\hat{S}(#1 \mid #2)}
\def\Bfx#1{\boldsymbol{x}_{#1}}
\def\BfX{\mathbf{X}}
\def\E{\mathbb{E}}
\def\prob{\mathbb{P}}
\def\Data{\mathcal{D}}
\def\iset{\mathcal{I}}
\def\Model{\mathcal{M}}
\def\Pct{\text{Percentile}}
\def\xmark{\text{\sffamily X}}%
\DeclareMathOperator*{\argmax}{arg\,max}
\newcommand*{\Scale}[2][4]{\scalebox{#1}{$#2$}}%
\theoremstyle{plain}
\newtheorem{theorem}{Theorem}[section]
\newtheorem{lemma}[theorem]{Lemma}
\theoremstyle{definition}
\newtheorem{definition}[theorem]{Definition}
\theoremstyle{remark}
\def\CSDiPOT{\texttt{CiPOT} }
\title{
Toward Conditional Distribution Calibration in Survival Prediction}
\author{%
  Shi-ang Qi $^1$, Yakun Yu $^2$, Russell Greiner $^{1 \ 3}$\\
  $^1$Computing Science, University of Alberta, Edmonton, Canada \\
  $^2$Electrical Computer Engineering, University of Alberta, Edmonton, Canada \\
  $^3$Alberta Machine Intelligence Institute, Edmonton, Canada \\
  \texttt{\{shiang, yakun2, rgreiner\}@ualberta.ca} \\
}
\begin{document}

\maketitle

\begin{abstract}
Survival prediction often involves estimating the time-to-event distribution from censored datasets. 
Previous approaches have focused on enhancing discrimination and marginal calibration. 
In this paper, we highlight the significance of \emph{conditional calibration} for real-world applications
-- especially 
its role in individual decision-making. 
We propose a method based on conformal prediction that uses the model's predicted individual survival probability at {that instance's} observed time. 
This method effectively improves the model's marginal and conditional calibration, without compromising discrimination.
We provide asymptotic theoretical guarantees for both marginal and conditional calibration and 
test it extensively
across 15 diverse real-world datasets,
demonstrating the method's practical effectiveness and versatility in various settings.
\end{abstract}

\section{Introduction}
\label{sec:intro}

Individual survival distribution (ISD), or time-to-event distribution, is a probability distribution that describes the times until the occurrence of a specific event of interest for an instance, based on information about that individual.
Accurately estimating ISD is essential for effective decision-making and clinical resource allocation. 
However, a challenge in learning such survival prediction models is training 
{on datasets that include \emph{censored}\ instances}, where we only know a lower bound of their time-to-event.

Survival models typically focus on two important
but 
distinct properties during optimization and evaluation:
(i)~\emph{discrimination} measures how well a model's relative predictions between individuals align with the observed order~\cite{harrell1984regression, harrell1996multivariable}, which is useful for pairwise decisions such as prioritizing treatments;
(ii)~\emph{calibration} assesses how well the predicted survival probabilities match the actual distribution of observations~\cite{haider2020effective, chapfuwa2020calibration}, supporting both individual-level (\eg determining high-risk treatments based on the probability) and group-level (\eg allocating clinical resources) decisions.
{Some} prior research has sought to improve calibration by integrating a calibration-specific loss during optimization~\cite{avati2020countdown, goldstein2020x, chapfuwa2020calibration}. However, these often produce models with poor discrimination~\cite{kamran2021estimating,qi2024conformalized}, limiting their utility in scenarios where precise pairwise decisions are critical.

Furthermore, previous studies have typically addressed calibration in a marginal sense
-- \ie assessing whether probabilities align with the actual distribution \emph{across the entire population}.
However, for many applications, marginal calibration may be inadequate
-- we often require that predictions are correctly calibrated, 
\emph{conditional on any 
{combination of} features}.
This can be helpful for making more precise clinical decisions for individuals and groups.
For example, when treating an overweight male, a doctor might decide on cardiovascular surgery using a model calibrated for both overweight and male.
{Note this might lead to a different decision that one based on a model that was calibrated for all patients.}
Similarly, a hospice institution may want to allocate nursing care based on a model that generates calibrated predictions \emph{for elderly individuals}. 
This also aligns with the fairness perspective~\cite{verma2018fairness}, where clinical decision systems should guarantee equalized calibration performance across any protected groups.

\paragraph{Contributions} To overcome these challenges, we introduce the \CSDiPOT framework, a post-processing approach built upon conformal prediction~\cite{vovk2005algorithmic, romano2019conformalized, candes2023conformalized, qi2024conformalized} that uses the Individual survival Probability at Observed Time (iPOT) as conformity scores and generates conformalized survival distributions. 
The method has 3 important properties: 
(i)~this conformity score naturally conforms to the definition in distribution calibration in survival analysis~\cite{haider2020effective}; 
(ii)~it also captures the distribution variance of the ISD, therefore is adaptive to the features; 
and (iii)~the method is computationally friendly for survival analysis models. 
Our key contributions are:
\begin{itemize}
    \item Motivating the use of conditional distribution calibration in survival analysis, and proposing a metric ($\text{Cal}_{\text{ws}}$, defined in Section~\ref{sec:metrics}) to evaluate this property.
    \item Developing the \CSDiPOT framework, to accommodate censorship. The method effectively solves some issues of previous conformal methods wrt inaccurate Kaplan-Meier estimation;
    \item Theoretically proving that \CSDiPOT asymptotically guarantees marginal and conditional distribution calibration under some specified  
    assumptions;
    \item Conducting extensive experiments across 15 datasets, showing that \CSDiPOT improves both marginal and conditional distribution calibration without sacrificing discriminative ability;
    \item Demonstrating that \CSDiPOT is computationally more efficient than prior conformal method on survival analysis.
\end{itemize}

\section{Problem statement and Related Work} 
\label{sec:problems}

\subsection{Notation}
\label{sec:notation}
A survival dataset $\mathcal{D}=\{(\Bfx{i}, t_i, \delta_i)\}_{i=1}^n$ contains $n$ tuples, 
each containing covariates $\Bfx{i} \in \mathbb{R}^d$, an observed time $t_i \in \mathbb{R}_+$, 
and an event indicator $\delta_i \in \{0, 1\}$. 
For each subject, there are two potential times of interest: the event time $e_i$ and the censoring time $c_i$. 
However, only the earlier of the two is observable.
We assign $t_i \triangleq \min\{e_i, c_i\}$ and $\delta_{i} \triangleq \mathbbm{1}[e_i \leq c_i]$, so
$\delta_i = 0$ means the event has not happened by $t_i$ (right-censored) and $\delta_i = 1$ indicates the event occurred at $t_i$ (uncensored). 
Let $\iset$ denote the set of indices in dataset $\Data$, then we can use $i\in \iset$ to represent $(\Bfx{i}, t_i, \delta_i) \in \Data$.

Our objective is to estimate the Individualized Survival Distribution (ISD), $S (t \mid \Bfx{i}) = \prob(e_i > t \mid \BfX=\Bfx{i})$, which represents the survival probabilities of the $i$-th subject for any time
{$t\geq 0$}.

\subsection{Notions of calibration in survival analysis} 
\label{sec:calibration}


Calibration measures the alignment between the predictions against observations.
Consider distribution calibration at the individual level:
if an oracle knows the true ISD $S (t \mid \Bfx{i})$, 
and draws realizations of $e_i \mid \Bfx{i}$ (call them $e_i^{(1)}, e_i^{(2)}, \ldots$),
then the survival probability at observed time $\{S (e_i^{(m)} \mid \Bfx{i})\}_{m}$ 
should be distributed across a standard uniform distribution $\mathcal{U}_{[0,1]}$ (probability integral theorem~\citep{angus1994probability}). 
However, in practice, for each unique $\Bfx{i}$, there is only one realization of $e_i \mid \Bfx{i}$, meaning we cannot check the calibration in this individual manner.

To solve this, \citet{haider2020effective} proposed \emph{marginal calibration}, which holds if the predicted survival probabilities at event times $e_i$ over the $\Bfx{i}$ in the dataset, $\{\hat{S} (e_i \mid \Bfx{i})\}_{i \in \iset}$, matches $\mathcal{U}_{[0, 1]}$. 
\begin{definition}
\label{def:margin_cal_uncensored}
For uncensored dataset, a model has perfect marginal calibration iff $\forall \ [\rho_1, \rho_2] \subset [0, 1]$,
\begin{equation}
\label{eq:cal_uncensored}
\Scale[0.93]{
    \prob \left( \, \hat{S} (e_i \mid \Bfx{i} ) \in [\rho_1, \rho_2]\, , \, i \in \iset \,\middle\vert\,  \delta_i  = 1  \, \right)  \ = \ \E_{i \in \iset} \ \mathbbm{1} \left[ \, \hat{S} (e_i \mid \Bfx{i}) \in [\rho_1, \rho_2]  \,\middle\vert\, \delta_i=1 \, \right] \ = \ \rho_2 - \rho_1.
    }
\end{equation}
\end{definition}
We can ``blur'' each censored subject uniformly over the probability intervals after the survival probability at censored time $\hat{S} (c_i \mid \Bfx{i})$~\cite{haider2020effective} (see the derivation in Appendix~\ref{appendix:calibration_intro}):
\begin{equation}
\label{eq:cal_censored}
\Scale[1.05]{
    \prob \left( \hat{S} (e_i \mid \Bfx{i}) \in [\rho_1, \rho_2] \,\middle\vert\, \delta_i = 0 \right) = \frac{\left(\SurvPred{t_i}{\Bfx{i}} - \rho_1\right)\mathbbm{1}\left[\SurvPred{t_i}{\Bfx{i}} \in [\rho_1, \rho_2]\right] + (\rho_2 - \rho_1)\mathbbm{1}\left[\SurvPred{t_i}{\Bfx{i}} \geq \rho_2\right]}{\SurvPred{t_i}{\Bfx{i}}}.
    }
\end{equation}

\begin{figure}
    \centering
\includegraphics[width=0.85\textwidth]{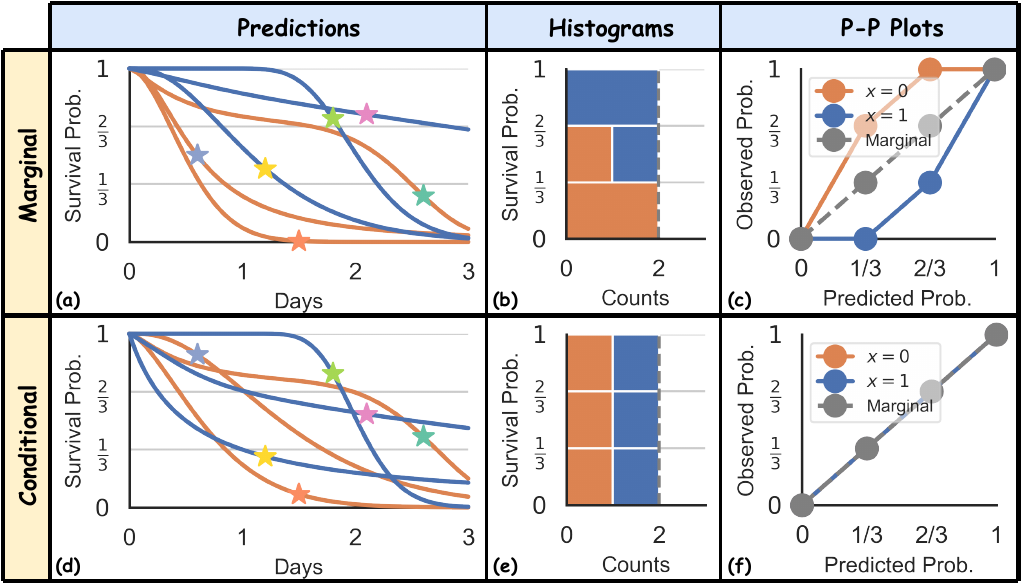}
    \caption{Two notions of distribution calibration: marginal and conditional, illustrated using 3 bins separated at $\frac{1}{3}$ and $\frac{2}{3}$. 
    The curves in~(a, d) represent the predicted ISDs.
    The colors of the stars distinguish the six subjects, with horizontal coordinates indicating the true event time (consistent across all panels) and vertical coordinates representing predicted survival probability at event time.
    Note the two groups (\textcolor{orange}{\textbf{orange}} for $x=0$ and \textcolor{blue}{\textbf{blue}} for $x=1$) correspond to the colors of the curves and histograms in (a, b, d, e). Note that all three P-P lines in the conditional case~(f) coincide.
    }
    \label{fig:calibration_example}
\end{figure}

Figure~\ref{fig:calibration_example}(a) illustrates how marginal calibration is assessed using 6 uncensored subjects. 
Figure~\ref{fig:calibration_example}(b, c) show the histograms and P-P plots, showing the predictions are marginally calibrated, over the predefined 3 bins%
, as we can see there are 6/3 = 2 instances in each of the 3 bins. 
However, if we divide the datasets into two groups (orange vs. blue -- think men vs. women), 
we can see that 
this is not the case, as there is no orange instance in the $[\frac{2}{3}, 1]$ bin, and 2 orange instances in the $[\frac{1}{3}, 1]$ bin.

In summary, individual calibration is ideal but impractical. Conversely, marginal calibration is more feasible but fails to assess calibration relative to certain subsets of the population by features. 
This discrepancy motivates us to explore a middle ground -- \emph{conditional calibration}.
A conditionally calibrated prediction, which ensures that the predicted survival probabilities are uniformly distributed in each of these groups, 
as shown in Figure~\ref{fig:calibration_example}(d, e, f), is more effective in real-world scenarios.
Consider predicting employee attrition within a company: 
while a marginal calibration using a Kaplan-Meier (KM)~\cite{kaplan1958nonparametric} curve might reflect overall population trends, it fails to account for variations such as the tendency of lower-salaried employees to leave earlier. 
A model that is calibrated for both high and low salary levels would be more helpful for 
predicting the actual quitting times and 
{facilitate planning}. 
Similarly, when predicting the timing of death from cardiovascular diseases, 
models calibrated for older populations, who exhibit more predictable and less varied outcomes~\cite{visseren20212021}, may not apply to younger individuals with higher outcome variability. Using age-inappropriate models could lead to inaccurate predictions, adversely affecting treatment plans.




\subsection{Maintaining discriminative performance while ensuring good calibration}
\label{sec:disc_cali_trade_off}

Methods based on the objective function~\cite{avati2020countdown, goldstein2020x, chapfuwa2020calibration} have been developed to enhance the marginal calibration of ISDs, involving the addition of a calibration loss to the model’s original objective function (\eg likelihood loss).  
However, while those methods are effective in improving the marginal calibration performance of the model, their model often significantly harms the discrimination performance~\cite{goldstein2020x, chapfuwa2020calibration, kamran2021estimating},
a phenomenon known as the \emph{discrimination-calibration trade-off}~\cite{kamran2021estimating}.

Post-processed methods~\cite{candes2023conformalized, qi2024conformalized} have been proposed to solve this trade-off by disentangling calibration from discrimination in the optimization process. 
\citet{candes2023conformalized} uses the individual censoring probability as the weighting in addition to the regular Conformalized Quantile Regression (CQR)~\cite{romano2019conformalized} method. 
However, their weighting method is only applicable to Type-I censoring settings where each subject must have a known censored time \cite{klein2006survival} -- which is not applicable to most of the right-censoring datasets.

\citet{qi2024conformalized} developed Conformalized Survival Distribution (\texttt{CSD}) by first discretizing the ISD curves into percentile times (via predefined percentile levels), and then applying CQR~\cite{romano2019conformalized} for each percentile level (see the visual illustration of \texttt{CSD} in Figure~\ref{fig:csd_compare} in Appendix~\ref{appendix:alg}).
Their method handles right-censoring using KM-sampling, which employs a conditional KM curve to simulate multiple event times for a censored subject, 
offering a calibrated approximation for the ISD based on the subject's censored time.
However, their method struggles with some inherent problems of KM~\cite{kaplan1958nonparametric} --
\eg KM can be inaccurate when the dataset contains a high proportion of censoring~\cite{liu2021ipdfromkm}. 
Furthermore, we also observed that the KM estimation often concludes at high probabilities (as seen in datasets like \texttt{HFCR}, \texttt{FLCHAIN}, and \texttt{Employee} in Figure~\ref{fig:datasets}).
This poses a challenge in extrapolating beyond the last KM time point, which hinders the accuracy of KM-sampling, thereby constraining the efficacy of \texttt{CSD} (see our results in Figure~\ref{fig:cindex_dcal_main}).

Our work is inspired by \texttt{CSD}~\citep{qi2024conformalized}, and can be seen as a percentile-based refinement of their regression-based approach.
Specifically, our \CSDiPOT effectively addresses and resolves issues prevalent in the KM-sampling, significantly outperforming existing methods in terms of improving the marginal distribution calibration performance. 
Furthermore, to our best knowledge, this is the first approach that optimizes conditional calibration within the survival analysis that can deal with censorship.

However, achieving conditional calibration (also known as conditional coverage in some literature~\cite{vovk2005algorithmic}) is challenging because it cannot be attained in a distribution-free manner for non-trivial predictions. In fact, guarantees of finite sample for conditional calibration are impossible to achieve even for standard regression datasets without censorship~\cite{vovk2005algorithmic, lei2014distribution, foygel2021limits}. This limitation is an important topic in the statistical learning and conformal prediction literature~\cite{vovk2005algorithmic, lei2014distribution, foygel2021limits}. Therefore, our paper does not attempt to provide finite sample guarantees. Instead, following the approach of many other researchers~\cite{romano2019conformalized, lei2018distribution, sesia2020comparison, izbicki2020flexible, chernozhukov2021distributional, izbicki2022cd}, we provide only asymptotic guarantees as the sample size approaches infinity. The key idea behind this asymptotic conditional guarantee is that the construction of post-processing predictions relies on the quality of the original predictions. Thus, we aim for conditional calibration only within the class of predictions that can be learned well -- that is, consistent estimators.

We acknowledge that this assumption may not hold in practice; 
however, (i)~reliance on consistent estimators is a standard (albeit strong) assumption in the field of conformal prediction~\cite{sesia2020comparison, izbicki2020flexible, chernozhukov2021distributional}, (ii)~to the best of our knowledge, no previous results have proven conditional calibration under more relaxed conditions, and (iii)~we provide empirical evidence of conditional calibration using extensive experiments (see Section~\ref{sec:exp}) .


\section{Methods}
\label{sec:methods}

This section describes our proposed method: Conformalized survival distribution using Individual survival Probability at Observed Time (\texttt{CiPOT}), which is motivated by the definition of distribution calibration~\cite{haider2020effective} and consists of three components: the estimation of continuous ISD prediction, the computation of suitable conformity scores (especially for censored subjects), and their conformal calibration.

\subsection{Estimating survival distributions}
\label{sec:method_step1}

For simplicity, our method is motivated by the split conformal prediction~\cite{papadopoulos2002inductive, romano2019conformalized}. 
We start the process by splitting the instances of the training data into a proper training set $\Data^{\text{train}}$ and a conformal set $\Data^{\text{con}}$. 
Then, we can use any survival algorithm or quantile regression algorithm (with the capability of handling censorship) to train a model $\Model$ using $\Data^{\text{train}}$ that can make ISD predictions for $\Data^{\text{con}}$ -- see Figure~\ref{fig:iPOT_illust}(a).

With little loss of generality, we assume that the ISD predicted by the model, $\hat{S}_{\Model}(t\mid \Bfx{i})$, are right-continuous and have unbounded range, \ie $\hat{S}_\Model (t\mid \Bfx{i}) > 0$ for all $t \geq 0$. 
For survival algorithms that can only generate piecewise constant survival probabilities (\eg Cox-based methods~\cite{cox1972regression, kvamme2019time}, discrete-time methods~\cite{yu2011learning, lee2018deephit}, etc.), the continuous issue can be fixed by applying some interpolation algorithms (\eg linear or spline). 

\begin{figure}
    \centering
    \includegraphics[width=\textwidth]{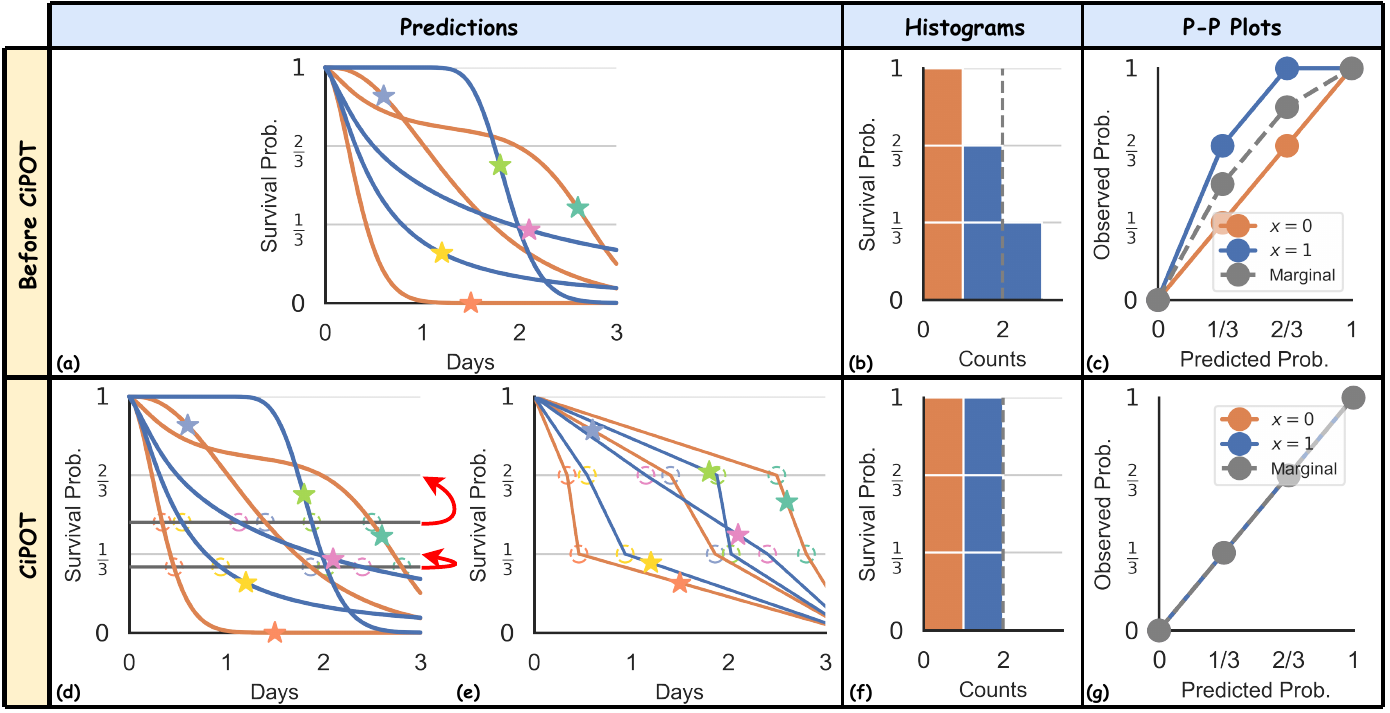}
    \vspace{-0.1in}
    \caption{A visual example of using \CSDiPOT to make the prediction (conditionally)-calibrated. 
    (a)~Initialize ISD predictions from an arbitrary survival algorithm with associated (b)~histograms and (c)~P-P plots.
    (d)~Calculate $\Pct (\rho; \, \Gamma_\Model)$ (grey lines) for all $\rho$s, and find the intersections (hollow points) of the ISD curves and the $\Pct (\rho; \, \Gamma_\Model)$ lines;
    (e)~Generate new ISD by vertically shifting the hollow points to the $\rho$'s level, with associated 
    (f)~histogram and (g)~P-P plots.
    Figure~\ref{fig:csd_compare} provide a side-by-side visual comparison between \texttt{CSD} and our method.}
    \label{fig:iPOT_illust}
\end{figure}

\subsection{Compute conformal scores and calibrate predicting distributions}
\label{sec:method_step2}

We start by sketching how \CSDiPOT deals with only uncensored subjects. 
Within the conformal set, for each subject $i \in \iset^{\text{con}}$, we define a distributional conformity score, wrt the model $\Model$, termed \textit{the predicted Individual survival Probability at Observed Time (iPOT)}:
\begin{align}
\label{eq:score_uncensored}
    \gamma_{i, \Model} \ := \ \hat{S}_\Model (e_i \mid \Bfx{i}).
\end{align}
Here, for uncensored subjects, the observed time corresponds to the event time, $t_i = e_i$.
Recall from Section~\ref{sec:calibration} that predictions from model $\Model$ are marginally calibrated if the iPOT values follow $\mathcal{U}_{[0, 1]}$ -- \ie if we collect the distributional conformity scores for every subject in the conformal set $\Gamma_\Model = \{\gamma_{i, \Model}\}_{i \in \iset^{\text{con}}}$, the $\rho$-th percentile value in this set should be equal to exactly $\rho$. If so, no post processing adjustments are necessary.

In general, of course, the estimated Individualized Survival Distributions (ISDs) $\hat{S}(t\mid \Bfx{i})$ may not perfectly align with the true distributions $S(t\mid \Bfx{i})$ from the oracle.
Therefore, for a testing subject with index $n+1$, we can simply apply the following adjustment to its estimated ISD: 
\begin{align}
\label{eq:adjustment}
    \Tilde{S}_\Model^{-1}(\rho \mid \Bfx{n+1}) \ := \ \hat{S}_\mathcal{M}^{-1} \left(\, \Pct(\rho; \, \Gamma_\Model )\mid \Bfx{n+1} \, \right), \quad \forall \rho \in (0, 1).
\end{align}
Here, $\Pct(\rho; \, \Gamma_\Model)$ calculates the $\frac{\left\lceil \rho (|\Data^{\text{con}}| + 1) \right\rceil}{|\Data^{\text{con}}|}$-th empirical percentile of $\Gamma_{\Model}$. 
This adjustment aims to re-calibrate the estimated ISD based on the empirical distribution of the conformity scores.

Visually, this adjustment involves three procedures: 
\begin{enumerate}[label=(\roman*)]
    \item It first identifies the empirical percentiles of the conformity scores -- $\Pct(\frac{1}{3}; \, \Gamma_\Model)$ and $\Pct(\frac{2}{3}; \, \Gamma_\Model)$, illustrated by the two grey lines at 0.28 and 0.47 in Figure~\ref{fig:iPOT_illust}(d), respectively -- which uniformly divide the stars according to their vertical locations; 
    \item It then determines the corresponding times on the predicted ISDs that match these empirical percentiles (the hollow circles, where each ISD crosses the horizontal line);
    \item Finally, the procedure shifts the empirical percentiles (grey lines) to the appropriate height of desired percentiles ($\frac{1}{3}$ and $\frac{2}{3}$), along with all the circles. This operation is indicated by the vertical shifts of the hollow points, depicted with curved red arrows in Figure~\ref{fig:iPOT_illust}(d).
\end{enumerate}

This adjustment results in the post-processed curves depicted in Figure~\ref{fig:iPOT_illust}(e). 
It shifts the vertical position of the \textcolor{green}{\textbf{green star} $\bigstar$} from the interval $[\frac{1}{3}, \frac{2}{3}]$ to $[\frac{2}{3}, 1]$, and the \textcolor{pink}{\textbf{pink star} $\bigstar$} from $[0, \frac{1}{3}]$ to $[\frac{1}{3}, \frac{2}{3}]$.
These shifts ensure that the calibration histograms and P-P plots achieve a uniform distribution (both marginally on the whole population and conditionally on each group) across the defined intervals.

After generating the post-processed curves, we can apply a final step, which involves transforming the inverse ISD function back into the ISD function for the testing subject:
\begin{align}
\label{eq:reverse_ISD}
     \Tilde{S}_\Model(t \mid \Bfx{n+1}) \ = \ \inf \{ \, \rho: \Tilde{S}_\Model^{-1}(\rho \mid \Bfx{n+1}) \leq t  \, \} .
\end{align}

The simple visual example in Figure~\ref{fig:iPOT_illust} shows only two percentiles created at $\frac{1}{3}$ and $\frac{2}{3}$.
In practical applications, the user provides a predefined set of percentiles, $\mathcal{P}$, to adjust the ISDs.
The choice of $\mathcal{P}$ can slightly affect the resulting survival distributions, each capable of achieving provable distribution calibration; see ablation study \#2 in Appendix~\ref{appendix:ablations} for how $\mathcal{P}$ affects the performance.

\subsection{Extension to censorship}
It is challenging to incorporate censored instances into the analysis as we do not observe their true event times, $e_i$, which means we cannot directly apply conformity score in~\eqref{eq:score_uncensored} and the subsequent conformal steps.
Instead, we only observe the censoring times, which serve as lower bounds of the event times.

Given the monotonic decreasing property of the ISD curves, the iPOT value for a censored subject, \ie $\hat{S}_\Model(t_i \mid \Bfx{i}) = \hat{S}_\Model(c_i \mid \Bfx{i})$, now serves as the upper bound of $\hat{S}_\Model(e_i \mid \Bfx{i})$. Therefore, given the prior knowledge that $\hat{S}_\Model (e_i \mid \Bfx{i}) \sim \mathcal{U}_{[0, 1]}$, the observation of the censoring time updates the possible range of this distribution. Given that $\hat{S}_\Model(e_i \mid \Bfx{i})$ must be less than or equal to $\hat{S}_\Model(c_i \mid \Bfx{i})$, the updated posterior distribution follows $\hat{S}_\Model (e_i \mid \Bfx{i}) \sim \mathcal{U}_{[0, \hat{S}_\Model (c_i \mid \Bfx{i})]}$.


Following the calibration calculation in~\cite{haider2020effective}, where censored patients are evenly "blurred" across subsequent bins of $\hat{S} (c_i \mid \Bfx{i})$, 
our approach uses the above posterior distribution to uniformly draw $R$ potential conformity scores for a censored subject, for some constant $R \in \mathbbm{Z}^+$. 
Specifically, for a censored subject, we calculate the conformity scores as:
\begin{align*}
    \boldsymbol{\gamma}_{i, \Model} \ = \ \hat{S}_\Model(c_i \mid x_i) \cdot \boldsymbol{u}_{R}, \quad \text{where} \quad  \boldsymbol{u}_{R} \ = \ \left[ \, 0 / R \, , \, 1/ R \, ,  \ldots, R/ R \, \right].
\end{align*}
Here, $\boldsymbol{u}_{R}$ is a pseudo-uniform vector to mimic the uniform sampling operation, 
significantly reducing computational overhead compared to actual uniform distribution sampling.
For uncensored subjects, we also need to apply a similar sampling strategy to maintain a balanced censoring rate within the conformal set. 
Because the exact iPOT value is known and deterministic for uncensored subjects, sampling involves directly drawing from a degenerate distribution centered at $\hat{S}_\Model (e_i \mid \Bfx{i})$ -- \ie just drawing $\hat{S}_\Model (e_i \mid \Bfx{i})$ $R$ times.
The pseudo-code for implementing the \CSDiPOT process with censoring is outlined in Algorithm~\ref{alg:csd_ipot} in Appendix~\ref{appendix:alg}.

Note that the primary computational demand of this method stems from the optional interpolation and extrapolation of the piecewise constant ISD predictions. 
Calculating the conformity scores and estimating their percentiles incur negligible costs in terms of both time and space, once the right-continuous survival distributions are established. We provide computational analysis in Appendix~\ref{appendix:computation}.

\subsection{Theoretical analysis}
\label{sec:method_theory}
Here we discuss the theoretical properties of \texttt{CiPOT}. Unlike \texttt{CSD}~\cite{qi2024conformalized}, which adjusts the ISD curves horizontally (changing the times, for a fixed percentile), our refined version scales the ISD curves vertically. This vertical adjustment leads to several advantageous properties. In particular, we highlight why our method is expected to yield superior performance in terms of marginal and conditional calibration compared to \texttt{CSD} \cite{qi2024conformalized}. 
Table~\ref{tab:alg_comp} summarizes the properties of the two methods.

\begin{table}[ht]
\centering
\caption{Properties of \texttt{CSD} and \CSDiPOT. Note that the calibration guarantees refer to asymptotic calibration guarantees. $^\dagger$See Appendix~\ref{appendix:computation}.}
\label{tab:alg_comp}
\setlength\tabcolsep{3.9pt}
\begin{tabular}{lcccccc}
\toprule
Methods  & \begin{tabular}[c]{@{}c@{}}Marginal\\calibration\\ guarantee\end{tabular}   & \begin{tabular}[c]{@{}c@{}}Conditional\\ calibration\\ guarantee\end{tabular} & Monotonic & \begin{tabular}[c]{@{}c@{}}Harrell's\\discrimination\\guarantee\end{tabular} & \begin{tabular}[c]{@{}c@{}}Antolini's\\discrimination\\guarantee\end{tabular} & \begin{tabular}[c]{@{}c@{}}Space\\ complexity$^\dagger$\end{tabular}\\   \midrule
\texttt{CSD}~\cite{qi2024conformalized}    & $\xmark$  & $\xmark$ & $\xmark$      & $\checkmark$       & $\xmark$      & $O(N\cdot|\mathcal{P}|\cdot R)$              \\
\CSDiPOT   & $\checkmark$   & $\checkmark$   & $\checkmark$        & $\xmark$  & $\checkmark$  & $O(N\cdot R)$                \\
\bottomrule
\end{tabular}
\end{table}

\paragraph{Calibration} 
\CSDiPOT differs from \texttt{CSD} in two major ways: \CSDiPOT (i)~essentially samples the event time from $\hat{S}_\Model ( t \mid t> c_i, \ \Bfx{i} )$ for a censored subject, and (ii)~subsequently converts these times into corresponding survival probability values on the curve.

The first difference contrasts with the \texttt{CSD} method, which samples from a conditional KM distribution, $S_{\text{KM}} (\, t \mid t> c_i\,)$, assuming a homoskedastic survival distribution across subjects (where the conditional KM curves have the same shape and the random disturbance of $e_i$ is independent of the features $\Bfx{i}$). 
However, \CSDiPOT differs by considering the heteroskedastic nature of survival distributions $\hat{S}_\Model (t \mid t> c_i, \ \Bfx{i} )$.
For instance, consider the symptom onset times following exposure to the COVID-19 virus. 
Older adults, who may exhibit more variable immune responses, could experience a broader range of onset times compared to younger adults, whose symptom onset times are generally more consistent~\citep{challenger2022modelling}.
By integrating this feature-dependent variability, \CSDiPOT captures the inherent heteroskedasticity of survival distributions and adjusts the survival estimates accordingly, which helps with conditional calibration. 

Furthermore, by transforming the times into the survival probability values on the predicted ISD curves (the second difference), we mitigate the trouble of inaccurate interpolation and extrapolation of the distribution. 
This approach is particularly useful when the conditional distribution terminates at a relatively high probability, where extrapolating beyond the observed range is problematic due to the lack of data for estimating the tail behavior.
Different extrapolation methods, whether parametric or spline-based, can yield widely varying behaviors in the tails of the distribution, potentially leading to significant inaccuracies in survival estimates.
However, by converting event times into survival percentiles, \CSDiPOT circumvents these issues.
This method capitalizes on the probability integral transform~\cite{angus1994probability}, which ensures that regardless of the specific tail behavior of a survival function, its inverse probability values will follow a uniform distribution. 

The next results state that the output of our method has asymptotic marginal calibration, with necessary assumptions (exchangeability, conditional independent censoring, and continuity). 
We also prove the asymptotic conditional calibrated guarantee for \texttt{CiPOT}. The proofs of these two results are inspired by the standard conformal prediction literature~\citep{romano2019conformalized, izbicki2020flexible}, with adequate modifications to accommodate our method. We refer the reader to Appendix~\ref{appendix:calibration} for the complete proof.


\begin{theorem}[Asymptotic marginal calibration]
\label{theorem:margin_cal}
    If the instances in $\Data$ are exchangeable, and follow the conditional independent censoring assumption, then for a new instance $n+1$, $\forall \ \rho_1 < \rho_2 \in [0, 1]$, 
    \begin{equation*}
        \rho_2 - \rho_1 \quad \leq \quad \prob\left(\Tilde{S}_\mathcal{M} (t_{n+1} \mid \Bfx{n+1}) \in [\rho_1, \rho_2]\right) \quad \leq \quad \rho_2 - \rho_1 + \frac{1}{|\Data^{\textnormal{con}}|+1} .
    \end{equation*}
\end{theorem}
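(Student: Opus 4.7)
The plan is to recast the event $\{\Tilde{S}_\Model(t_{n+1} \mid \Bfx{n+1}) \in [\rho_1, \rho_2]\}$ as a statement about the rank of the test conformity score within the (augmented) conformal set, and then invoke exchangeability to inherit the uniform rank distribution that underlies every split-conformal guarantee. The overall structure mirrors the standard proof of \citet{romano2019conformalized}, with one extra layer to absorb the censored pseudo-scores.

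The first step is to unwind the construction in \eqref{eq:adjustment}--\eqref{eq:reverse_ISD}. Writing $\gamma_{n+1,\Model} := \hat{S}_\Model(t_{n+1} \mid \Bfx{n+1})$ and using the monotonicity and right-continuity of $\hat{S}_\Model(\cdot \mid \Bfx{n+1})$, the event $\Tilde{S}_\Model(t_{n+1} \mid \Bfx{n+1}) \leq \rho$ is equivalent to $\gamma_{n+1,\Model} \leq \Pct(\rho;\,\Gamma_\Model)$; under the no-ties continuity assumption this gives
$$\prob\bigl(\Tilde{S}_\Model(t_{n+1} \mid \Bfx{n+1}) \in [\rho_1,\rho_2]\bigr) \;=\; \prob\bigl(\Pct(\rho_1;\,\Gamma_\Model) \leq \gamma_{n+1,\Model} \leq \Pct(\rho_2;\,\Gamma_\Model)\bigr),$$
so the task reduces to locating $\gamma_{n+1,\Model}$ relative to two empirical percentiles of $\Gamma_\Model$. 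For the uncensored case this is the textbook split-conformal argument: exchangeability of the tuples $(\Bfx{i}, e_i)_{i \in \iset^{\text{con}} \cup \{n+1\}}$ propagates to exchangeability of the scores, because each $\gamma_{i,\Model}$ is a deterministic function of its tuple through the model fitted on the disjoint $\Data^{\text{train}}$; therefore the rank of $\gamma_{n+1,\Model}$ among the $|\Data^{\text{con}}|+1$ values is uniform on $\{1,\dots,|\Data^{\text{con}}|+1\}$. Since $\Pct(\rho;\,\Gamma_\Model)$ is (by its definition in Section~\ref{sec:method_step2}) the $\lceil \rho(|\Data^{\text{con}}|+1)\rceil$-th order statistic of $\Gamma_\Model$, differencing the one-sided rank-uniformity statements at $\rho_1$ and $\rho_2$ and using $x \leq \lceil x \rceil < x+1$ delivers an envelope of width at most $1/(|\Data^{\text{con}}|+1)$ around $\rho_2 - \rho_1$.

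The hard part is extending this to censored conformal instances, whose unknown event times block the direct definition of $\gamma_{i,\Model}$. My plan is to use the conditional-independent-censoring hypothesis $e_i \perp c_i \mid \Bfx{i}$, combined with the asymptotic consistency of $\hat{S}_\Model$ on the conformal split, to argue that, conditional on $(c_i,\Bfx{i},\delta_i = 0)$, the latent score $\hat{S}_\Model(e_i \mid \Bfx{i})$ is asymptotically uniform on $[0,\hat{S}_\Model(c_i \mid \Bfx{i})]$, matching the posterior already identified when motivating the pseudo-score construction. Replacing each censored subject by the $R+1$ pseudo-scores $\hat{S}_\Model(c_i \mid \Bfx{i})\cdot \boldsymbol{u}_R$ then becomes a deterministic quadrature proxy for that draw, so that as $R \to \infty$ the augmented conformal score set is exchangeable with the latent test score and the uncensored rank argument applies verbatim after adjusting the effective sample size. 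Carefully tracking this discretization so as to preserve the $1/(|\Data^{\text{con}}|+1)$ slack in the final bound is where I expect the technical work to concentrate; everything else is routine order-statistic bookkeeping and inversion.
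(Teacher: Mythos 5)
Your proposal follows essentially the same route as the paper's proof: recasting the event as the statement $\hat{S}_\Model(t_{n+1}\mid\Bfx{n+1}) \in [\Pct(\rho_1;\Gamma_\Model), \Pct(\rho_2;\Gamma_\Model)]$, invoking exchangeability to get a uniform rank among the $|\Data^{\text{con}}|+1$ positions for the uncensored case, and then using conditional independent censoring to show the latent score of a censored subject is uniform on $[0, \hat{S}_\Model(c_i\mid\Bfx{i})]$ so that the pseudo-uniform sampling asymptotically recovers the uncensored bound. The paper's argument is no more rigorous than yours on the censored step (it likewise appeals to the probability integral transform and asserts asymptotic convergence without tracking the $R$-discretization error), so your sketch matches both the structure and the level of detail of the published proof.
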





\begin{theorem}[Asymptotic conditional calibration] 
\label{theorem:conditional_cal}
In addition to the assumptions in Theorem~\ref{theorem:margin_cal}, if (i)~the non-processed prediction $\hat{S}_\Model(t\mid \Bfx{i})$ is a consistent survival estimator; (ii)~its inverse function is differentiable; and (iii)~the 1st derivation of the inverse function is bounded by a constant, then the \CSDiPOT process will achieve asymptotic conditional distribution calibration.
\end{theorem}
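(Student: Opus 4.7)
}
The plan is to reduce conditional calibration to the statement that $\Tilde{S}_\Model(e_{n+1}\mid \Bfx{n+1})$ converges in distribution, conditional on $\Bfx{n+1}=x$, to $\mathcal{U}_{[0,1]}$. Because $\Tilde{S}_\Model^{-1}(\rho\mid x)=\hat{S}_\Model^{-1}(\Pct(\rho;\Gamma_\Model)\mid x)$, this boils down to analysing two limits separately and then combining them: the limit of the inverse estimator $\hat{S}_\Model^{-1}(\cdot\mid x)$, and the limit of the empirical percentile map $\rho\mapsto\Pct(\rho;\Gamma_\Model)$. The workhorse throughout is the probability integral transform: if $S(\cdot\mid x)$ is continuous, then $S(e\mid x)\sim\mathcal{U}_{[0,1]}$ conditional on $\Bfx{}=x$, and the same identity will be invoked once the estimator has been shown to approach the oracle.

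First, I would fix $x$ and use assumption~(i), consistency of $\hat{S}_\Model$, to obtain pointwise (and, after invoking monotonicity plus continuity, uniform) convergence $\hat{S}_\Model(\cdot\mid x)\to S(\cdot\mid x)$ as $|\Data^{\text{train}}|\to\infty$. Differentiability of the inverse~(ii) together with the bounded-derivative condition~(iii) then upgrades this to convergence $\hat{S}_\Model^{-1}(\cdot\mid x)\to S^{-1}(\cdot\mid x)$ uniformly on any compact subinterval of $(0,1)$: the bounded derivative furnishes a Lipschitz constant that lets me transfer $\sup|\hat{S}_\Model-S|$ into $\sup|\hat{S}_\Model^{-1}-S^{-1}|$ via the standard inverse-function argument.

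Second, I would show that the empirical distribution of the conformity-score set $\Gamma_\Model$ converges to $\mathcal{U}_{[0,1]}$, so that $\Pct(\rho;\Gamma_\Model)\to\rho$ uniformly in $\rho$. For uncensored $i$, the score $\gamma_{i,\Model}=\hat{S}_\Model(e_i\mid\Bfx{i})$ tends in distribution to $S(e_i\mid\Bfx{i})\sim\mathcal{U}_{[0,1]}$ by the PIT and consistency. For censored $i$, the pseudo-uniform vector $\boldsymbol{\gamma}_{i,\Model}=\hat{S}_\Model(c_i\mid\Bfx{i})\cdot\boldsymbol{u}_R$ has empirical mass that approximates $\mathcal{U}_{[0,\hat{S}_\Model(c_i\mid\Bfx{i})]}$; under the conditional independent censoring assumption, mixing over the population yields the same $\mathcal{U}_{[0,1]}$ limit as in the uncensored case (this is the same ``blurring'' identity that underlies the derivation of~\eqref{eq:cal_censored}). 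A Glivenko-Cantelli-type argument, applied conditionally on the training set, then gives uniform convergence of the empirical CDF of $\Gamma_\Model$ and hence of $\Pct(\rho;\Gamma_\Model)$ to the identity map.

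Finally, I would combine the two limits. Since $\Tilde{S}_\Model^{-1}(\rho\mid x)=\hat{S}_\Model^{-1}(\Pct(\rho;\Gamma_\Model)\mid x)$, substituting the two convergences and using the Lipschitz bound from~(iii) to absorb the perturbation inside $\hat{S}_\Model^{-1}$ gives $\Tilde{S}_\Model^{-1}(\rho\mid x)\to S^{-1}(\rho\mid x)$. Inverting and invoking the PIT one last time yields $\Tilde{S}_\Model(e_{n+1}\mid x)\to S(e_{n+1}\mid x)\sim\mathcal{U}_{[0,1]}$ conditional on $\Bfx{n+1}=x$, which is exactly asymptotic conditional distribution calibration. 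The main obstacle I anticipate is the coupled limit in the last step: controlling $\hat{S}_\Model^{-1}(\Pct(\rho;\Gamma_\Model)\mid x)$ requires that the two convergences play nicely together uniformly in $\rho$, particularly near $\rho=0$ and $\rho=1$ where inverse survival functions typically blow up. Assumption~(iii) is precisely what lets me avoid that blow-up, but the careful bookkeeping of rates (and the treatment of censored subjects, whose scores are not i.i.d.\ draws from a fixed distribution but depend on $\hat{S}_\Model$) will be the most delicate part of the argument.
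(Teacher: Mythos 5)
Your proposal follows essentially the same route as the paper's proof: the same decomposition into (a) consistency giving $\hat{S}_\Model \to S$ and (b) convergence of the empirical percentile map $\Pct(\rho;\Gamma_\Model)\to\rho$ (which the paper delegates to Lemmas 5.2--5.3 of Izbicki et al.\ and you sketch directly via PIT and a Glivenko--Cantelli argument), combined through the bounded-derivative condition~(iii) to transfer the inverse-function error back to the survival function, with censored subjects handled by the same posterior-uniform blurring argument under conditional independent censoring. The proposal is correct at the paper's own level of rigor, and its explicit attention to the endpoint behavior near $\rho=0,1$ and the non-i.i.d.\ nature of the censored scores is, if anything, slightly more careful than the published argument.
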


\paragraph{Monotonicity} Unlike \texttt{CSD}, \CSDiPOT does not face any non-monotonic issues for the post-processed curves as long as the original ISD predictions are monotonic; see proof in Appendix~\ref{appendix:monotonic}.

\begin{theorem}
\label{theorem:monotonic}
    \CSDiPOT process preserves the monotonic decreasing property of the ISD. 
\end{theorem}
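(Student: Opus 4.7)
The plan is to trace monotonicity through the three operations that define $\Tilde{S}_\Model$: the empirical percentile lookup $\Pct(\rho;\Gamma_\Model)$, the (generalized) inverse $\hat{S}_\Model^{-1}$ of the base ISD, and the final conversion in equation~\eqref{eq:reverse_ISD} from the inverse function back to a survival function. The core intuition is that each of the first two operations is monotone (one non-decreasing in $\rho$, the other non-increasing in its argument), so their composition is non-increasing in $\rho$; then the last step inverts direction once more and hands back a non-increasing function of $t$.

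First I would record two monotonicity facts. (i) $\Pct(\cdot\,;\Gamma_\Model)$ is non-decreasing in $\rho$, directly from the definition of the empirical quantile of a fixed finite set $\Gamma_\Model$. (ii) Since $\hat{S}_\Model(\cdot \mid \Bfx{n+1})$ is right-continuous and non-increasing (the standing assumption in Section~\ref{sec:method_step1}), its generalized inverse $\hat{S}_\Model^{-1}(p\mid \Bfx{n+1})=\inf\{t:\hat{S}_\Model(t\mid \Bfx{n+1})\le p\}$ is non-increasing in $p$; this is the usual superset argument, because $p_1<p_2$ implies $\{t:\hat{S}_\Model(t\mid \Bfx{n+1})\le p_1\}\subseteq \{t:\hat{S}_\Model(t\mid \Bfx{n+1})\le p_2\}$ and the infimum of a superset is no larger. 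Composing (i) into (ii) via equation~\eqref{eq:adjustment} gives that $\Tilde{S}_\Model^{-1}(\rho \mid \Bfx{n+1})$ is non-increasing in $\rho$.

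Then I would finish by applying the same superset argument to equation~\eqref{eq:reverse_ISD}. For $t_1<t_2$, the set $\{\rho:\Tilde{S}_\Model^{-1}(\rho\mid \Bfx{n+1})\le t_1\}$ is contained in $\{\rho:\Tilde{S}_\Model^{-1}(\rho\mid \Bfx{n+1})\le t_2\}$, so taking infima yields $\Tilde{S}_\Model(t_1\mid \Bfx{n+1})\ge \Tilde{S}_\Model(t_2\mid \Bfx{n+1})$, which is exactly monotone decreasingness.

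I do not expect a real obstacle here; the only subtlety worth being careful about is that $\hat{S}_\Model$ need not be strictly monotone or continuous, so I would be explicit that $\hat{S}_\Model^{-1}$ is the generalized (quantile) inverse and verify that the two infimum manipulations go through without requiring strict inequalities. The proof therefore reduces to bookkeeping of monotone compositions rather than any nontrivial analytical step.
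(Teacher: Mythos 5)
Your proposal is correct and follows essentially the same route as the paper's proof in Appendix~\ref{appendix:monotonic}: establish monotonicity of the percentile operation, of the (generalized) inverse of the base ISD, and of the final conversion in~\eqref{eq:reverse_ISD}, then compose. If anything, you are more careful than the paper about the directions (non-decreasing in $\rho$ for $\Pct$, non-increasing for $\hat{S}_\Model^{-1}$, with the superset argument for the generalized inverse), which tightens a step the paper states only loosely.
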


\texttt{CSD}, built on the Conformalized Quantile Regression (CQR) framework, struggles with the common issue of non-monotonic quantile curves (refer to Appendix D.2 in~\cite{qi2024conformalized} and our Appendix~\ref{appendix:monotonic}). 
While some methods, like the one proposed by \citet{chernozhukov2010quantile}, address this issue by rearranging quantiles, they can be computationally intensive and risk (slightly) recalibrating and distorting discrimination in the rearranged curves. 
By inherently maintaining monotonicity, \CSDiPOT not only enhances computational efficiency but also avoids these risks. 

\paragraph{Discrimination} 
\citet{qi2024conformalized} demonstrated that \texttt{CSD} theoretically guarantees the preservation of the original model's discrimination performance in terms of Harrell's concordance index (C-index)~\cite{harrell1984regression}. 
However, \CSDiPOT lacks this property; see Appendix~\ref{appendix:discrimination} for details.

As \CSDiPOT vertically scales the ISD curves, it preserves the relative order of survival probabilities at any single time point.
This preservation means that the discrimination power, measured by the area under the receiver operating characteristic (AUROC) at any time, remains intact (Theorem~\ref{theorem:auroc}). 
Furthermore, Antolini's time-dependent C-index ($C^{td}$)~\cite{antolini2005time}, which represents a weighted average AUROC across all time points, is also guaranteed to be maintained by our method (Lemma~\ref{lemma:c_index_td}).
As a comparison, \texttt{CSD} does not have such a guarantee for neither AUROC nor $C^{td}$.

\section{Evaluation metrics}
\label{sec:metrics}
We measure discrimination using Harrell's C-index \cite{harrell1984regression}, rather than Antolini's $C^{td}$~\cite{antolini2005time}, as Lemma~\ref{lemma:c_index_td} already established that $C^{td}$ is not changed by \texttt{CiPOT}. We aim to assess our performance using a measure that represents a relative weakness of our method. 

As to the calibration metrics, the marginal calibration score evaluated on the test set $\Data^{\text{test}}$ is calculated as~\cite{haider2020effective, qi2024conformalized}: \\[-1ex]
\begin{equation}
\label{eq:cal_margin}
    \text{Cal}_{\text{margin}} (\hat{S}; \mathcal{P}) \ =\ 
    \frac{1}{|\mathcal{P}|} \sum_{\rho \in \mathcal{P}} \left( \,
    \prob \left( \hat{S} (e_i \mid \Bfx{i} ) \in [0, \rho]\, , \,  i \in \iset^{\text{test}}  \right)
    -\ \rho  \, \right)^2,
\end{equation}
where $\prob ( \hat{S} (e_i \mid \Bfx{i} ) \in [0, \rho]\, , \,  i \in \iset^{\text{test}}  )$ is calculated by combining~\eqref{eq:cal_uncensored} and~\eqref{eq:cal_censored}; see~\eqref{eq:cal_formal} in Appendix~\ref{appendix:calibration_intro}.
Based on the marginal calibration formulation, a natural way for evaluating the conditional calibration could be: (i)~heuristically define a finite feature space set $\{\mathbb{S}_1, \mathbb{S}_2, \ldots\}$ -- \eg $\mathbb{S}_1$ is the set of divorced elder males, $\mathbb{S}_2$ is females with 2 children, etc.;
and (ii)~calculate the worst calibration score on all the predefined sub-spaces.
This is similar to fairness settings, researchers normally select age, sex, or race as the sensitive attributes to form the feature space.
However, this metric does not scale to higher-dimensional settings because it is challenging to create the feature space set that contains all possible combinations of the features.

Motivated by~\citet{romano2020classification}, we proposed a worst-slab distribution calibration, $\text{Cal}_{\text{ws}}$. 
We start by partition the testing set into a 25\% exploring set $\Data^{\text{explore}}$ and a 75\% exploiting set $\Data^{\text{exploit}}$. The exploring set is then used to find the worst calibrated sub-region in the feature space $\mathbb{R}^{d}$:
\begin{gather*}
    \mathbb{S}_{\boldsymbol{v}, a, b} \ = \ \{ \, \Bfx{i} \in \mathbb{R}^{d}: a \leq \boldsymbol{v}^\intercal \Bfx{i}\leq b \, \} \quad \text{and} \quad \prob\left( \, \Bfx{i} \in \mathbb{S}_{\boldsymbol{v}, a, b}, i \in \iset^{\text{explore}} \, \right) \ \geq \ \kappa, \\
    \text{where} \quad 
     \boldsymbol{v}, a, b = \argmax_{\boldsymbol{v}\in \mathbb{R}^d, a<b\in \mathbb{R}} 
     \frac{1}{|\mathcal{P}|} \sum_{\rho \in \mathcal{P}} \left( 
     \prob \left( \hat{S} (e_i \mid \Bfx{i} ) \in [0, \rho]\, , \,  i \in \iset^{\text{explore}} \, , \, \Bfx{i} \in \mathbb{S}_{\boldsymbol{v}, a, b}) \right)
      - \rho \right)^2.
\end{gather*}
In practice, the parameters $\boldsymbol{v}$, $a$, and $b$ are chosen adversarially by sampling i.i.d. vectors $\boldsymbol{v}$ on the unit sphere in $\mathbb{R}^d$ then finding the $\argmax$ using a grid search on the exploring set. 
$\kappa$ is a predefined threshold to ensure that we only consider slabs that contain at least $\kappa\%$ of the instances (so that we do not encounter a pregnant-man situation). Given this slab, we can calculate the conditional calibration score on the evaluation set for this slab:
\begin{align}
\label{eq:cal_ws}
    \text{Cal}_{\text{ws}}(\hat{S}; \mathcal{P}, \mathbb{S}_{\boldsymbol{v}, a, b})
    \ = \ \frac{1}{|\mathcal{P}|} \sum_{\rho \in \mathcal{P}} \left( \,
    \prob \left( \hat{S} (e_i \mid \Bfx{i} ) \in [0, \rho]\, , \,  i \in \iset^{\text{exploit}} \, , \, \Bfx{i} \in \mathbb{S}_{\boldsymbol{v}, a, b}) \right)
    \ -\ \rho \, \right)^2.
\end{align}
Besides the above metrics, we also evaluate using other commonly used metrics: integrated Brier score (IBS)~\cite{graf1999assessment}, 
and mean absolute error with pseudo-observation (MAE-PO)~\citep{qi2023an}; see Appendix~\ref{appendix:metric}.

\section{Experiments}
\label{sec:exp}
The implementation of \CSDiPOT method, worst-slab distribution calibration score, and the code to reproduce all experiments in this section are available at \url{https://github.com/shi-ang/MakeSurvivalCalibratedAgain}.

\subsection{Experimental setup}

\paragraph{Datasets} 
We use 15 datasets to test the effectiveness of our method.
Table~\ref{tab:data_comp} in Appendix~\ref{appendix:datasets} summarizes the dataset statistics, and Appendix~\ref{appendix:datasets} also contains details of preprocessing steps, KM curves, and histograms of event/censor times. 
Compared with~\cite{qi2024conformalized}, we added datasets with high censoring rates (>60\%) and ones whose KM ends with high probabilities (>50\%).

\vspace{-0.1in}

\paragraph{Baselines} 
We compared 7 survival algorithms: 
\emph{AFT}~\cite{stute1993consistent},
\emph{GB}~\cite{hothorn2006survival},
\emph{DeepSurv}~\cite{katzman2018deepsurv}, 
\emph{N-MTLR}~\cite{fotso2018deep},
\emph{DeepHit}~\cite{lee2018deephit}, 
\emph{CoxTime}~\cite{kvamme2019time},
and \emph{CQRNN}~\cite{pearce2022censored}.
We also include KM as a benchmark (empirical lower bound) for marginal calibration, which is known to achieve perfect marginal calibration~\cite{haider2020effective, qi2024conformalized}.
Appendix~\ref{appendix:baselines} describes the implementation details and hyperparameter settings.

\vspace{-0.1in}

\paragraph{Procedure} 
We divided the data into a training set (90\%) and a testing set (10\%) using a stratified split to balance time $t_i$ and censor indicator $\delta_i$.
We also reserved a balanced 10\% validation subset from the training data for hyperparameter tuning and early stopping. 
This procedure was replicated across 10 random splits for each dataset.


\subsection{Experimental results}
Due to space constraints, the main text presents partial results for datasets with high censoring rates and high KM ending probabilities (\texttt{HFCR}, \texttt{FLCHAIN}, \texttt{Employee}, \texttt{MIMIC-IV}).
Notably, \emph{CQRNN} did not converge on the \texttt{MIMIC-IV} dataset. 
Thus, we conducted a total of 104 method comparisons ($15 \ \text{datasets} \times 7 \ \text{baselines} - 1$). 
Table 2 offers a detailed performance summary of \CSDiPOT versus baselines and \texttt{CSD} method across these comparisons. 
Appendix~\ref{appendix:results} presents the complete results.

\vspace{-0.1in}

\begin{figure}[t]
    \centering
    \vspace{-0.1in}
    \includegraphics[width=\textwidth]{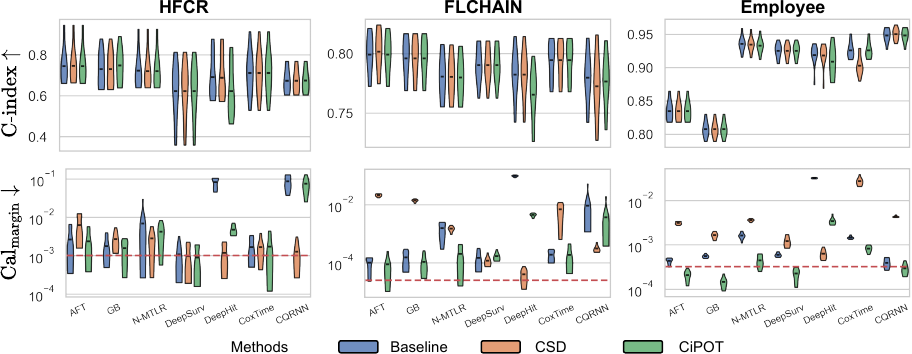}
    \vspace{-0.1in}
    \caption{Violin plots of C-index and $\text{Cal}_{\text{margin}}$ performance of our method (\texttt{CiPOT}) and benchmarks. 
    The shape of each violin plot represents the probability density of the performance scores, with the black bar inside the violin indicating the mean performance. 
    The red dashed lines in the lower panels represent the mean calibration performance for KM, serving as an empirical lower limit.} 
    \label{fig:cindex_dcal_main}
\end{figure}

\paragraph{Discrimination}
The upper panels of Figure~\ref{fig:cindex_dcal_main} indicate minimal differences in the C-index among the methods, with notable exceptions primarily involving \emph{DeepHit}. 
Specifically, \CSDiPOT matched the baseline C-index in 75 instances and outperformed it in 7 out of 104 comparisons. 
This suggests that \CSDiPOT maintains discriminative performance in approximately 79\% of the cases.

\begin{table}[t]
\centering
\caption{Performance summary of \texttt{CiPOT}. Values in parentheses indicate statistically significant differences ($p<0.05$ using a two-sided $t$-test). A tie means the first 3 significant digits are the same. $^\ddagger$The total number of comparisons for $\text{Cal}_{\text{ws}}$ is 69, while it is 104 for the other metrics.}
\label{tab:summary}
\begin{tabular}{ccccccc}
\toprule
                                               &      & C-index & $\text{Cal}_{\text{margin}}$ & $\text{Cal}_{\text{ws}} \, ^\ddagger $ & IBS & MAE-PO \\ \midrule
\multirow{3}{*}{Compare with Baselines}         & Win  & 7 (0)   & \textbf{95 (50)}   & \textbf{64 (29)}   & \textbf{63 (14)}   & \textbf{54 (8)}    \\
                                               & Lose & 22 (0)  & 9 (1)     & 5 (1)     & 23 (0)    & 17 (0)    \\
                                               & Tie  & \textbf{75}      & 0         & 0         & 18        & 33    \\ \midrule
\multirow{3}{*}{Compare with \texttt{CSD}}     & Win  & 11 (1)  & \textbf{68 (37)}   & \textbf{51 (26)}   & \textbf{53 (15)}   & \textbf{39 (8)}    \\
                                               & Lose & 26 (0)  & 36 (20)   & 18 (7)    & 35 (11)   & 39 (4)    \\
                                               & Tie  & \textbf{67}      & 0         & 0         & 16        & 26    \\
\bottomrule
\end{tabular}
\end{table}

\vspace{-0.1in}

\paragraph{Marginal Calibration}
The lower panels of Figure~\ref{fig:cindex_dcal_main} show significant improvements in marginal calibration with \texttt{CiPOT}.
It often achieved near-optimal performance, as marked by the red dashed lines.
Table~\ref{tab:summary} also shows that \CSDiPOT provided better marginal calibration than the baselines in 95 (and significantly in 50) out of 104 comparisons (91\%).

\texttt{CiPOT}'s marginal calibration was better than \texttt{CSD} most of the time (68/104, 65\%). 
The cases where \texttt{CSD} performs better typically involve models like \emph{DeepHit} or \emph{CQRNN}.
This shows that our approach often does not perform as well as \texttt{CSD} when the original model is heavily miscalibrated, which suggests a minor limitation of our method.
Appendix~\ref{appendix:miscalibrated_models} discusses why our method is sub-optimal for these models.

\begin{figure}[t]
    \centering
    \vspace{-0.1in}
    \includegraphics[width=\textwidth]{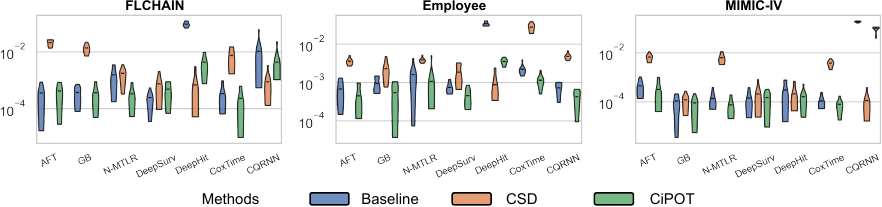}
    \vspace{-0.1in}
    \caption{Violin plots of $\text{Cal}_{\text{ws}}$ performance, where the shape and black bars represent the density and mean. Smaller values represent better performance. Note \emph{CQRNN} did not converge on \texttt{MIMIC-IV}.}
    \label{fig:dcal_ws_main}
\end{figure}

\vspace{-0.1in}

\paragraph{Conditional Calibration} 
For \textit{small} datasets (sample size $< 1000$), in some random split, we can find a worst-slab region $\mathbb{S}_{\boldsymbol{v}, a, b}$ on the exploring set with $\mathbb{P} (\Bfx{i} \in \mathbb{S}_{\boldsymbol{v}, a, b}, \, i\in \iset^{\text{explore}}) \geq 33\% $ but still no subjects in this region in the exploiting set. 
This is probably because we only ensure that the times and censored indicators are balanced during the partition, however, the features can still be unbalanced. 
Therefore, we only evaluated conditional calibration on the 10 larger datasets, resulting in 69 comparisons.
Among them, \CSDiPOT improved conditional calibration in 64 cases (93\%) compared to baselines and in 51 cases (74\%) compared to \texttt{CSD}.

\vspace{-0.1in}

\paragraph{Case Study}
We provide 4 case studies in Figure~\ref{fig:case_study} in Appendix~\ref{appendix:results}, where \texttt{CSD} leads to significant miscalibration within certain subgroups, and \CSDiPOT can effectively generate more conditional calibrated predictions in those groups. These examples show that \texttt{CSD}’s miscalibration is always located at the low-probability regions, which corresponds to our statement (in Section~\ref{sec:method_theory}) that the conditional KM sampling method that \texttt{CSD} used is problematic for the tail of the distribution.

\vspace{-0.1in}

\paragraph{Other Metrics} 
Results in Table~\ref{tab:summary} and Appendix~\ref{appendix:results} 
show that \CSDiPOT also showed improvement in both IBS and MAE-PO, outperforming 63 and 54 out of 104 comparisons, respectively.

\vspace{-0.1in}

\paragraph{Computational Analysis}
Appendix~\ref{appendix:computation} shows the comprehensive results and experimental setup. 
In summary, \CSDiPOT significantly reduces the space consumption and running time.

\vspace{-0.1in}

\paragraph{Ablation Studies}
We conducted two ablation studies to assess (i)~the impact of the repetitions value ($R$) and (ii)~the impact of predefined percentiles ($\mathcal{P}$) on the method; see Appendix~\ref{appendix:ablations}.

\section{Conclusions}
\label{sec:conclusion}
Discrimination and marginal calibration are two fundamental yet distinct elements in survival analysis. 
While marginal calibration is feasible, it overlooks accuracy across different groups distinguished by specific features.
In this paper, we emphasize the importance of conditional calibration for practical applications and propose a principled metric for this purpose. 
By generating conditionally calibrated Individual Survival Distributions (ISDs), we can better communicate the uncertainty in survival analysis models, enhancing their reliability, fairness, and real-world applicability.


We therefore define the Conformalized survival distribution using Individual Survival Probability at Observed Time (\texttt{CiPOT}) -- a post-processing framework that enhances both marginal and conditional calibration without compromising discrimination. It addresses common issues in prior methods, particularly under high censoring rates or when the Kaplan-Meier curve terminates at a high probability. Moreover, this post-processing adjusts the ISDs by adapting the heteroskedasticity of the distribution, leading to asymptotic conditional calibration.
Our extensive empirical tests confirm that \CSDiPOT significantly improves both marginal and conditional performance without diminishing the models' discriminative power.

\begin{ack}
This research received support from the Natural Science and Engineering Research Council of Canada (NSERC), the Canadian Institute for Advanced Research (CIFAR), and the Alberta Machine Intelligence Institute (Amii). The authors extend their gratitude to the anonymous reviewers for their insightful feedback and valuable suggestions.
\end{ack}

\bibliography{main}
\bibliographystyle{unsrtnat}

\newpage 
\appendix

\section{Calibration in Survival Analysis}
\label{appendix:calibration_intro}

Distribution calibration (or simply ``calibration'') examines the calibration ability across the entire range of ISD predictions~\cite{haider2020effective}.
This appendix provides more details about this metric.

\subsection{Why the survival probability at event times should be uniform?}

The probability integral transform~\citep{angus1994probability} states: 
if the conditional cumulative distribution function (CDF) $F(t \mid \Bfx{i})$ is legitimate and continuous in $t$ for each fixed value of $\Bfx{i}$,
then $F(t \mid \Bfx{i})$ has a standard uniform distribution, $\mathcal{U}_{[0,1]}$. 
Since $S (t \mid \Bfx{i}) = 1 - F(t \mid \Bfx{i})$, then $S (t \mid \Bfx{i}) \sim \mathcal{U}_{[0,1]}$. 

\begin{figure}[ht]
    \centering
    \includegraphics[width=0.75\textwidth]{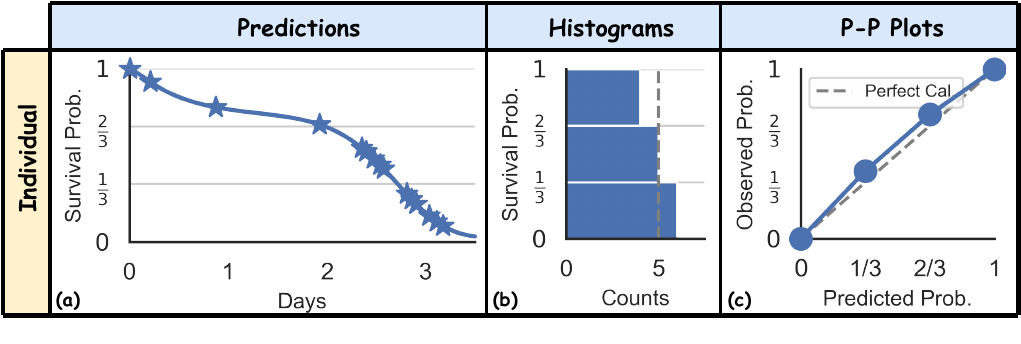}
    \caption{Individual distribution calibration, illustrated using 3 bins separated at $\frac{1}{3}$ and $\frac{2}{3}$. The curve is an oracle's true ISD $S(t\mid \Bfx{i})$. The stars represent 15 realizations of $t\mid \Bfx{i}$. The vertical coordinates of the star represent the time and the horizontal coordinates of the star are the survival probability at observed time $S(e_i^{m} \mid \Bfx{i})$.}
    \label{fig:individual_calibration}
\end{figure}

\subsection{Handling censorship}

Definition~\ref{def:margin_cal_uncensored} defines the marginal calibration for the uncensored dataset. In this section, we expand the definition to any dataset with censored instances. 
Note that~\citet{haider2020effective} proposed the following method, here we just reformulate their methodology to fit the language in this paper, for a better presentation purpose.

Given an uncensored subject, the probability of its survival probability at event time in a probability interval of $[\rho_1, \rho_2]$ is deterministic, as:
\begin{equation*}
    \prob \left( \, \hat{S} (e_i \mid \Bfx{i} ) \in [\rho_1, \rho_2] \,\middle\vert\,  \delta_i  = 1 \, \right) \  = \ \mathbbm{1} \left[\, \hat{S} (e_i \mid \Bfx{i}) \in [\rho_1, \rho_2], \ \delta_i = 1\,\right].
\end{equation*}

For censored subjects, because we do not know the true event time, so there is no way we can know whether the predicted probability is within the interval or not.  
We can ``blur'' the subject uniformly to the probability intervals after the survival probability at censored time $\hat{S} (c_i \mid \Bfx{i})$~\cite{haider2020effective}. 

\begin{align*}
    \prob & \left( \hat{S} (e_i \mid \Bfx{i}) \in [\rho_1, \rho_2] \,\middle\vert\, \delta_i 
    = 0 \right) 
    = \frac{\prob \left( \hat{S} (e_i \mid \Bfx{i}) \in [\rho_1, \rho_2],  \delta_i 
    = 0 \right)}{\prob (\delta_i = 0)} \\ 
    &= \frac{\prob \left( \hat{S} (e_i \mid \Bfx{i}) \in [\rho_1, \rho_2],  \hat{S}(e_i \mid \Bfx{i}) <  \hat{S}(c_i \mid \Bfx{i}) \right)}{\prob (\hat{S}(e_i \mid \Bfx{i}) <  \hat{S}(c_i \mid \Bfx{i}))} \\
    &= \frac{\prob \left( \hat{S} (e_i \mid \Bfx{i}) \in [\rho_1, \rho_2],  \hat{S}(e_i \mid \Bfx{i}) <  \hat{S}(c_i \mid \Bfx{i}) , \hat{S}(c_i \mid \Bfx{i}) \geq \rho_2 \right)}{\prob (\hat{S}(e_i \mid \Bfx{i}) <  \hat{S}(c_i \mid \Bfx{i}))} \\
    &\quad + \frac{\prob \left( \hat{S} (e_i \mid \Bfx{i}) \in [\rho_1, \rho_2],  \hat{S}(e_i \mid \Bfx{i}) <  \hat{S}(c_i \mid \Bfx{i}) , \hat{S}(c_i \mid \Bfx{i}) \in [\rho_1, \rho_2] \right)}{\prob (\hat{S}(e_i \mid \Bfx{i}) <  \hat{S}(c_i \mid \Bfx{i}))} \\
    &\quad + \frac{\prob \left( \hat{S} (e_i \mid \Bfx{i}) \in [\rho_1, \rho_2],  \hat{S}(e_i \mid \Bfx{i}) <  \hat{S}(c_i \mid \Bfx{i}) , \hat{S}(c_i \mid \Bfx{i}) \leq \rho_1 \right)}{\prob (\hat{S}(e_i \mid \Bfx{i}) <  \hat{S}(c_i \mid \Bfx{i}))}
\end{align*}
\begin{align*}
    &= \frac{\prob \left( \rho_1 \leq \hat{S} (e_i \mid \Bfx{i}) \leq \rho_2 \right) \prob \left(\hat{S}(c_i \mid \Bfx{i} ) \geq \rho_2\right)    }{\SurvPred{t_i}{\Bfx{i}}} \\
    &\quad + \frac{\prob \left( \rho_1 \leq \hat{S} (e_i \mid \Bfx{i}) \leq \hat{S}(c_i \mid \Bfx{i}) \right) \prob \left(\hat{S}(c_i \mid \Bfx{i})  \in [\rho_1, \rho_2]\right) }{\SurvPred{t_i}{\Bfx{i}}} +  \frac{\prob (\varnothing)}{\SurvPred{t_i}{\Bfx{i}}} \\
    &= \frac{(\rho_2 - \rho_1)\mathbbm{1}\left[\SurvPred{t_i}{\Bfx{i}} \geq \rho_2\right] + \left(\SurvPred{t_i}{\Bfx{i}} - \rho_1\right)\mathbbm{1}\left[\SurvPred{t_i}{\Bfx{i}} \in [\rho_1, \rho_2]\right]}{\SurvPred{t_i}{\Bfx{i}}},
\end{align*}
where the decomposition of probability in the second to last equality is because of the conditional independent censoring assumption. 
Therefore, for the entire dataset, considering both uncensored and censored subjects, the probability
\begin{align}
\label{eq:cal_formal}
    \prob \left( \hat{S} (e_i \mid \Bfx{i}) \in [\rho_1, \rho_2], i \in \iset \right)  
    &\ =\ \E_{i \in \iset} \left[\delta_i \cdot \prob \left( \, \hat{S} (e_i \mid \Bfx{i} ) \in [\rho_1, \rho_2]  \,\middle\vert\,  \delta_i \, \right)\right]
\end{align}

Therefore, given the above derivation, we can provide a formal definition for marginal distributional calibration for any survival dataset with censorship.
\begin{definition}[Marginal calibration]
For a survival dataset, a model has perfect marginal calibration iff $\forall \ [\rho_1, \rho_2] \subset [0, 1]$,
\begin{equation*}
    \prob \left( \hat{S} (e_i \mid \Bfx{i} ) \in [\rho_1, \rho_2]\, , \, i \in \iset  \right) \ =\  \rho_2 - \rho_1.
\end{equation*}
where the probability $\prob$ is calculated using~\eqref{eq:cal_formal}.
\end{definition}






\section{Algorithm}
\label{appendix:alg}Here we present more details for the algorithm: Conformalized survival distribution using Individual survival Probability at Observed Time (\texttt{CiPOT}).
The pseudo-code is presented in Algorithm~\ref{alg:csd_ipot}.

\begin{algorithm}[h]
   \caption{\CSDiPOT}
   \label{alg:csd_ipot}
\begin{algorithmic}[1]
    \REQUIRE Dataset $\Data$, testing data with feature $\Bfx{n+1} \in \mathbb{R}^d$, survival model $\Model$, predefined percentile levels $\mathcal{P} = \{\rho_1, \rho_2, \ldots\}$, repetition parameter $R$
    \ENSURE Calibrated ISD curve for $\Bfx{n+1}$
   \STATE Randomly partition $\Data$ into a training set $\Data^{\text{train}}$ and a conformal set $\Data^{\text{con}}$
   \STATE Train a survival model $\Model$ using $\Data^{\text{train}}$
   \STATE Make ISD predictions $\{\hat{S}_{\Model}(t \mid \Bfx{i})\}_{i \in \iset^{\text{con}}}$
   \STATE \textit{(Optional)} apply the interpolation and extrapolation to make the ISDs continuous. 
   \STATE Initialize conformity score set: $\Gamma_{\mathcal{M}} = \emptyset$,
   and pseudo-uniform array: $\boldsymbol{u}_{R} = \left[ \, {r}/{R} \, \right]_{r=0}^R$
   \FOR{$i \in \iset^{\text{con}}$}
   \STATE $\gamma_{i, \Model} = \hat{S}_{\Model}(t_i \mid \Bfx{i})$ \hfill $\triangleright$ calculate the iPOT value
   \IF{$\delta_i == 0$}
   \STATE $\Gamma_{\mathcal{M}} = \Gamma_{\mathcal{M}} + \{ u \cdot \gamma_{i, \Model} \mid u \in \boldsymbol{u}_R\}$ \hfill $\triangleright$ sample R times from $\mathcal{U}_{[0, \hat{S}_\Model(t_i \mid \Bfx{i})]}$
   \ELSE
   \STATE $\Gamma_{\mathcal{M}} = \Gamma_{\mathcal{M}} + \{ \underbrace{\gamma_{i, \Model}, \cdots, \gamma_{i, \Model}}_{R \text{ times}}\}$ \hfill $\triangleright$ repeat the iPOT value, $R$ times
   \ENDIF
   \ENDFOR
   \STATE $\forall \ \rho \in \mathcal{P}$, $\Tilde{S}_\Model^{-1}(\rho \mid \Bfx{n+1}) = \hat{S}_\mathcal{M}^{-1} \left(\, \Pct(\rho; \, \Gamma_\Model )\mid \Bfx{n+1} \, \right)$
   \STATE $\Tilde{S}_\Model(t \mid \Bfx{n+1}) \, = \, \inf \{\rho: \Tilde{S}_\Model^{-1}(\rho \mid \Bfx{n+1}) \leq t \}$ \hfill $\triangleright$ transform the inverse ISD into a ISD curve
\end{algorithmic}
\end{algorithm}

Our method is motivated by the split conformal prediction~\cite{papadopoulos2002inductive}. 
The algorithm starts by partitioning the dataset in a training and a conformal set (line 1 in Algorithm~\ref{alg:csd_ipot}).
Previous methods~\cite{romano2019conformalized,candes2023conformalized} recommend mutually exclusive partitioning, \ie the training and conformal set must satisfy $\Data^{\text{train}} \cup \Data^{\text{con}} = \Data$ and $\Data^{\text{train}} \cap \Data^{\text{con}} = \emptyset$. 
However, this can cause one problem: reducing the training set size, resulting in underfitting for the models (especially for deep-learning models) and sacrificing discrimination performance. Instead, we consider the two partitioning policies proposed by~\citet{qi2024conformalized}: (1) using the validation set as the conformal set, and (2) combining the validation and training sets as the conformal set.

Another algorithm detail is the interpolation and extrapolation (line 4 in Algorithm~\ref{alg:csd_ipot}) for the ISDs. 
This optional operation is required for non-parametric survival algorithms (including semi-parametric~\cite{cox1972regression} and discretize time/quantile models~\cite{yu2011learning, lee2018deephit}).
For interpolation, we can use either linear or piecewise cubic Hermit interpolating polynomial (PCHIP)~\cite{fritsch1984method} which both can maintain the monotonic property of the survival curves.
For extrapolation, we extend ISDs using the starting point $[0, 1]$ and ending point $[t_{\text{end}}, \hat{S}_\Model (t_{\text{end}} \mid \Bfx{i})]$.

We also include a side-by-side visual comparison of our method to \texttt{CSD}~\cite{qi2024conformalized} in Figure~\ref{fig:csd_compare}.
Both \texttt{CSD} and \CSDiPOT approaches use the same general framework of conformal prediction, but they differ in how to calculate the conformity score and adjust the positions of the predictions. Note that in Figure~\ref{fig:csd_compare}, the \textbf{horizontal} positions of the circles remain unchanged for \texttt{CSD} while the \textbf{vertical} positions of the circles remain unchanged for \texttt{CiPOT}.
This unique approach for calculating the conformity score, and the downstream benefit for handling censored subjects (thanks to the conformity score design), together provide a theoretical guarantee for the calibration, as elaborated in the next section. 

\begin{figure}[t]
    \centering
    \includegraphics[width=\linewidth]{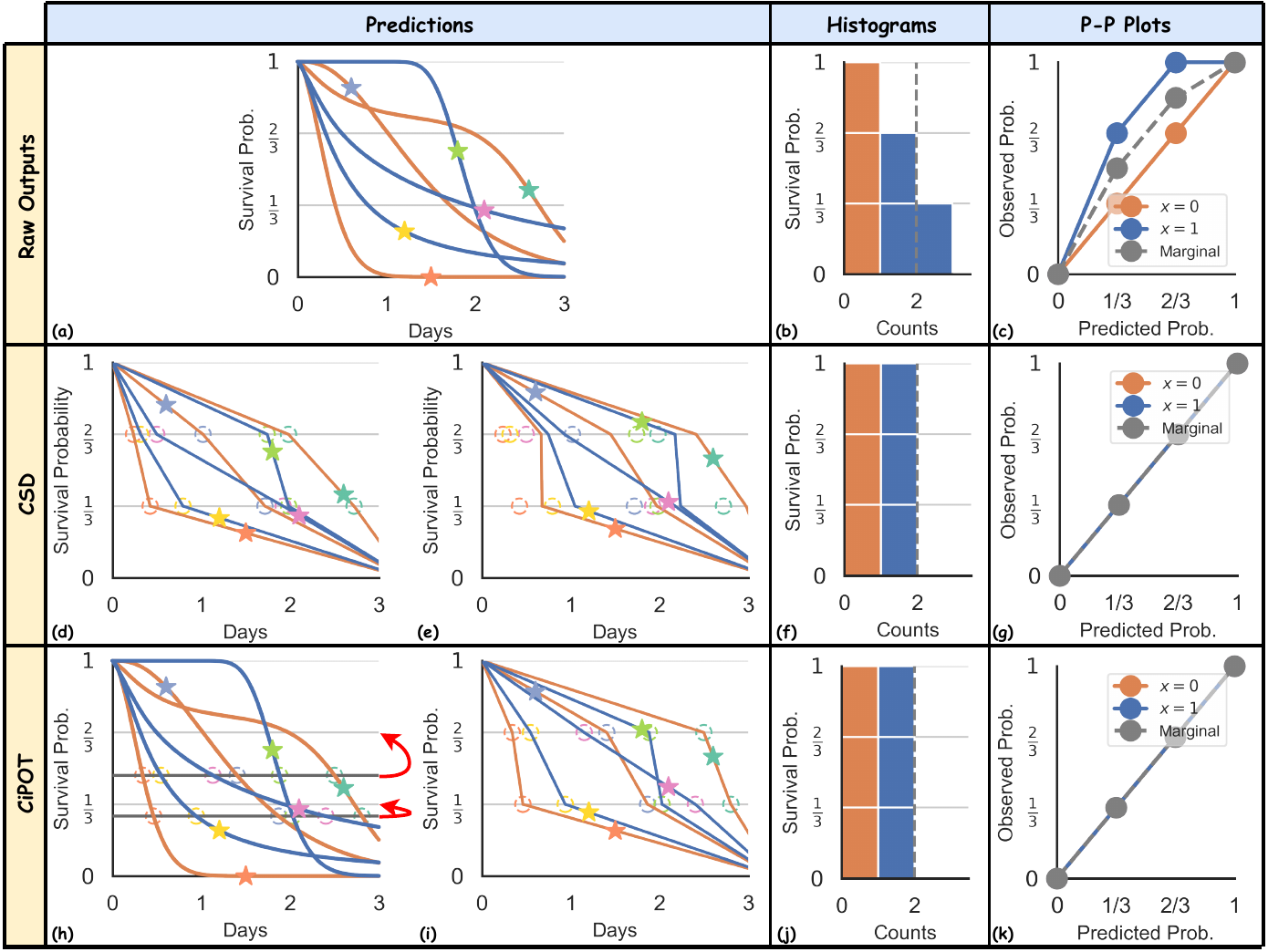}
    \caption{Comparison of \texttt{CSD} and \texttt{CiPOT}. The first row mirrors the original Figure~\ref{fig:iPOT_illust}'s first row, and the third row reflects its second row. \texttt{CSD} steps include: (d) discretizing the ISDs into predicted percentile times (circles), and calculating conformity scores using the horizontal differences between circles and stars (true outcomes); then (e) adjusting the circles horizontally via conformal regression.}
    \label{fig:csd_compare}
\end{figure}

\section{Theoretical Analysis}
\label{appendix:theory}

This appendix offers more details on the theoretical analysis in Section~\ref{sec:method_theory}.

\subsection{More on the marginal and conditional calibration}
\label{appendix:calibration}


This section presents the complete proof for Theorem~\ref{theorem:margin_cal} and Theorem~\ref{theorem:conditional_cal}. 

For the completeness, we start by restating Theorem~\ref{theorem:margin_cal}:
\begin{theorem}[Asymptotic marginal calibration]
    If the instances in $\Data$ are exchangeable, and follow the conditional independent censoring assumption, then for a new dataset $n+1$, $\forall \ \rho_1 < \rho_2 \in [0, 1]$, 
    \begin{equation}
    \label{equation:marginal_calibration_theorem}
        \rho_2 - \rho_1 \quad \leq \quad \prob\left(\Tilde{S}_\mathcal{M} (t_{n+1} \mid \Bfx{n+1}) \in [\rho_1, \rho_2]\right) \quad \leq \quad \rho_2 - \rho_1 + \frac{1}{|\Data^{\textnormal{con}}|+1} .
    \end{equation}
\end{theorem}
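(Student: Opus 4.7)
My plan is to reduce the theorem to the classical split-conformal coverage inequality, by first rewriting the target event as a rank statement for the test score. Because $\hat{S}_\Model(\,\cdot\mid\Bfx{n+1})$ is (assumed) continuous and strictly decreasing, the definition $\Tilde{S}_\Model^{-1}(\rho\mid\Bfx{n+1})=\hat{S}_\Model^{-1}(\Pct(\rho;\Gamma_\Model)\mid\Bfx{n+1})$ can be inverted to give
\[
\Tilde{S}_\Model(t_{n+1}\mid\Bfx{n+1}) \in [\rho_1,\rho_2]
\;\iff\;
\gamma_{n+1,\Model} \in \bigl[\Pct(\rho_1;\Gamma_\Model),\,\Pct(\rho_2;\Gamma_\Model)\bigr],
\]
where $\gamma_{n+1,\Model}=\hat{S}_\Model(t_{n+1}\mid\Bfx{n+1})$ is the test instance's iPOT score. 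The theorem therefore becomes a two-sided coverage statement: a single scalar test score lying between two empirical quantiles of the conformal scores.

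Next, I would establish exchangeability of the augmented collection $\{\gamma_{i,\Model}\}_{i\in\iset^{\mathrm{con}}\cup\{n+1\}}$ conditionally on the trained model $\Model$ (a function of $\Data^{\mathrm{train}}$, hence independent of the conformal set and test point). For uncensored subjects each $\gamma_{i,\Model}$ is a deterministic function of exchangeable raw data, so exchangeability is automatic. For censored conformal subjects the $R+1$ pseudo-draws $\hat{S}_\Model(c_i\mid\Bfx{i})\cdot\boldsymbol{u}_R$ discretise $\mathcal{U}_{[0,\hat{S}_\Model(c_i\mid\Bfx{i})]}$, which under conditional independent censoring and the probability integral transform coincides with the posterior law of $\hat{S}_\Model(e_i\mid\Bfx{i})$ given $\{c_i,\delta_i=0\}$; letting $R\to\infty$ makes this imputation exact and the resulting conformity score behaves as if it were the uncensored score, restoring joint exchangeability with the test instance.

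Third, with an exchangeable family of $n+1$ scores in hand (where $n=|\Data^{\mathrm{con}}|$), I would invoke the standard conformal rank lemma: the rank of $\gamma_{n+1,\Model}$ in the augmented set is uniform on $\{1,\dots,n+1\}$, and the convention that $\Pct(\rho;\Gamma_\Model)$ equals the $\lceil\rho(n+1)\rceil$-th order statistic gives
\[
\prob\bigl(\gamma_{n+1,\Model}\le\Pct(\rho;\Gamma_\Model)\bigr) \;=\; \tfrac{\lceil\rho(n+1)\rceil}{n+1} \;\in\; \bigl[\rho,\,\rho+\tfrac{1}{n+1}\bigr].
\]
Subtracting this identity at $\rho_2$ and $\rho_1$, and using continuity of $\hat{S}_\Model$ to handle boundary ties, delivers both inequalities in~\eqref{equation:marginal_calibration_theorem}; the $1/(n+1)$ slack in the upper bound is inherited from the single surviving ceiling.

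The main obstacle is Step~2: the $R+1$ within-subject pseudo-scores share the deterministic factor $\hat{S}_\Model(c_i\mid\Bfx{i})$, so the full bag of $(R+1)n$ scores is not i.i.d.\ and one cannot simply plug it into the conformal rank lemma as if it had cardinality $(R+1)n$. The cleanest remedy is to argue at the subject level -- coupling each conformal subject's $R+1$ imputations with the test subject's own imputation when it is censored, and then averaging the resulting rank identity over the pseudo-uniform vector -- and to take $R\to\infty$ so that $\boldsymbol{u}_R$ converges to a genuine uniform sample. This $R\to\infty$ limit is where the ``asymptotic'' adjective in the theorem name bites, and it also makes transparent why the conditional independent censoring assumption is indispensable: it is what guarantees the posterior of the imputed event-time score equals the prior uniform on $[0,\hat{S}_\Model(c_i\mid\Bfx{i})]$.
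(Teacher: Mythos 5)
Your proposal follows essentially the same route as the paper's own proof: reduce the event $\Tilde{S}_\Model(t_{n+1}\mid\Bfx{n+1})\in[\rho_1,\rho_2]$ to the pre-adjustment score lying between the empirical percentiles $\Pct(\rho_1;\Gamma_\Model)$ and $\Pct(\rho_2;\Gamma_\Model)$, apply the exchangeability/rank-uniformity argument for uncensored scores, and handle censoring by showing (via the probability integral transform and conditional independent censoring) that the censored subject's score follows $\mathcal{U}_{[0,\hat{S}_\Model(c_i\mid\Bfx{i})]}$, with the guarantee becoming asymptotic in the imputation limit. In fact you are somewhat more explicit than the paper about the one delicate point --- that the bag of $(R+1)$ within-subject pseudo-scores is not exchangeable as a flat collection and must be handled at the subject level as $R\to\infty$ --- which the paper passes over with a one-sentence appeal to asymptotic convergence.
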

\begin{proof}
    This proof is inspired by the proof in Theorem D.1 and D.2 from~\citet{angelopoulos2023conformal}, the main difference is that our conformity scores are different, and we also have censoring subjects.  

    Given the target percentile levels $\rho_1$ and $\rho_2$, 
    As we remember in the \CSDiPOT procedure, we essentially vertically shift the prediction at $\Pct(\rho; \Gamma)$ to $\rho$ (see~\eqref{eq:adjustment}).
    Therefore, the condition in~\eqref{equation:marginal_calibration_theorem}
    \begin{align*}
        \Tilde{S}_\mathcal{M} (t_{n+1} \mid \Bfx{n+1})\ \in\ [\rho_1, \rho_2],
    \end{align*}
    can be transformed into this equivalent condition
    \begin{align*}
        \hat{S}_\mathcal{M} (t_{n+1} \mid \Bfx{n+1})\ \in\  [\Pct (\rho_1 ; \Gamma_\Model), \Pct (\rho_2 ; \Gamma_\Model)] .
    \end{align*}

    As we recall, the conformity score set consists of the iPOT score for each subject in the conformal set. Let us consider the easy case of an uncensored dataset, then 
    \begin{align*}
        \Gamma_\Model = \{ \hat{S}_\Model (e_i \mid \Bfx{i}) \}_{i \in \iset^{\text{con}}}
    \end{align*}
    Without loss of generality, we assume that the conformity scores in $\Gamma_\Model$ are sorted. 
    This assumption is purely technical as it aims for simpler math expression in this proof, \ie $\hat{S}_\Model (e_1 \mid \Bfx{1}) < \hat{S}_\Model (e_2 \mid \Bfx{2})< \cdots < \hat{S}_\Model (e_{|\Data|^{\text{con}}} \mid \Bfx{|\Data^{\text{con}}|})$.
    Therefore, by the exchangeability assumption (the order between subject $n+1$ and subjects $i \in \iset^{\text{con}}$ do not matter), the iPOT score of subject $n+1$ is equally likely to fall into any of the $|\Data^{\text{con}}|+1$ intervals between the $[0, 1]$, separated by the $|\Data^{\text{con}}|$ conformity scores. 
    Therefore,
    \begin{align*}
        \prob \left(\hat{S}_\mathcal{M} (t_{n+1} \mid \Bfx{n+1}) \,  \in \, [\Pct (\rho_1 ; \Gamma_\Model), \Pct (\rho_2 ; \Gamma_\Model)] \right) = \frac{\lceil (\rho_2 - \rho_1)(|\Data^{\text{con}}|+1)\rceil}{(|\Data^{\text{con}}|+1)}
    \end{align*}
    This value is always higher than $\rho_2 - \rho_1$, and if the conformity scores in $\Gamma_\Model$ do not have any tie\footnote{This is a technical assumption in conformal prediction. This assumption is easy to solve in practice because users can always add a vanishing amount of random noise to the scores to avoid ties.

}, we can also see that the above equation is less than $\rho_2 - \rho_1 + \frac{1}{|\Data^{\text{con}}|+1}$.

    Now let's consider the censored case. Based on the probability integral transform, for a censored subject $j$, the probability that its iPOT value falls into the percentile interval, before knowing its censored time, is (from here we omit the subscript $\Model$ for simple expression) \begin{align*}
        \prob \left(\SurvPred{e_j}{\Bfx{j}} \leq \rho \right) = \rho, \quad
        \text{and} \quad \prob \left(\rho_1 \leq \SurvPred{e_j}{\Bfx{j}} \leq \rho_2 \right) = \rho_2 - \rho_1
    \end{align*}
    Now given its censored time $c_j$, we can calculate the conditional probability
    \begin{align*}
        \prob &\left(\rho_1 \leq \SurvPred{e_j}{\Bfx{j}} \leq \rho_2  \,\middle\vert\, e_j>c_i\right) 
        = \prob \left(\rho_1 \leq \SurvPred{e_j}{\Bfx{j}} \leq \rho_2  \,\middle\vert\, \SurvPred{e_j}{\Bfx{j}} \leq \SurvPred{c_j}{\Bfx{j}}\right) \\
        &= \frac{\prob\left(\rho_1 \leq \SurvPred{e_j}{\Bfx{j}} \leq \rho_2  , \SurvPred{e_j}{\Bfx{j}} \leq \SurvPred{c_j}{\Bfx{j}} \right)}{\prob\left(\SurvPred{e_j}{\Bfx{j}} \leq \SurvPred{c_j}{\Bfx{j}}\right)} \\
        &= \frac{\prob\left(\rho_1 \leq \SurvPred{e_j}{\Bfx{j}} \leq \rho_2  , \SurvPred{e_j}{\Bfx{j}} \leq \SurvPred{c_j}{\Bfx{j}},  \SurvPred{c_j}{\Bfx{j}} < \rho_1 \right)}{\SurvPred{c_j}{\Bfx{j}}} \\
        & \quad + \frac{\prob\left(\rho_1 \leq \SurvPred{e_j}{\Bfx{j}} \leq \rho_2  , \SurvPred{e_j}{\Bfx{j}} \leq \SurvPred{c_j}{\Bfx{j}},  \rho_1 \leq \SurvPred{c_j}{\Bfx{j}} \leq \rho_2 \right)}{\SurvPred{c_j}{\Bfx{j}}} \\
        & \quad + \frac{\prob\left(\rho_1 \leq \SurvPred{e_j}{\Bfx{j}} \leq \rho_2  , \SurvPred{e_j}{\Bfx{j}} \leq \SurvPred{c_j}{\Bfx{j}} ,  \SurvPred{c_j}{\Bfx{j}} > \rho_2 \right)}{\SurvPred{c_j}{\Bfx{j}}} \\
        &= \frac{0}{\SurvPred{c_j}{\Bfx{j}}} + \frac{\prob\left(\rho_1 \leq\SurvPred{e_j}{\Bfx{j}} \leq \SurvPred{c_j}{\Bfx{j}},  \rho_1 \leq \SurvPred{c_j}{\Bfx{j}} \leq \rho_2 \right)}{\SurvPred{c_j}{\Bfx{j}}} \\
        & \quad + \frac{\prob\left(\rho_1 \leq \SurvPred{e_j}{\Bfx{j}} \leq \rho_2  ,  \SurvPred{c_j}{\Bfx{j}} > \rho_2 \right)}{\SurvPred{c_j}{\Bfx{j}}} \\
        &= \frac{\left(\SurvPred{c_j}{\Bfx{j}} - \rho_1\right) \mathbbm{1}\left[\rho_1 \leq \SurvPred{c_j}{\Bfx{j}} \leq \rho_2 \right] + (\rho_2 - \rho_1) \mathbbm{1}\left[\SurvPred{c_j}{\Bfx{j}} > \rho_2 \right]}{\SurvPred{c_j}{\Bfx{j}}} ,
    \end{align*}
    where the probability decomposition in the last equality is because the conditional independent censoring assumption, \ie $e_j \ \bot \ c_j \mid \Bfx{j}$.
    This above derivation means the $\SurvPred{e_j}{\Bfx{j}}$ follows the uniform distribution $\mathcal{U}_{[0, \SurvPred{c_i}{\Bfx{j}}]}$.
    Therefore, if we do one sampling for each censored subject using $\mathcal{U}_{[0, \SurvPred{c_j}{\Bfx{j}}]}$, the above proof asymptotically converges to the upper and lower bounds for uncensored subjects, for any survival dataset with censorship.
\end{proof}

For the conditional calibration, we start by formally restating Theorem~\ref{theorem:conditional_cal}. 

\begin{theorem}[Asymptotic conditional calibration] 
With the conditional independent censoring assumption, if we have these additional three assumptions:
\begin{enumerate}[label=(\roman*)]
    \item the non-processed prediction $\hat{S}(t\mid \Bfx{i})$ is also a consistent survival estimator with:
    \begin{align}
    \label{eq:consistent}
        \prob \left( \E \left[  \sup_{t} \left(\hat{S} (t\mid \Bfx{i}) - S (t\mid \Bfx{i}) \right)^2 \,\middle\vert\, \hat{S}  \right]   \geq \eta_n \right) \leq \sigma_n , \ \text{s.t.} \quad \eta_n = o(1), \sigma_n =o(1),
    \end{align}
    \item the inverse ISD estimation $\hat{S}^{-1}(t\mid \Bfx{i})$ is differentiable,
    \item there exist some $M$ such that $\inf_\rho \frac{d \ \hat{S}^{-1}(\rho \mid \Bfx{n+1})}{d\ \rho} \geq M^{-1}$,
\end{enumerate}
then the \CSDiPOT process can asymptotically achieve the conditional distribution calibration:
\begin{align*}
    \Tilde{S} (t \mid \Bfx{n+1})  = {S} (t \mid \Bfx{n+1}) + o_p(1).
\end{align*}
\end{theorem}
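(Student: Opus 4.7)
The plan is to work at the level of the inverse ISD, since \CSDiPOT is defined there, and then transfer the result back via $\Tilde{S}(t\mid\Bfx{n+1}) = \inf\{\rho : \Tilde{S}^{-1}(\rho\mid\Bfx{n+1}) \leq t\}$. Writing $Q_n := \Pct(\rho;\Gamma_\Model)$, the definition of \CSDiPOT gives $\Tilde{S}^{-1}(\rho\mid\Bfx{n+1}) = \hat{S}^{-1}(Q_n\mid\Bfx{n+1})$, and a triangle inequality yields $|\Tilde{S}^{-1}(\rho\mid\Bfx{n+1}) - S^{-1}(\rho\mid\Bfx{n+1})| \leq \mathrm{(A)} + \mathrm{(B)}$ with $\mathrm{(A)} := |\hat{S}^{-1}(Q_n\mid\Bfx{n+1}) - \hat{S}^{-1}(\rho\mid\Bfx{n+1})|$ and $\mathrm{(B)} := |\hat{S}^{-1}(\rho\mid\Bfx{n+1}) - S^{-1}(\rho\mid\Bfx{n+1})|$. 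It suffices to show each piece is $o_p(1)$ for every fixed $\rho \in (0,1)$.

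For term $\mathrm{(B)}$, I would convert the consistency assumption (i) on $\hat{S}$ into a consistency statement for $\hat{S}^{-1}$ evaluated at the fixed feature vector $\Bfx{n+1}$. Applying Markov's inequality to \eqref{eq:consistent} gives $\sup_t |\hat{S}(t\mid\Bfx{n+1}) - S(t\mid\Bfx{n+1})| = o_p(1)$. By assumptions (ii) and (iii), $\hat{S}^{-1}(\cdot\mid\Bfx{n+1})$ is Lipschitz with constant $M$, and the standard quantile-inversion argument (take $t = S^{-1}(\rho\mid\Bfx{n+1})$, observe $|\hat{S}(t\mid\Bfx{n+1}) - \rho| = o_p(1)$, and then invert via Lipschitzness) yields $\mathrm{(B)} = o_p(1)$.

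For term $\mathrm{(A)}$, the Lipschitz bound from (iii) gives $\mathrm{(A)} \leq M|Q_n - \rho|$, so it remains to prove $Q_n \to \rho$ in probability. My plan combines two ingredients. First, by the probability integral transform, the \emph{oracle} iPOT scores $S(e_i\mid\Bfx{i})$ for uncensored subjects are i.i.d.\ $\mathcal{U}_{[0,1]}$, and the posterior uniform sampling step in \CSDiPOT (exactly the construction used in the proof of Theorem~\ref{theorem:margin_cal}) makes the censored scores also $\mathcal{U}_{[0,1]}$ at the oracle level. Second, assumption (i) uniformly controls $|\hat{S}(e_i\mid\Bfx{i}) - S(e_i\mid\Bfx{i})|$, so the perturbed (observed) iPOT scores in $\Gamma_\Model$ are uniformly close to the oracle ones. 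A Glivenko--Cantelli-type uniform convergence of the empirical CDF of $\Gamma_\Model$ to the CDF of $\mathcal{U}_{[0,1]}$ then forces $Q_n \to \rho$ in probability.

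Combining the two bounds yields $\Tilde{S}^{-1}(\rho\mid\Bfx{n+1}) - S^{-1}(\rho\mid\Bfx{n+1}) = o_p(1)$ for every fixed $\rho$, and the monotonicity preserved by \CSDiPOT (Theorem~\ref{theorem:monotonic}) lets me invert this to obtain $\Tilde{S}(t\mid\Bfx{n+1}) - S(t\mid\Bfx{n+1}) = o_p(1)$. The main obstacle I expect is the argument in the third paragraph: translating the expected-squared-sup-norm consistency in (i), which carries its own $(\eta_n,\sigma_n)$ probabilistic slack, into a uniform Glivenko--Cantelli-type statement for the iPOT empirical CDF, while simultaneously absorbing the extra noise introduced by the uniform sampling for censored subjects. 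Keeping these two sources of stochastic error aligned, without assuming stronger regularity beyond what (iii) supplies, is the delicate bookkeeping step on which the proof hinges.
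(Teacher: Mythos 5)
Your skeleton is essentially the paper's: split the error into a $\Tilde{S}$-versus-$\hat{S}$ piece and a $\hat{S}$-versus-$S$ piece, dispatch the latter by consistency, reduce the former to convergence of the empirical percentile $\Pct(\rho;\Gamma_\Model)\to\rho$, transfer between the time and probability scales via the derivative bound, and absorb censoring by the Theorem~\ref{theorem:margin_cal} argument. The one real difference is that the paper black-boxes the key step, $\hat{S}^{-1}(\Pct(\rho;\Gamma_\Model)\mid\Bfx{n+1})=\hat{S}^{-1}(\rho\mid\Bfx{n+1})+o_p(1)$, by citing Lemmas 5.2--5.3 of \citet{izbicki2020flexible}, whereas you propose to prove it directly via the probability integral transform and a Glivenko--Cantelli argument for the empirical CDF of the conformity scores; that is the right idea, and you correctly flag it as the delicate step.

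There is, however, a direction error in how you deploy assumption (iii). You assert that $\inf_\rho d\hat{S}^{-1}(\rho\mid\Bfx{n+1})/d\rho\geq M^{-1}$ makes $\hat{S}^{-1}(\cdot\mid\Bfx{n+1})$ Lipschitz with constant $M$, and you use this to bound term (A) by $M|Q_n-\rho|$ and to perform the inversion inside term (B). A \emph{lower} bound on the derivative of $\hat{S}^{-1}$ gives the reverse inequality, $|\hat{S}^{-1}(\rho_1\mid\Bfx{n+1})-\hat{S}^{-1}(\rho_2\mid\Bfx{n+1})|\geq M^{-1}|\rho_1-\rho_2|$; equivalently, it makes the forward map $\hat{S}(\cdot\mid\Bfx{n+1})$ $M$-Lipschitz in $t$. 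That is precisely what the final transfer from time-scale closeness of the inverses to probability-scale closeness of the survival functions requires -- and it is how the paper uses (iii) -- but it does not control terms (A) and (B), which need an \emph{upper} bound on $d\hat{S}^{-1}/d\rho$ (equivalently, a lower bound on the density of the predicted event-time distribution). Neither the theorem's hypotheses nor your proposal supplies that bound, so as written those two steps do not go through. The cleanest repair is to avoid the detour through time units: observe that $\Tilde{S}(t\mid\Bfx{n+1})$ is (up to the $\lceil\cdot\rceil$ correction) the empirical CDF of $\Gamma_\Model$ evaluated at $\hat{S}(t\mid\Bfx{n+1})$, so $\Tilde{S}-\hat{S}$ is controlled uniformly by your Glivenko--Cantelli statement alone, the reference distribution being exactly uniform by the probability integral transform; assumption (iii) is then needed only where the paper uses it, and the missing upper derivative bound is never required.
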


\begin{proof}
    In order to prove this theorem, it is enough to show that (i)~$\Tilde{S}(t \mid \Bfx{n+1}) = \hat{S}(t \mid \Bfx{n+1}) + o_p(1)$, and 
    (ii)~$\hat{S}(t \mid \Bfx{n+1}) = {S}(t \mid \Bfx{n+1}) + o_p(1)$.

    The second equality is obvious to see under the consistent survival estimator assumption~\eqref{eq:consistent}.

    Now let's focus on the first equality.   
    We borrow the idea from Lemma 5.2 and Lemma 5.3 in~\citet{izbicki2020flexible}. It states under the consistent survival estimator assumption, we can have
    $\hat{S}^{-1} (\Pct(\rho; \Gamma) \mid \Bfx{n+1}) = \hat{S}^{-1}(\rho \mid \Bfx{n+1})+ o_p(1)$.

    
    The only difference between the above claim and the original claim in~\citet{izbicki2020flexible}, is that they use the CDF while we use the survival function, \ie the complement of the CDF.
    
    According to~\eqref{eq:adjustment}, the above claim can be translate to 
    $\Tilde{S}^{-1} (\rho \mid \Bfx{n+1}) = \hat{S}^{-1}(\rho \mid \Bfx{n+1})+ o_p(1)$.

    Then if $|\Tilde{S}^{-1} (\rho \mid \Bfx{n+1}) - \hat{S}^{-1}(\rho \mid \Bfx{n+1})| = o_p(1)$, if the $\hat{S}$ is differentiable, we can have 
    \begin{align*}
        |\Tilde{S} (t \mid \Bfx{n+1}) - \hat{S}(t \mid \Bfx{n+1})| = o_p(1)\left( \inf_\rho \frac{d \ \hat{S}^{-1}(\rho \mid \Bfx{i})}{d\ \rho}\right)^{-1}.
    \end{align*}
    Therefore, if there exist some $M$ such that $\inf_\rho \frac{d \ \hat{S}^{-1}(\rho \mid \Bfx{n+1})}{d\ \rho} \geq M^{-1}$. We can get the first equality proved. 


    For the censored subjects, we can use the same reasoning and steps in Theorem~\ref{theorem:margin_cal} under the conditional independent censoring assumption. This will finish the proof.
\end{proof}

We are aware that linear interpolation and extrapolation will make the survival curves nondifferentiable.
Therefore, we recommend using PCHIP interpolation~\cite{fritsch1984method} for \texttt{CiPOT}.
And extrapolation normally does not need to apply for the survival prediction algorithm because the iPOT value $\hat{S}_\Model (t_i \mid \Bfx{i})$ can be obtained within the curve range for those methods.
However, for quantile-based algorithms, sometimes we need extrapolation. And we recognize this as a future direction to improve.

\subsection{More on the monotonicity}
\label{appendix:monotonic}
\texttt{CSD} is a conformalized quantile regression based~\cite{romano2019conformalized} method. 
It first discretized the curves into a quantile curve, and adjusted the quantile curve at every discretized level~\cite{qi2024conformalized}.
Both the estimated quantile curves ($\hat{q} (\rho)$) and the adjustment terms ($\text{adj} (\rho)$) are monotonically increasing with respect to the quantile levels.
However, the adjusted quantile curve -- calculated as the original quantile curve minus the adjustment -- is no longer monotonic. 
For example, if $\hat{q} (50\%) = 5$ and $\hat{q} (60\%) = 6$, with corresponding adjustments of $\text{adj} (50\%) = 2$ and $\text{adj} (60\%) = 4$, the post-CSD quantile curve will be $5-2$ at 50\% and $6-4$ at 60\%, demonstrating non-monotonicity.
For a detailed description of their algorithm, readers are referred to~\citet{qi2024conformalized}.

However, \CSDiPOT has this nice property.
Here we restate the Theorem~\ref{theorem:auroc}
\begin{theorem}
    \CSDiPOT process preserves the monotonic decreasing property of the ISD, s.t., 
    \begin{equation}
        \forall i \in \iset, \quad \forall \ a \leq b \in \mathbb{R}_{+}:  \quad \Tilde{S}_{\Model} (a \mid \Bfx{i}) \geq \Tilde{S}_{\Model} (b \mid \Bfx{i}).
    \end{equation} 
\end{theorem}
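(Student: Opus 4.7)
The plan is to reduce the statement to a short chain of monotonicity observations, exploiting the fact that the \CSDiPOT construction in~\eqref{eq:adjustment} and~\eqref{eq:reverse_ISD} acts only by reparametrizing the vertical axis of the predicted ISD and then inverting back. First I would argue that $\rho \mapsto \Pct(\rho;\,\Gamma_\Model)$ is non-decreasing on $(0,1)$: this is immediate from the definition, since $\rho \mapsto \lceil \rho(|\Data^{\text{con}}|+1)\rceil$ is non-decreasing and the empirical percentile of a fixed multiset is monotone in its order-statistic index.

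Second, I would use the standing assumption from Section~\ref{sec:method_step1} that $\hat{S}_\Model(\cdot \mid \Bfx{i})$ is right-continuous and strictly positive, hence non-increasing in $t$. Its generalized inverse $\hat{S}_\Model^{-1}(u \mid \Bfx{i}) = \inf\{\, t \geq 0 : \hat{S}_\Model(t \mid \Bfx{i}) \leq u\,\}$ is then a non-increasing function of $u$. Composing the two observations, $\Tilde{S}_\Model^{-1}(\rho \mid \Bfx{i}) = \hat{S}_\Model^{-1}\bigl(\Pct(\rho;\,\Gamma_\Model) \mid \Bfx{i}\bigr)$ is a composition of a non-decreasing function with a non-increasing one, and is therefore non-increasing in $\rho$.

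Third, I would transport this monotonicity back to $\Tilde{S}_\Model(t \mid \Bfx{i})$ via~\eqref{eq:reverse_ISD}. For $a \leq b$, every $\rho$ satisfying $\Tilde{S}_\Model^{-1}(\rho \mid \Bfx{i}) \leq a$ also satisfies $\Tilde{S}_\Model^{-1}(\rho \mid \Bfx{i}) \leq b$, so the set whose infimum defines $\Tilde{S}_\Model(a \mid \Bfx{i})$ is contained in the one defining $\Tilde{S}_\Model(b \mid \Bfx{i})$. The infimum of a smaller set is at least as large, which gives $\Tilde{S}_\Model(a \mid \Bfx{i}) \geq \Tilde{S}_\Model(b \mid \Bfx{i})$, exactly the claim.

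The only step requiring care—and the main potential obstacle if one wants the statement to cover directly the piecewise-constant outputs of Cox-type or discrete-time models—is the treatment of plateaus and jumps in $\hat{S}_\Model$. There $\hat{S}_\Model^{-1}$ is only a generalized inverse, so strict equalities above must be weakened to the corresponding inequalities; the monotone-composition reasoning still goes through, but one has to verify carefully that no plateau of $\hat{S}_\Model$ causes an inversion at a jump of $\Pct(\cdot;\Gamma_\Model)$. This is precisely why Section~\ref{sec:method_step1} recommends monotone interpolation (linear or PCHIP) before applying the procedure; under that standing assumption each of the three steps above is routine and the theorem follows.
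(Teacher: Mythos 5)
Your proof is correct and follows essentially the same three-step route as the paper's own argument in Appendix~B.2: monotonicity of $\Pct(\cdot\,;\Gamma_\Model)$, monotonicity of the (generalized) inverse ISD so that the composition in~\eqref{eq:adjustment} is monotone in $\rho$, and preservation of monotonicity under the inversion in~\eqref{eq:reverse_ISD}. If anything you are more careful than the paper, which asserts $\Tilde{S}_\Model^{-1}(\rho_1\mid\Bfx{n+1})<\Tilde{S}_\Model^{-1}(\rho_2\mid\Bfx{n+1})$ for $\rho_1<\rho_2$ (the inverse of a decreasing survival function should be non-increasing, as you have it) and does not discuss plateaus or generalized inverses at all.
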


\begin{proof}
    The proof of this theorem is straightforward. The essence of the proof lies in the monotonic nature of all operations within \texttt{CiPOT}.

    First of all, the percentile operation $\Pct (\rho ; \Gamma_\Model)$ is a monotonic function, \ie for all $\rho_1 <\rho_2$, $\Pct (\rho_1 ; \Gamma_\Model) < \Pct (\rho_2 ; \Gamma_\Model)$.

    Second, because the non-post-processed ISD curves $\hat{S}(t\mid \Bfx{i})$ are monotonic. Therefore, the inverse survival function $\hat{S}^{-1}(\rho \mid \Bfx{i})$ is also monotonic. 
    Therefore, after the adjustment step as detailed in~\eqref{eq:adjustment}, ~\eqref{eq:adjustment}, for all $\rho_1 <\rho_2$, it follows that $\Tilde{S}_{\Model}^{-1} (\rho_1 \mid \Bfx{n+1}) < \Tilde{S}_{\Model}^{-1} (\rho_2 \mid \Bfx{n+1})$.

    Lastly, by converting the inverse survival function $\Tilde{S}_{\Model}^{-1} (\rho \mid \Bfx{n+1})$ back to survival function $\Tilde{S}_{\Model} (t \mid \Bfx{n+1})$, the monotonicity is preserved.

    These steps collectively affirm the theorem's proof through the intrinsic monotonicity of the operations involved in \texttt{CiPOT}.
\end{proof}

\subsection{More on the discrimination performance}
\label{appendix:discrimination}

This section explores the discrimination performance of \CSDiPOT in the context of survival analysis. 
Discrimination performance, which is crucial for evaluating the effectiveness of survival models, is typically assessed using three key metrics:
\begin{itemize}
    \item Harrell's concordance index (C-index)
    \item Area under the receiver operating characteristic curve (AUROC)
    \item Antolini's time-dependent C-index
\end{itemize}
We will analyze \texttt{CiPOT}'s performance across these metrics and compare it to the performance of \texttt{CSD}. The comparative analysis aims to highlight any improvements and trade-offs introduced by the \CSDiPOT methodology.

\paragraph{Harrell's C-index}
C-index is calculated as the proportion of all comparable
subject pairs whose predicted and outcome orders are concordant, defined as
\begin{equation}
\label{eq:c-index}
\begin{aligned}
    \text{C-index} (\{\hat{\eta}_i\}_{i\in \iset^{\text{test}}})
    &= \frac{\sum_{i,j \in \iset^{\text{test}}}  \delta_i  \cdot \mathbbm{1}[t_i < t_j] \cdot \mathbbm{1}[
    \hat{\eta}_i > \hat{\eta}_j] }{\sum_{i,j  \in \iset^{\text{test}}} \delta_i \cdot \mathbbm{1}[t_i < t_j] } ,
\end{aligned}
\end{equation}
where $\hat{\eta}_i$ denotes the model's predicted risk score of subject $i$, which can be defined as
the negative of predicted mean/median survival time ($\E_t [\hat{S} (t \mid \Bfx{i})]$ or $\hat{S}^{-1} (0.5 \mid \Bfx{i})$). 

\texttt{CSD} has been demonstrated to preserve the discrimination performance of baseline survival models, as established in Theorem 3.1 by~\citet{qi2024conformalized}. 
In contrast, \CSDiPOT does not retain this property.
To illustrate this, we present a counterexample that explains why Harrell's C-index may not be maintained when using \texttt{CiPOT}.

\begin{figure}[ht]
    \centering
    \includegraphics[width=\textwidth]{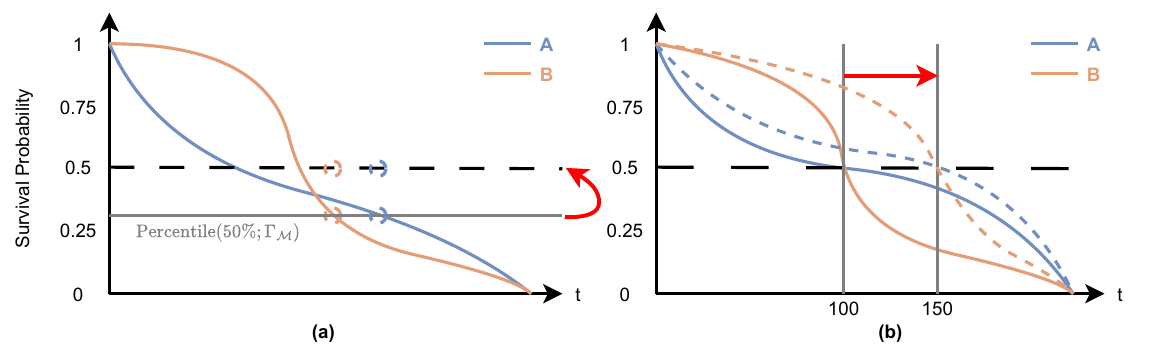}
    \caption{Counter examples of (a) Harrell's C-index performance is not preserved by \texttt{CiPOT}; and (b) AUROC performance is not preserved by \texttt{CSD}.}
    \label{fig:cindex_illus}
\end{figure}

As shown in Figure~\ref{fig:cindex_illus}(a), two ISD curves cross at a certain percentile level $25\%<\rho^*<50\%$.
Initially, the order of median survival times (where the curves cross 50\%) for these curves indicates that patient A precedes patient B.
However, after applying the adjustment as defined in~\eqref{eq:adjustment} -- which involves vertically shifting the prediction from empirical percentile level $\Pct(50\%;\Gamma_\Model)$ to desired percentile level $50\%$. 
The order of the post-processed median times for the two curves (indicated by the hollow circles) is patient B ahead of patient A.
That means this adjustment leads to a reversal in the order of risk scores, thereby compromising the C-index.

\paragraph{AUROC}
The area under the receiver operating characteristic curve (AUROC) is a widely recognized metric for evaluating discrimination in binary predictions.
Harrell's C-index can be viewed as a special case of the AUROC~\cite{harrell1996multivariable}, if we use the negative of survival probability at a specified time $t^*$ -- $\hat{S} (t^*\mid \Bfx{i})$ as the risk score.

The primary distinction lies in the definition of comparable pairs. In Harrell's C-index, comparable pairs are those for which the event order is unequivocally determined. 
Conversely, for the AUROC evaluation at time $t^*$, comparable pairs are defined as one subject experiencing an event before $t^*$ and another experiencing it after $t^*$. 
This implies that for AUROC at $t^*$, a pair of uncensored subjects both having event times before (or both after) $t^*$, is not considered comparable, whereas for the C-index, such a pair is indeed considered comparable.

The AUROC can be calculated using:
\begin{equation}
\label{eq:auroc}
\begin{aligned}
    \text{AUROC} (\hat{S}, t^*)
    &= \frac{\sum_{i,j \in \iset^{\text{test}}} \delta_i \cdot \mathbbm{1}[t_i \leq t^*] \cdot \mathbbm{1}[t_j > t^*] \cdot \mathbbm{1}[
    \hat{S} (t^* \mid \Bfx{i}) < \hat{S} (t^* \mid \Bfx{j})]}{\sum_{i,j  \in \iset^{\text{test}}}  \delta_i \cdot \mathbbm{1}[t_i \leq t^*] \cdot \mathbbm{1}[t_j > t^*]} ,
\end{aligned}
\end{equation}
From this equation, because the part of  $\delta_i \cdot \mathbbm{1}[t_i \geq t^*] \cdot \mathbbm{1}[t_j < t^*]$ is independent of the prediction (it only relates to the dataset labels). As long as the post-processing does not change the order of the survival probabilities, we can maintain the same AUROC score.

\begin{theorem}
\label{theorem:auroc}
    Applying the $\CSDiPOT$ adjustment to the ISD prediction does not affect the relative order of the survival probabilities at any single time, therefore does not affect the AUROC score of the model. Formally, $\forall \ i, j \in \iset$ and $\forall \ t^* \in \mathbb{R}_+$, given
    \begin{align*}
        \hat{S}_\Model (t^* \mid \Bfx{i}) < \hat{S}_\Model (t^* \mid \Bfx{j}),
    \end{align*}
    we must have
    \begin{align*}
        \Tilde{S}_\Model (t^* \mid \Bfx{i}) < \Tilde{S}_\Model (t^* \mid \Bfx{j}).
    \end{align*}
    Here $\Tilde{S}_\Model$ is calculated using~\ref{eq:reverse_ISD} in Section~\ref{sec:method_step2}.
\end{theorem}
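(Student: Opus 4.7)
The plan is to show that the CiPOT transformation, evaluated at any fixed time $t^*$, factors as the composition of the original survival probability at $t^*$ with a single monotonic map that depends only on the conformal score set $\Gamma_\Model$, not on the subject. Once this factorization is established, preservation of order at $t^*$ for any pair of subjects is immediate, and preservation of AUROC follows directly because the denominator in~\eqref{eq:auroc} is prediction-free.

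First, I would unpack~\eqref{eq:adjustment} and~\eqref{eq:reverse_ISD} to get a closed-form expression for $\Tilde{S}_\Model(t^* \mid \Bfx{i})$. Substituting the definition of $\Tilde{S}_\Model^{-1}$ into the $\inf$ in~\eqref{eq:reverse_ISD} yields
\begin{equation*}
\Tilde{S}_\Model(t^* \mid \Bfx{i}) \;=\; \inf\bigl\{\rho \in (0,1) : \hat{S}_\Model^{-1}\bigl(\Pct(\rho;\Gamma_\Model)\mid \Bfx{i}\bigr) \leq t^* \bigr\}.
\end{equation*}
Since $\hat{S}_\Model(\cdot \mid \Bfx{i})$ is right-continuous and monotonically decreasing (by the assumption in Section~\ref{sec:method_step1} together with Theorem~\ref{theorem:monotonic}), the inner condition $\hat{S}_\Model^{-1}(q \mid \Bfx{i}) \leq t^*$ is equivalent to $q \geq \hat{S}_\Model(t^* \mid \Bfx{i})$. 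Therefore
\begin{equation*}
\Tilde{S}_\Model(t^* \mid \Bfx{i}) \;=\; \inf\bigl\{\rho : \Pct(\rho;\Gamma_\Model) \geq \hat{S}_\Model(t^* \mid \Bfx{i})\bigr\} \;=\; \phi\bigl(\hat{S}_\Model(t^* \mid \Bfx{i})\bigr),
\end{equation*}
where $\phi(q) := \inf\{\rho : \Pct(\rho;\Gamma_\Model) \geq q\}$ is the generalized inverse of the empirical percentile function of $\Gamma_\Model$. Crucially, $\phi$ does not depend on $\Bfx{i}$.

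Second, I would note that $\phi$ is non-decreasing, since increasing the threshold $q$ only shrinks the set whose infimum defines $\phi(q)$ from below. Because the underlying prediction $\hat{S}_\Model(\cdot \mid \Bfx{i})$ is assumed continuous with unbounded range and the conformal scores arise from such continuous predictions (with the auxiliary uniform sampling for censored subjects), $\phi$ is strictly increasing on the relevant range, so the strict inequality $\hat{S}_\Model(t^* \mid \Bfx{i}) < \hat{S}_\Model(t^* \mid \Bfx{j})$ transports to $\Tilde{S}_\Model(t^* \mid \Bfx{i}) < \Tilde{S}_\Model(t^* \mid \Bfx{j})$, giving the stated conclusion. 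The AUROC claim then follows by substituting $\Tilde{S}_\Model$ for $\hat{S}_\Model$ in~\eqref{eq:auroc}: every indicator $\mathbbm{1}[\Tilde{S}_\Model(t^* \mid \Bfx{i}) < \Tilde{S}_\Model(t^* \mid \Bfx{j})]$ equals the corresponding indicator with $\hat{S}_\Model$, while the outer factors depend only on $t_i, t_j, \delta_i$.

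The main obstacle I anticipate is the careful handling of ties and the step-function nature of $\Pct(\cdot;\Gamma_\Model)$. Because $\Gamma_\Model$ is discrete, $\phi$ is itself a step function and can in principle collapse two nearby values of $\hat{S}_\Model(t^* \mid \cdot)$ into the same output. The cleanest way to rule this out is to invoke the continuity assumption on the predicted ISDs stated in Section~\ref{sec:method_step1}, which guarantees that the distinct original survival probabilities at $t^*$ are separated by at least one conformal score with probability one; otherwise the conclusion must be weakened to $\leq$, which still preserves AUROC up to an almost-sure-zero contribution from tied pairs.
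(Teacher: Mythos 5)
Your proposal is correct and takes essentially the same route as the paper's proof: both reduce the claim to the observation that the time-$t^*$ survival probability is transformed by a single, subject-independent monotone map built from $\Gamma_\Model$ -- the paper reads the relation as $\hat{S}_\Model(t^* \mid \Bfx{i}) = \Pct(\Tilde{\rho}_i^*;\Gamma_\Model)$ and inverts, while you write the generalized inverse $\phi$ explicitly -- and then invoke monotonicity plus the prediction-free denominator of the AUROC. Your closing remark about the step-function nature of $\Pct(\cdot;\Gamma_\Model)$ possibly collapsing strict inequalities to ties is in fact a point the paper's proof glosses over (it only surfaces informally in the paper's later discussion of discretization), so your treatment is, if anything, the more careful of the two.
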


\begin{proof}
    The intuition is that if we scale the ISD curves vertically. Then the vertical order of the ISD curves at every time point should not be changed.
    
    Formally, we first represent $\Tilde{S}_\Model (t^* \mid \Bfx{i})$ by $\Tilde{\rho}_i^*$, and represent $\Tilde{S}_\Model (t^* \mid \Bfx{j})$ by $\Tilde{\rho}_j^*$. 
    Then, by applying~\eqref{eq:reverse_ISD} and then~\eqref{eq:adjustment}, we can have 
    \begin{align*}
        \Tilde{S}_\Model^{-1} (\Tilde{\rho}_i^* \mid \Bfx{i}) &= \hat{S}_\Model^{-1} (\Pct (\Tilde{\rho}_i^* ; \Gamma_\Model) \mid \Bfx{i}) \\
        \Tilde{S}_\Model^{-1} (\Tilde{\rho}_j^* \mid \Bfx{j}) &= \hat{S}_\Model^{-1} (\Pct (\Tilde{\rho}_j^* ; \Gamma_\Model) \mid \Bfx{j})
    \end{align*}
    where $\Pct (\Tilde{\rho}_i^* ; \Gamma_\Model)$ is the original predicted probability at $t^*$, \ie $\hat{S}_\Model (t^* \mid \Bfx{i}) =  \Pct (\Tilde{\rho}_i^* ; \Gamma_\Model)$.
    
    Because the $\Pct$ operation and inverse functions are monotonic (Theorem~\ref{theorem:monotonic}), therefore, this theorem holds.
\end{proof}

\texttt{CSD} adjusts the survival curves horizontally (\eg along the time axis). Hence, while the horizontal order of median/mean survival times does not change – as proved in the Theorem 3.1 from~\citet{qi2024conformalized} – the vertical order, represented by survival probabilities, might not be preserved by \texttt{CSD}. 

Let's use a counter-example to illustrate our point.
Figure~\ref{fig:cindex_illus}(b) shows two ISD predictions $\SurvPred{t}{\Bfx{A}}$ and $\SurvPred{t}{\Bfx{B}}$ for subjects A and B. 
Suppose the two ISD curves both have the median survival time at $t=100$, and the two curves only cross once.
Without loss of generality, we assume $\SurvPred{t^*}{\Bfx{A}} < \SurvPred{t^*}{\Bfx{B}}$ holds for all $t^*<100$ and $\SurvPred{t^*}{\Bfx{A}} > \SurvPred{t^*}{\Bfx{B}}$ holds for all $t^*>100$. 
Now, suppose that \texttt{CSD} modified the median survival time from $t=100$ to $t=150$ for both of the predictions. 
Then the order between these two predictions at any time in the range of $t^*\in [100, 150]$ is changed from $\SurvPred{t^*}{\Bfx{A}} > \SurvPred{t^*}{\Bfx{B}}$ to $\SurvPred{t^*}{\Bfx{A}} < \SurvPred{t^*}{\Bfx{B}}$.

It is worth mentioning that in Figure~\ref{fig:iPOT_illust}, a blue curve is partially at the top in (a), intersecting an orange curve around 1.7 days, while the orange curve is consistently at the top in (e). 
This might raise concerns that the pre- and post-adjustment curves do not maintain the same probability ordering at every time point, suggesting a potential violation.
In fact, this discrepancy arises from the discretization step used in our process, which did not capture the curve crossing at 1.7 days due to the limited number of percentile levels (2 levels at $\frac{1}{3}$ and $\frac{2}{3}$) used for simplicity in this visualization.
The post-discretization positioning of the orange curve above the blue curve in Figure~\ref{fig:iPOT_illust}(e) does not imply that the post-processing step alters the relative ordering of subjects. Instead, it reflects the limitations of using only fewer percentile levels. Note that other crossings, such as those at approximately 1.5 and 2.0 days, are captured. In practice, we typically employ more percentile levels (\eg 9, 19, 39, or 49 as in Ablation Study \#2 -- see Appendix~\ref{appendix:ablations}), which allows for a more precise capture of all curve crossings, thereby preserving the relative ordering.

\paragraph{Antolini's C-index}
Time-dependent C-index, $C^{td}$, is a modified version of Harrell's C-index~\cite{antolini2005time}. 
Instead of estimating the discrimination over the point predictions, it estimates the discrimination over the entire curve.
\begin{align*}
    C^{td} (\hat{S})
    &= \frac{\sum_{i,j \in \iset^{\text{test}}}  \delta_i  \cdot \mathbbm{1}[t_i < t_j] \cdot \mathbbm{1}[
    \hat{S} (t_i \mid \Bfx{i}) < \hat{S} (t_i \mid \Bfx{j})] }{\sum_{i,j  \in \iset^{\text{test}}} \delta_i \cdot \mathbbm{1}[t_i < t_j] } .
\end{align*}
Compared with~\eqref{eq:c-index}, the only two differences are: the risk score for the earlier subject $n_i$ is represented by the iPOT value $\hat{S} (t_i \mid \Bfx{i})$, 
and the risk score for the later subject $n_j$ is represented by the predicted survival probability for $j$ at $t_i$, $\hat{S} (t_i \mid \Bfx{j})$.

$C^{td}$ can also be represented by the weighted average of AUROC over all time points (Equation 7 and proof in Appendix 1 from~\citet{antolini2005time}). Suppose given a series of time grid $\{t_0, \ldots, t_k, \ldots, t_K\}$, 
\begin{equation}
\label{eq:c_index_td}
\begin{aligned}
    C^{td} (\hat{S})
    &= \frac{\sum_{k=0}^K \text{AUROC} (\hat{S}, t_k) \cdot \omega(t_k)}{\sum_{k=0}^K \omega(t_k)},
\end{aligned}
\end{equation}
where
\begin{align}
\label{eq:c_td_weight}
    \omega(t_k) = \frac{\sum_{i,j  \in \iset^{\text{test}}}  \delta_i \cdot \mathbbm{1}[t_i \leq t^*] \cdot \mathbbm{1}[t_j > t^*]}{\sum_{i,j  \in \iset^{\text{test}}}  \mathbbm{1}[t_i < t_j]}
\end{align}
The physical meaning of $\omega(t_k)$ measures the proportion of comparable pairs at $t_k$ over all possible pairs. Therefore, we can have the following important property of our \texttt{CiPOT}.

\begin{lemma}
\label{lemma:c_index_td}
    Applying the $\CSDiPOT$ adjustment to the ISD prediction does not affect the time-dependent C-index of the model.
\end{lemma}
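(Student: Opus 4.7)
The plan is to reduce this lemma to Theorem~\ref{theorem:auroc} by exploiting the weighted-average representation of $C^{td}$ given in~\eqref{eq:c_index_td}. First I would note that for any fixed time $t^*$, Theorem~\ref{theorem:auroc} asserts that the \CSDiPOT adjustment preserves the relative order of survival probabilities across subjects, i.e.\ $\hat{S}_\Model(t^*\mid \Bfx{i}) < \hat{S}_\Model(t^*\mid \Bfx{j})$ implies $\Tilde{S}_\Model(t^*\mid \Bfx{i}) < \Tilde{S}_\Model(t^*\mid \Bfx{j})$. Since $\mathrm{AUROC}(\cdot, t^*)$ in~\eqref{eq:auroc} depends on the predictions only through the binary ordering $\mathbbm{1}[\hat{S}(t^*\mid\Bfx{i}) < \hat{S}(t^*\mid\Bfx{j})]$, this immediately gives $\mathrm{AUROC}(\Tilde{S}_\Model, t^*) = \mathrm{AUROC}(\hat{S}_\Model, t^*)$ for every $t^*$.

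Next I would invoke the decomposition~\eqref{eq:c_index_td}, choosing the time grid to include every distinct observed time $t_k$ in $\iset^{\text{test}}$ (so that the Antolini sum is captured exactly). The weights $\omega(t_k)$ defined in~\eqref{eq:c_td_weight} are functions purely of the observed $(t_i, \delta_i)$ pairs, not of the survival predictions, so they are identical for $\hat{S}_\Model$ and $\Tilde{S}_\Model$. Combining the pointwise AUROC equality from the previous step with the invariance of the weights gives
\begin{equation*}
    C^{td}(\Tilde{S}_\Model) \ = \ \frac{\sum_k \mathrm{AUROC}(\Tilde{S}_\Model, t_k)\,\omega(t_k)}{\sum_k \omega(t_k)} \ = \ \frac{\sum_k \mathrm{AUROC}(\hat{S}_\Model, t_k)\,\omega(t_k)}{\sum_k \omega(t_k)} \ = \ C^{td}(\hat{S}_\Model),
\end{equation*}
which proves the lemma.

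The main subtlety is handling ties. Theorem~\ref{theorem:auroc} states strict preservation of the strict ordering, but $C^{td}$ and AUROC also count ties (often with weight $1/2$ under the standard convention). I would address this by observing that both the empirical percentile operator $\Pct(\cdot;\Gamma_\Model)$ and the inverse survival map $\hat{S}_\Model^{-1}(\cdot\mid \Bfx{i})$ used inside the \CSDiPOT adjustment~\eqref{eq:adjustment} are monotonic (by Theorem~\ref{theorem:monotonic}), hence the equality case is preserved as well: $\hat{S}_\Model(t^*\mid \Bfx{i}) = \hat{S}_\Model(t^*\mid \Bfx{j})$ implies $\Tilde{S}_\Model(t^*\mid \Bfx{i}) = \Tilde{S}_\Model(t^*\mid \Bfx{j})$. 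With strict inequalities and equalities both preserved, every indicator appearing in the AUROC numerator/denominator at $t_k$ is unchanged, and the argument above goes through without any ambiguity in the tie-breaking convention.
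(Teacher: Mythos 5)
Your proof is correct and follows essentially the same route as the paper's: both invoke Theorem~\ref{theorem:auroc} for pointwise preservation of AUROC, note that the weights $\omega(t_k)$ in~\eqref{eq:c_td_weight} depend only on the observed $(t_i,\delta_i)$ pairs, and combine the two via the weighted-average decomposition~\eqref{eq:c_index_td}. Your extra attention to the tie case is a refinement the paper's one-line proof omits, but it does not alter the argument.
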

\begin{proof}
    Because the $\omega(t_k)$ in~\ref{eq:c_td_weight} is independent of the prediction, and also given Theorem~\ref{theorem:auroc}, $\text{AUROC}(\hat{S}, t_k) = \text{AUROC}(\Tilde{S}, t_k)$, we can easily have $C^{td}(\hat{S}) = C^{td}(\Tilde{S})$ from the formula in~\ref{eq:c_index_td}.
\end{proof}

As in the previous discussion, \texttt{CSD} does not preserve the vertical order represented by survival probabilities. Therefore, it is natural to see that  \texttt{CSD} also does not preserve the $C^{td}$ performance.

\subsection{More on the significantly miscalibrated models}
\label{appendix:miscalibrated_models}

\begin{figure}
    \centering
    \includegraphics[width=\textwidth]{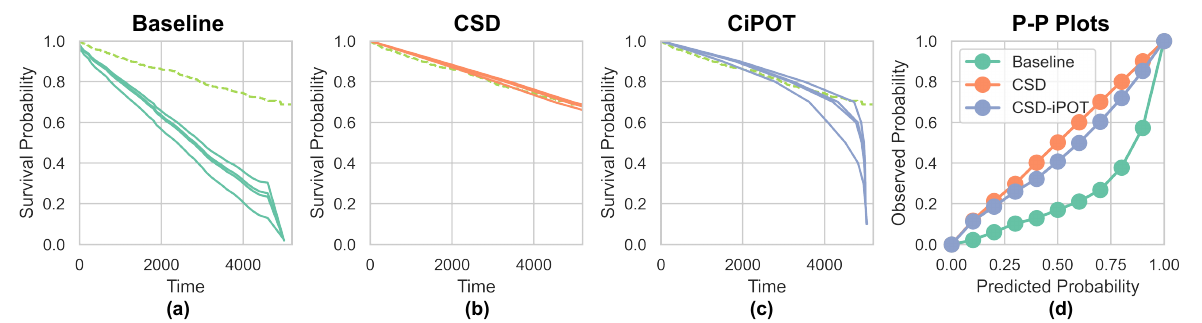}
    \caption{An real example using \textit{DeepHit} as the baseline, on the \texttt{FLCHAIN} dataset. The predicted curves in the panels are for the same 4 subjects in the test set. The dashed green line represents the KM curve on the test set. (a) Non post-processed baseline. (b) \texttt{CSD} method on \textit{DeepHit}. (c) \CSDiPOT method on \textit{DeepHit}. (d) P-P plots comparison of the three methods. }
    \label{fig:deephit_example}
\end{figure}

Compared to \texttt{CSD}, our \CSDiPOT method exhibits weaker performance on the \emph{DeepHit} baselines. This section explores the three potential reasons for this disparity.

First of all, \emph{DeepHit} tends to struggle with calibration for datasets that have high KM ending probability~\citep{qi2022personalized} and has poor calibration compared to other baselines for most datasets~\cite{kamran2021estimating, qi2024conformalized}.
This is because the \emph{DeepHit} formulation assumes that, by the end of the predefined $t_{\text{max}}$, every individual must already have had the event.
Hence, this formulation incorrectly estimates the true underlying survival distribution (often overestimates the risks) for individuals who might survive beyond $t_{\text{max}}$. 

Furthermore, apart from the standard likelihood loss, \emph{DeepHit} also contains a ranking loss term that changes the undifferentiated indicator function in the C-index calculation in~\eqref{eq:c-index} with an exponential decay function. 
This modification potentially enhances the model's discrimination power but compromises its calibration.

Lastly, Figure~\ref{fig:deephit_example}(a) shows an example prediction using \texttt{DeepHit} on the \texttt{FLCHAIN} (72.48\% censoring rate with KM curve ends at 68.16\%).
The solids curves represent the ISD prediction from \emph{DeepHit} for 4 randomly selected subjects in the test set.
And the dashed green curve represents the KM curve for the entire test set.
It is evident that \emph{DeepHit} tends to overestimate the subjects' risk scores (or underestimate the survival probabilities), see Figure~\ref{fig:deephit_example}(d).
Specifically, at the last time point ($t=5215$), KM predicts that most of the instances ($68.16\%$) should survive beyond this time point.
However, the ISD predictions from \emph{DeepHit} show everyone must die by this last point ($\hat{S}_{\text{DeepHit}} (5215 \mid \Bfx{i}) = 0$ for all $\Bfx{i}$ -- see Figure~\ref{fig:deephit_example}(a)). 
This clearly violates the unbounded range assumption proposed in Section~\ref{sec:method_step1}, which assumes $\hat{S}_\Model (t \mid \Bfx{i}) >0$ for all $t\geq0$.
This violation is the main reason why \CSDiPOT exhibits weaker performance on the \texttt{DeepHit} baseline.

\texttt{CSD} can effectively solve this overestimate issue (Figure~\ref{fig:deephit_example}(b)), as it shift the curves horizontally, \ie no upper limit for right-hand side for shifting. 
\texttt{CiPOT}, on the other hand, scale the curves vertically.
In such a case, the scaling must be performed within the percentile $[0, 1]$.
Furthermore, \CSDiPOT does not have any intervention for the starting and ending probability ($\rho=1$ and $\rho=0$) of the curves.
So no matter how the post-process changes the percentile in the middle of the curves, the starting and ending points should not be changed, just like the curves in Figure~\ref{fig:deephit_example}(c), whereas the earlier parts of the curve are similar as \texttt{CSD}'s, the last parts gradually drop to 0. 


Consequently, while \CSDiPOT significantly improves upon \emph{DeepHit}, as shown in Figure \ref{fig:deephit_example}(d), it still underperforms compared to \texttt{CSD} when dealing with models that are notably miscalibrated like \textit{DeepHit}.

\section{Evaluation metrics}
\label{appendix:metric}

We use Harrell's C-index~\cite{harrell1984regression} for evaluating discrimination performance.
The formula is presented in~\eqref{eq:c-index}.
Because we are dealing with a model that may not have proportional hazard assumption, therefore, as recommended~\cite{harrell1996multivariable}, we use the negative value of the predicted median survival time as the risk score, \ie $\hat{\eta}_i = \hat{S}^{-1} (0.5 \mid \Bfx{i})$.

The calculation of marginal calibration for a censored dataset is presented in Appendix~\ref{appendix:calibration_intro} and calculated using~\eqref{eq:cal_formal} and~\eqref{eq:cal_margin}.

Conditional calibration, $\text{Cal}_{\text{ws}}$, is estimated using~\eqref{eq:cal_ws}.
Here we present more details on the implementation. 
First of all, the evaluation involves further partitioning the testing set into exploring and exploiting datasets. 
Note that this partition does not need to be stratified (wrt to time $t_i$ and event indicator $\delta_i$). 
Furthermore, for the vectors $\boldsymbol{v}$, we sampling $M$ i.i.d. vectors on the unit sphere in $\mathbb{R}^d$. 
Generally, we want to select a high value for $M$ to enable all possible exploration.
For \textit{small} or \textit{medium} datasets, we use $M=1000$.
However, due to the computational complexity, for \textit{large} datasets, we gradually decrease the value of $M\in[100, 1000]$ to get an acceptable evaluating time.
We set $\kappa = 33\%$ in~\eqref{eq:cal_ws} for finding the $\mathbb{S}_{\boldsymbol{v}, a, b}$, that means we want to find a worst-slab that contains a least $33\%$ of the subjects in the testing set.

Integrated Brier score (IBS) measures the accuracy of the predicted probabilities over all times. IBS for survival prediction is typically defined as the integral of Brier scores (BS) over time points:
\begin{equation*}
\begin{aligned}
    &\text{IBS} (\hat{S}; t_{\text{max}})
    = \frac{1}{t_{\text{max}}}  \cdot \int_0^{t_{\text{max}}} \text{BS}(t) \ dt, \\
    &= \frac{1}{|\iset^{\text{test}}|} \sum_{i \in \iset^{\text{test}}} \frac{1}{t_{\text{max}}}  \cdot \int_0^{t_{\text{max}}} \left(
       \frac{\delta_i \cdot \mathbbm{1} [t_i \leq t] \cdot \Surv{t}{\Bfx{i}}^2 }{G(t_i)} 
     + \frac{\mathbbm{1} [t_i > t] \cdot (1 - \Surv{t}{\Bfx{i}} )^2 }{G(t)}  \right) \ dt,
\end{aligned}
\end{equation*}
where $G(t)$ is the non-censoring probability at time $t$. It is estimated with KM on the censoring distribution (flip the event indicator of data), and its reciprocal $\frac{1}{G(t)}$ is referred to as the inverse probability censoring weights (IPCW). 
$t_{\text{max}}$ is defined as the maximum event time of the combined training and validation datasets. 

Mean absolute error calculates the time-to-event precision, \ie the average error of predicted times and true times. Here we use MAE-pseudo observation (MAE-PO)~\cite{qi2023an} for handling censorship in the calculation.
\begin{gather*}
    \text{MAE}_{\text{PO}} (\{\hat{t}_i\}_{i\in \iset})
    = \frac{1}{\sum_{i \in \iset^{\text{test}}} \ \omega_i} \ \sum_{i \in \iset^{\text{test}}} \ \omega_i \times \left| (1 - \delta_i) \cdot e_{\text{PO}}(t_i, \iset^{\text{test}}) + \delta_i \cdot t_i - \hat{t}_i \right| \ , \\
    \text{where} \quad e_{\text{PO}}(t_i, \iset^{\text{test}}) = 
    \begin{cases}
    N \times \E_t \left[ S_{\text{KM}(\iset^{\text{test}})} (t) \right] - (N-1) \times \E_t \left[ S_{\text{KM}(\iset^{\text{test}} - i)} (t) \right] & \text{if} \quad \delta_i = 0,\\
    t_i & \text{otherwise}, 
    \end{cases}\\
    \text{and } \quad \omega_i = 
    \begin{cases}
    1 - \delta_i \cdot S_{\text{KM}(\iset^{\text{test}})} (t_i) & \text{if} \quad \delta_i = 0,\\
    1 & \text{otherwise}.
    \end{cases} 
\end{gather*}
Here $S_{\text{KM}(\iset^{\text{test}})} (t)$ represents the population level KM curve estimated on the entire testing set $\iset^{\text{test}}$, and $S_{\text{KM}(\iset^{\text{test}} - i)} (t)$ represent the KM curves estimated on all the test subjects but exclude subject $i$.

C-index, $\text{Cal}_{\text{margin}}$ (also called D-cal), ISB, and MAE-PO are implemented in the \texttt{SurvivalEVAL} package~\cite{qi2023survivaleval}. 
For $\text{Cal}_{\text{ws}}$, please see our Python code for implementation.

\section{Experimental Details}
\label{appendix:exp}

\subsection{Datasets}
\label{appendix:datasets}

We provide a brief overview of the datasets used in our experiments.

In this study, we evaluate the effectiveness of \CSDiPOT across 15 datasets. 
Table~\ref{tab:data_comp} summarizes the data statistics.
Compared to the datasets used in~\cite{qi2024conformalized}, we have added \texttt{HFCR}, \texttt{WHAS}, \texttt{PdM}, \texttt{Churn}, \texttt{FLCHAIN}, \texttt{Employee}, and \texttt{MIMIC-IV}.
Specifically, we use the original \texttt{GBSG} dataset, as opposed to the modified version by~\citet{katzman2018deepsurv} used in~\cite{qi2024conformalized}, which has a higher censoring rate and more features. 
For the rest of the datasets, we employ the same preprocessing methods as~\citet{qi2024conformalized} -- see Appendix E of their paper for details about these datasets. 
Below, we describe these newly added datasets:
\begin{table}[ht]
\centering
\caption{Key statistics of the datasets. We categorize datasets into \emph{small}, \emph{medium}, and \emph{large}, based on the number of instances, using thresholds of 1,000 and 10,000 instances. The bolded number represents datasets with a high percentage of censorship ($\geq 60\%$) or its KM estimation ends at a high probability ($\geq 50\%$). Numbers in parentheses indicate the number of features after one-hot encoding. }
\label{tab:data_comp}
\begin{tabular}{lrcrcc}
\toprule
Dataset  & \#Sample    & Censor Rate & Max $t$  & \#Feature & KM End Prob. \\   \midrule
\texttt{HFCR}~\cite{chicco2020machine, misc_heart_failure_clinical_records_519}    & 299         &  \textbf{67.89\%}  & 285      &  11   & \textbf{57.57\%}             \\
\texttt{PBC}~\cite{therneau2000modeling, r-survival-package}      & 418         & \textbf{61.48\%}  & 4,795     &  17  & 35.34\%                 \\
\texttt{WHAS}~\cite{hosmer2008applied}    & 500         & 57.00\%  & 2,358     &  14     & 0\%                \\
\texttt{GBM}~\cite{weinstein2013cancer, haider2020effective}      & 595         & 17.23\%  & 3,881      &  8 (10)   & 0\%                \\
\texttt{GBSG}~\cite{royston2013external, r-survival-package}     & 686       & 56.41\%  & 2,659      & 8   & 34.28\%               \\
\hdashline
\texttt{PdM}~\cite{pysurvival_cite}      & 1,000       & \textbf{60.30\%}  & 93       & 5 (8)  & 0\%                    \\
\texttt{Churn}~\cite{pysurvival_cite}    & 1,958       & 52.40\%  & 12       & 12 (19)     & 24.36\%                  \\
\texttt{NACD}~\cite{haider2020effective}     & 2,396       & 36.44\%  & 84.30      & 48    & 12.46\%                    \\
\texttt{FLCHAIN}~\cite{dispenzieri2012use, r-survival-package}  & 7,871       & \textbf{72.48\%}  & 5,215      & 8 (23)   & \textbf{68.16\%}                  \\
\texttt{SUPPORT}~\cite{knaus1995support}  & 9,105       & 31.89\%  & 2,029      & 26 (31)   & 24.09\%                 \\
\hdashline
\texttt{Employee}~\cite{pysurvival_cite} & 11,991      & \textbf{83.40\%}  & 10         & 8 (10)   & \textbf{50.82\%} \\
\texttt{MIMIC-IV}~\cite{johnson2022mimic, qi2023an}    & 38,520    & \textbf{66.65\%}  & 4404       & 93  & 0\%       \\
\texttt{SEER-brain}~\cite{qi2024conformalized} & 73,703    & 40.12\%  & 227        & 10    & 26.58\%         \\
\texttt{SEER-liver}~\cite{qi2024conformalized} & 82,841    & 37.57\%  & 227        & 14    & 18.01\%         \\
\texttt{SEER-stomach}~\cite{qi2024conformalized} & 100,360 & 43.40\%  & 227        & 14    & 28.23\%         \\
\bottomrule
\end{tabular}
\end{table}

Heart Failure Clinical Record dataset (\texttt{HFCR})~\cite{chicco2020machine} contains medical records of 299 patients with heart failure, aiming to predict mortality from left ventricular systolic dysfunction. This dataset can be downloaded from UCI Machine Learning Repository~\cite{misc_heart_failure_clinical_records_519}.

Worcester Heart Attack Study dataset (\texttt{WHAS})~\cite{hosmer2008applied} contains 500 patients with acute myocardial infarction, focusing on the time to death post-hospital admission.
The data was already post-processed and can be downloaded from the \texttt{scikit-survival} package~\cite{sksurv}. 

Predictive Maintenance (\texttt{PdM}) contains information on 1000 equipment failures. 
The goal is to predict the time to equipment failure and therefore help alert the maintenance team to prevent that failure.
It includes 5 features that describe the pressure, moisture, temperature, team information (the team who is running this equipment), and equipment manufacturer.
We apply one-hot encoding on the team information and equipment manufacturer features. 
The dataset can be downloaded from the \texttt{PySurvival} package~\cite{pysurvival_cite}.

The customer churn prediction dataset (\texttt{Churn})  focuses on predicting customer attrition.
We apply one-hot encoding on the US region feature and exclude subjects who are censored at time 0.
The dataset can be downloaded from the \texttt{PySurvival} package~\cite{pysurvival_cite}.

Serum Free Light Chain dataset (\texttt{FLCHAIN}) is a stratified random sample containing half of the subjects from a study on the relationship between serum free light chain (FLC) and mortality~\cite{dispenzieri2012use}. 
This dataset is available in R’s \texttt{survival} package~\cite{survival-package}.
Upon downloading, we apply a few preprocessing steps.
First, we remove the three subjects with events at time zero.
We impute missing values for the ``creatinine'' feature using the median of this feature. 
Additionally, we eliminate the chapter feature (a disease description for the cause of death by chapter headings of the ICD code) because this feature is only available for deceased (uncensored) subjects -- hence, knowing this feature will be equivalent to leaking the event indicator label to the model.

\texttt{Employee} dataset contains employee activity information that can used to predict when an employee will quit.
The dataset can be downloaded from the \texttt{PySurvival} package~\cite{pysurvival_cite}.
It contains duplicate entries; after dropping these duplicates, the number of subjects in the dataset is reduced from 14,999 to 11,991.
We also apply one-hot encoding to the department information.

\texttt{MIMIC-IV} database~\cite{johnson2023mimic} provides critical care data information for patients within the hospital.
We focus on a cohort of all-cause mortality data curated by~\cite{qi2023an}, featuring patients who survived at least 24 hours post-ICU admission.
The event of interest, death, is derived from hospital records (during hospital stay) or state records (after discharge).
The features are laboratory measurements within the first 24 hours after ICU admission.

German Breast Cancer Study Group (\texttt{GBSG})~\cite{royston2013external} contains 686 patients with node-positive breast cancer, complete with prognostic variables.
This dataset is available in R’s \texttt{survival} package~\cite{survival-package}.
While the original \texttt{GBSG} offers a higher rate of censoring and more features, \cite{qi2024conformalized} utilized a modified version of \texttt{GBSG} from~\cite{katzman2018deepsurv}, merged with uncensored portions of the \texttt{Rotterdam} dataset, resulting in fewer features, a lower censor rate, and a larger sample size.


\begin{figure}[ht]
    \centering
    \includegraphics[width=\textwidth]{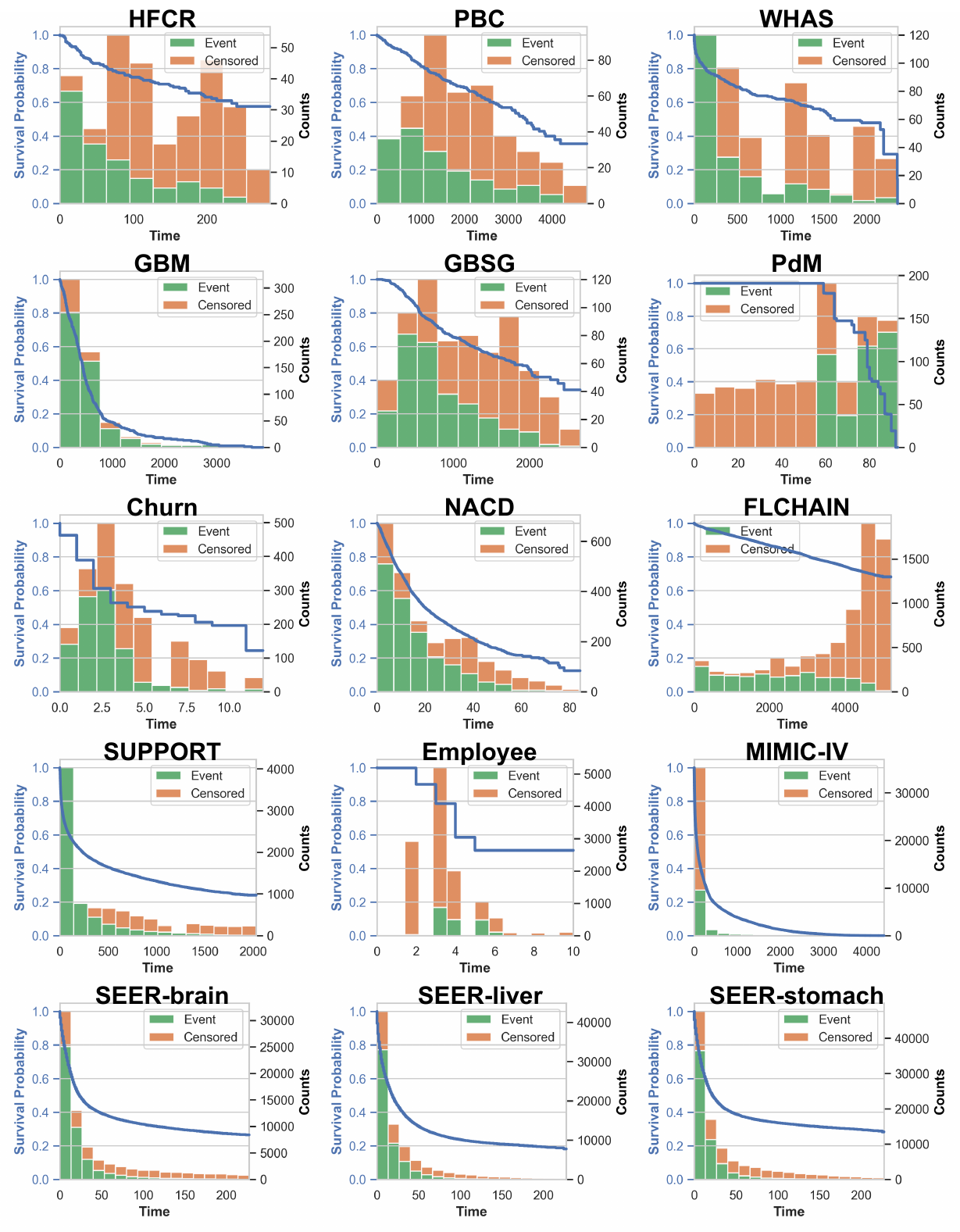}
    \caption{Kaplan Meier curves and event/censored histograms for all 15 datasets.}
    \label{fig:datasets}
\end{figure}

Figure~\ref{fig:datasets} shows the Kaplan-Meier (KM) estimation (blue curves) for all 15 datasets, alongside the event and censored histograms (represented by green and orange bars, respectively) in a stacked manner, where the number of bins is determined by the Sturges formula: $\lceil \log (|\Data|) + 1 \rceil$.

\subsection{Baselines}
\label{appendix:baselines}

In this section, we detail the implementation of the seven baseline models used in our experiments, consistent with~\citet{qi2024conformalized}.
\begin{itemize}
    \item Accelerate Failure Time (\emph{AFT})~\cite{stute1993consistent} with Weibull distribution is a linear parametric model that uses a small $l_2$ penalty on parameters during optimization. It is implemented in \texttt{lifelines} packages~\cite{lifelines}.
    \item Gradient Boosting Cox model (\emph{GB})~\cite{hothorn2006survival} is an ensemble method that employs 100 boosting stages with a partial likelihood loss~\cite{cox1975partial} for optimization and 100\% subsampling for fitting each base learner. The model is implemented in \texttt{scikit-survival} packages~\cite{sksurv}.
    \item Neural Multi-Task Logistic Regression (\emph{N-MTLR})~\cite{fotso2018deep} is a discrete-time model which is an NN-extension of the linear multi-task logistic regression model (MTLR)~\cite{yu2011learning}. The number of discrete times is determined by the square root of the number of uncensored patients. We use quantiles to divide those uncensored instances evenly into each time interval, as suggested in~\cite{jin2015using, haider2020effective}. We utilize the \emph{N-MTLR} code provided in~\citet{qi2024conformalized}.
    \item \emph{DeepSurv}~\cite{katzman2018deepsurv} is a NN-extension of the Cox proportional hazard model (CoxPH)~\cite{cox1972regression}. To make ISD prediction, we use the Breslow method~\cite{breslow1975analysis} to estimate the population-level baseline hazard function. We utilize the \emph{DeepSurv} code provided in~\citet{qi2024conformalized}.
    \item \emph{DeepHit}~\cite{lee2018deephit} is also a discrete-time model where the number and locations of discrete times are determined in the same way as the \emph{N-MTLR} model (the square root of numbers of uncensored patients, and quantiles). The model is implemented in \texttt{pycox} packages~\cite{kvamme2019time}. 
    \item \emph{CoxTime}~\cite{kvamme2019time} is a non-proportional neural network extension of the CoxPH. The model is implemented in \texttt{pycox} packages~\cite{kvamme2019time}. 
    \item Censored Quantile Regression Neural Network (\emph{CQRNN})~\cite{pearce2022censored} is a quantile regression-based method. We add the bootstrap-rearranging post-processing~\cite{chernozhukov2010quantile} to correct non-monotonic predictions. We use the \emph{CQRNN} code provided in~\citet{qi2024conformalized}.
\end{itemize}

\subsection{Hyperparameter settings for the main experiments}
\label{appendix:hyperparameter}
\paragraph{Full hyperparameter details for NN-Based survival baselines}

In the experiments, all neural network-based methods (including \emph{N-MTLR}, \emph{DeepSurv}, \emph{DeepHit}, \emph{CoxTime}, and \emph{CQRNN}) used the same architecture and optimization procedure.


\begin{itemize}
    \item Training maximum epoch: 10000
    \item Early stop patients: 50
    \item Optimizer: Adam
    \item Batch size: 256
    \item Learning rate: 1e-3
    \item Learning rate scheduler: CosineAnnealingLR
    \item Learning rate minimum: 1e-6
    \item Weight decay: 0.1
    \item NN architecture: [64, 64]
    \item Activation function: ReLU
    \item Dropout rate: 0.4
\end{itemize}

\paragraph{Full hyperparameter details for \texttt{CSD} and \CSDiPOT}

\begin{itemize}
    \item Interpolation: \{Linear, PCHIP\}
    \item Extrapolation: Linear
    \item Monotonic method:  \{Ceiling, Flooring, Booststraping\}
    \item Number percentile: \{9, 19, 39, 49\}
    \item Conformal set: \{Validation set, Training set $+$ Validation set\}
    \item Repetition parameter: \{3, 5, 10, 100, 1000\}
\end{itemize}

\begin{figure}[ht]
    \centering
    \includegraphics[width=\textwidth]{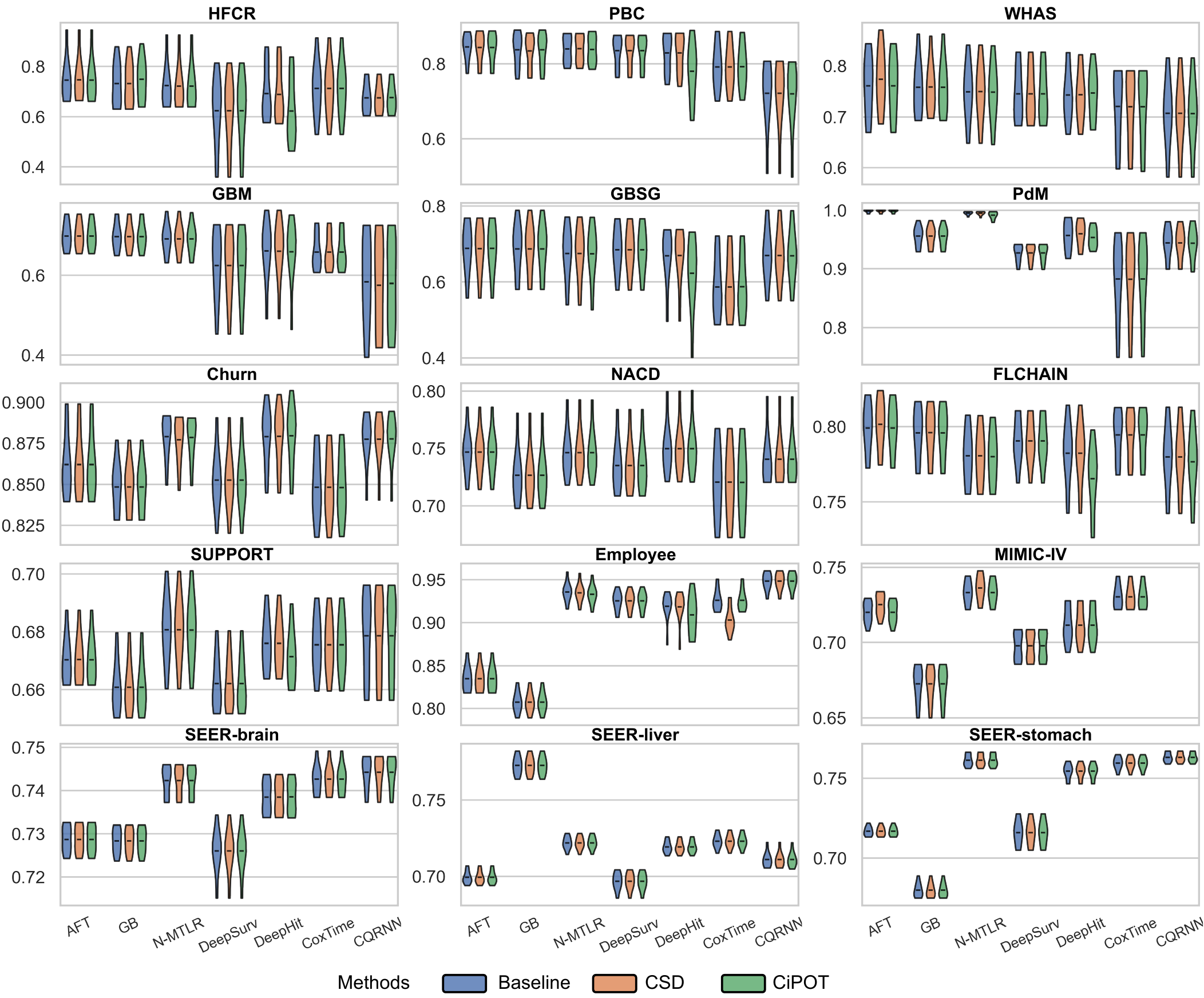}
    \caption{Violin plots of C-index performance of our method (\texttt{CiPOT}) and benchmarks. A higher value indicates better performance. The shape of each violin plot represents the probability density of the performance scores, with the black bar inside the violin indicating the mean performance.}
    \label{fig:cindex_all}
\end{figure}

\begin{figure}[ht]
    \centering
    \includegraphics[width=\textwidth]{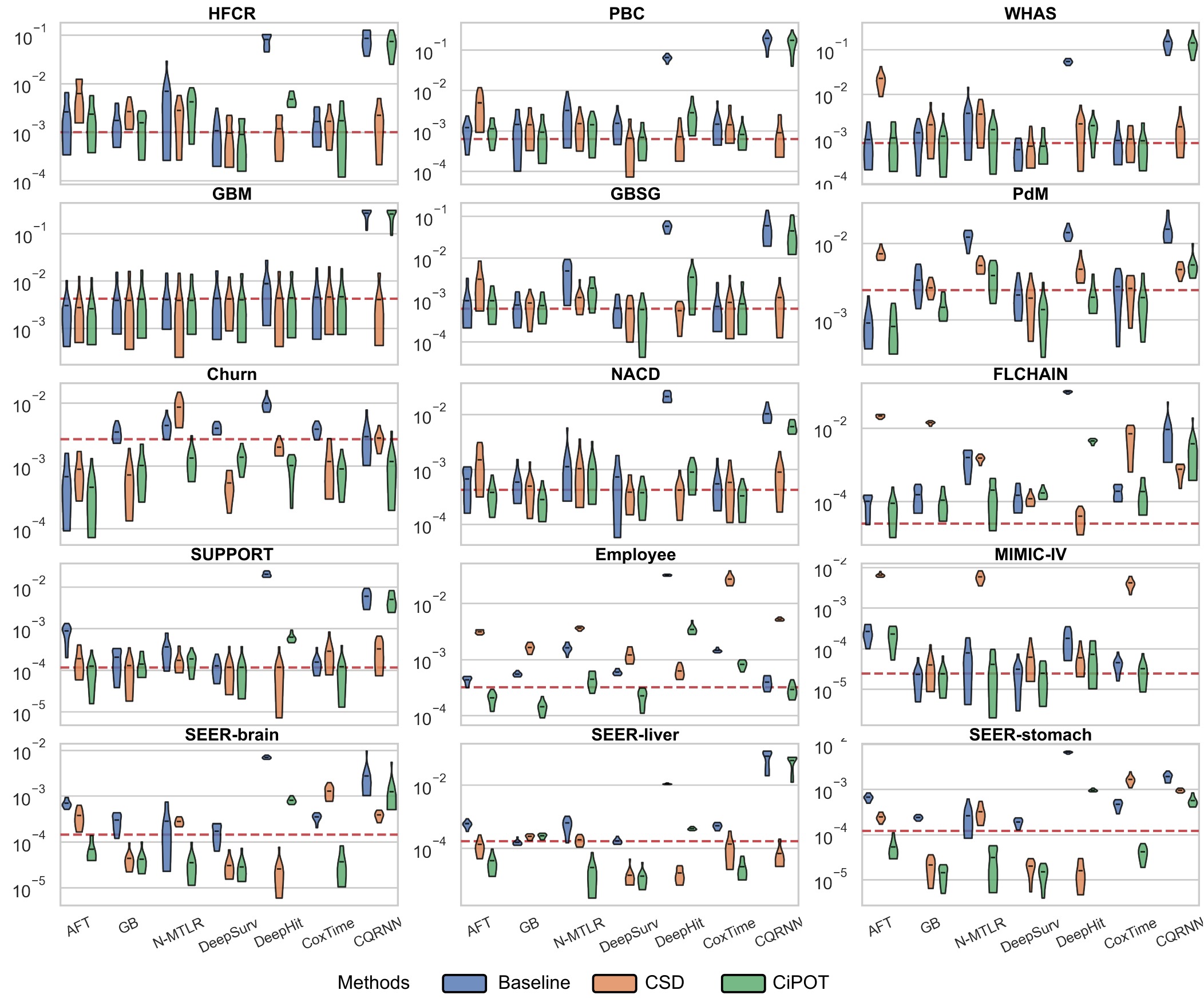}
    \caption{Violin plots of $\text{Cal}_{\text{margin}}$ performance of our method (\texttt{CiPOT}) and benchmarks. A lower value indicates superior performance. The shape of each violin plot represents the probability density of the performance scores, with the black bar inside the violin indicating the mean performance.
    The red lines represent the mean calibration performance for KM, serving as an empirical lower limit.}
    \label{fig:cal_all}
\end{figure}

\begin{figure}[ht]
    \centering
    \includegraphics[width=\textwidth]{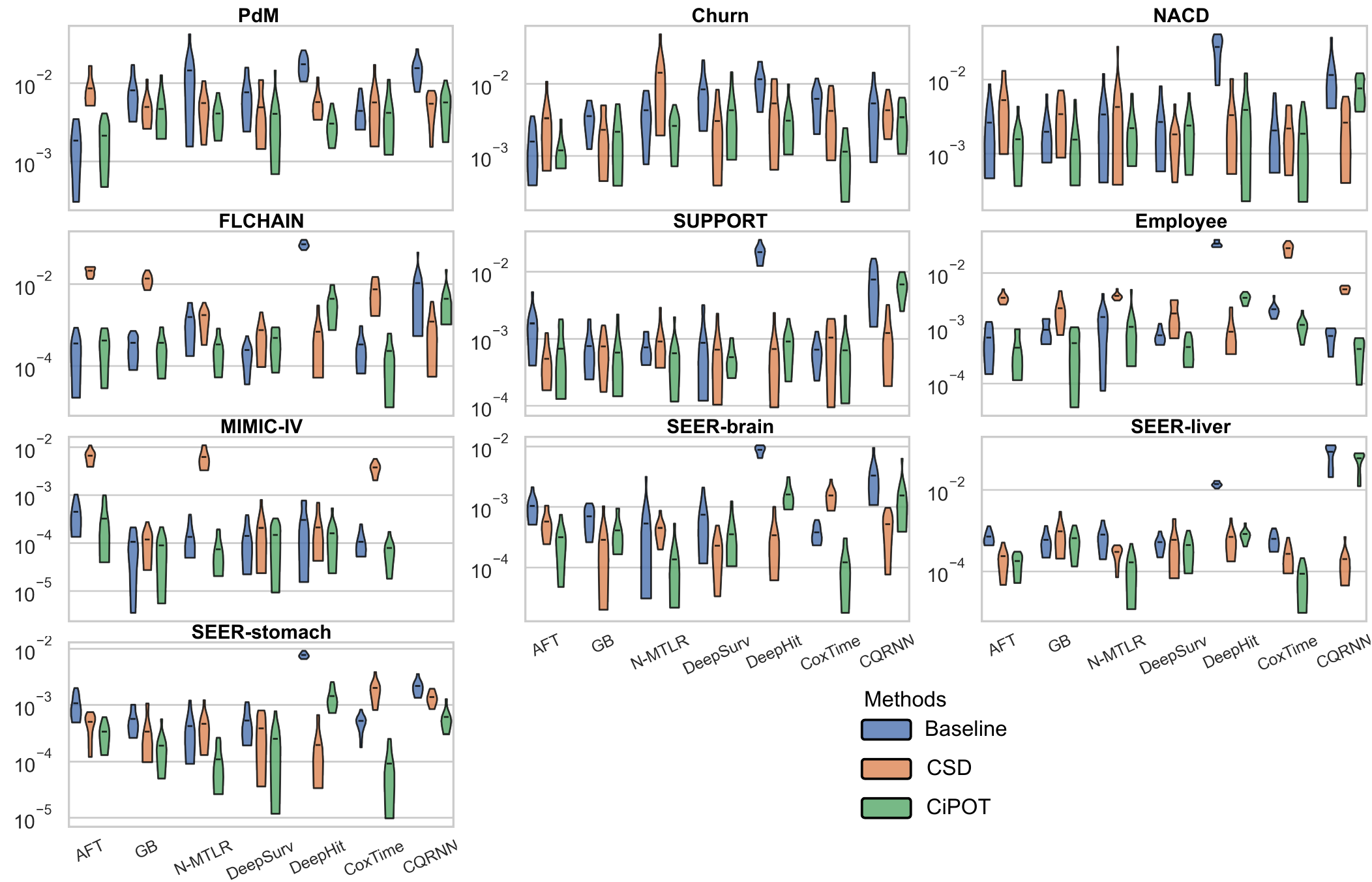}
    \caption{Violin plots of $\text{Cal}_{\text{ws}}$ performance of our method (\texttt{CiPOT}) and benchmarks. A lower value indicates superior performance. The shape of each violin plot represents the probability density of the performance scores, with the black bar inside the violin indicating the mean performance.}
    \label{fig:cal_ws_all}
\end{figure}

\begin{figure}[ht]
    \centering
    \includegraphics[width=\linewidth]{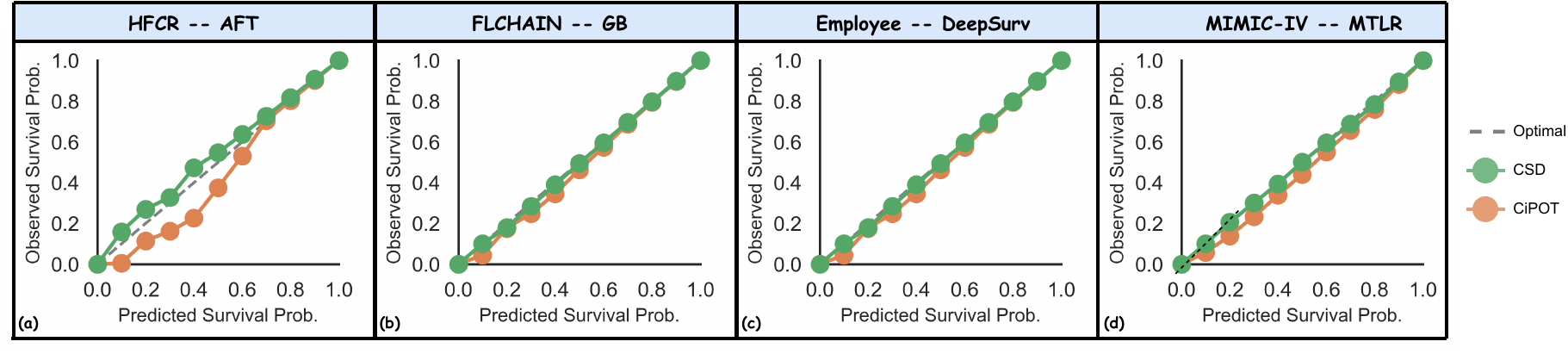}
    \caption{Case studies of the conditional calibration between \texttt{CSD} and \texttt{CiPOT}. (a) For the \textbf{elder age} subgroup on \texttt{HFCR}, with AFT as the baseline; (b) For \textbf{women} subgroup on \texttt{FLCHAIN}, with GB as the baseline; (c) For the \textbf{high salary} subgroup on \texttt{Employee}, with DeepSurv as the baseline; (d) For the \textbf{non-white-racial} subgroup on \texttt{MIMIC-IV}, with MTLR as the baseline. All four cases show that \CSDiPOT is close to the ideal, while \texttt{CSD} is not.}
    \label{fig:case_study}
\end{figure}

\begin{figure}[ht]
    \centering
    \includegraphics[width=\textwidth]{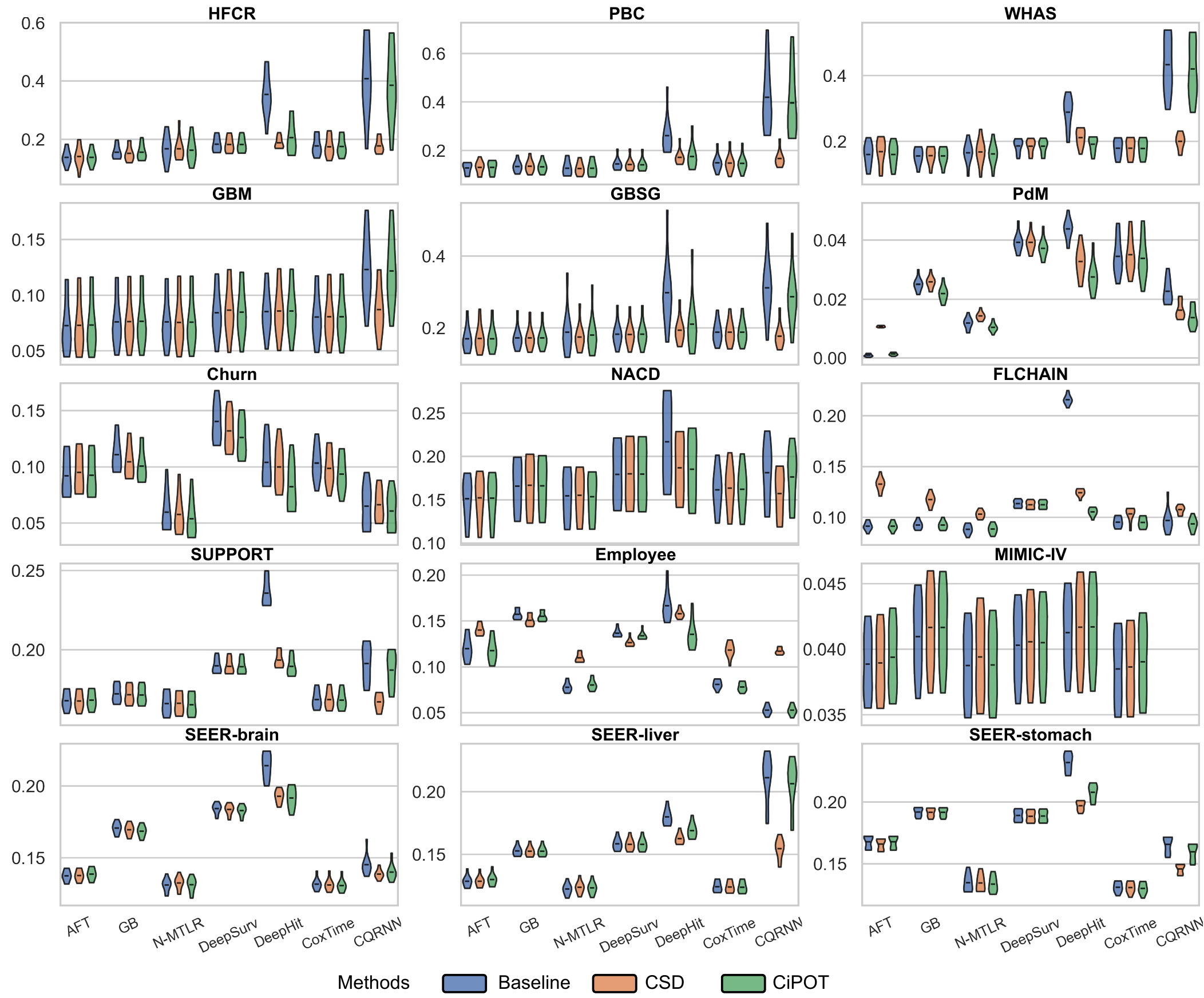}
    \caption{Violin plots of IBS performance of our method (\texttt{CiPOT}) and benchmarks. A lower value indicates superior performance. The shape of each violin plot represents the probability density of the performance scores, with the black bar inside the violin indicating the mean performance.}
    \label{fig:ibs}
\end{figure}

\begin{figure}[ht]
    \centering
    \includegraphics[width=\textwidth]{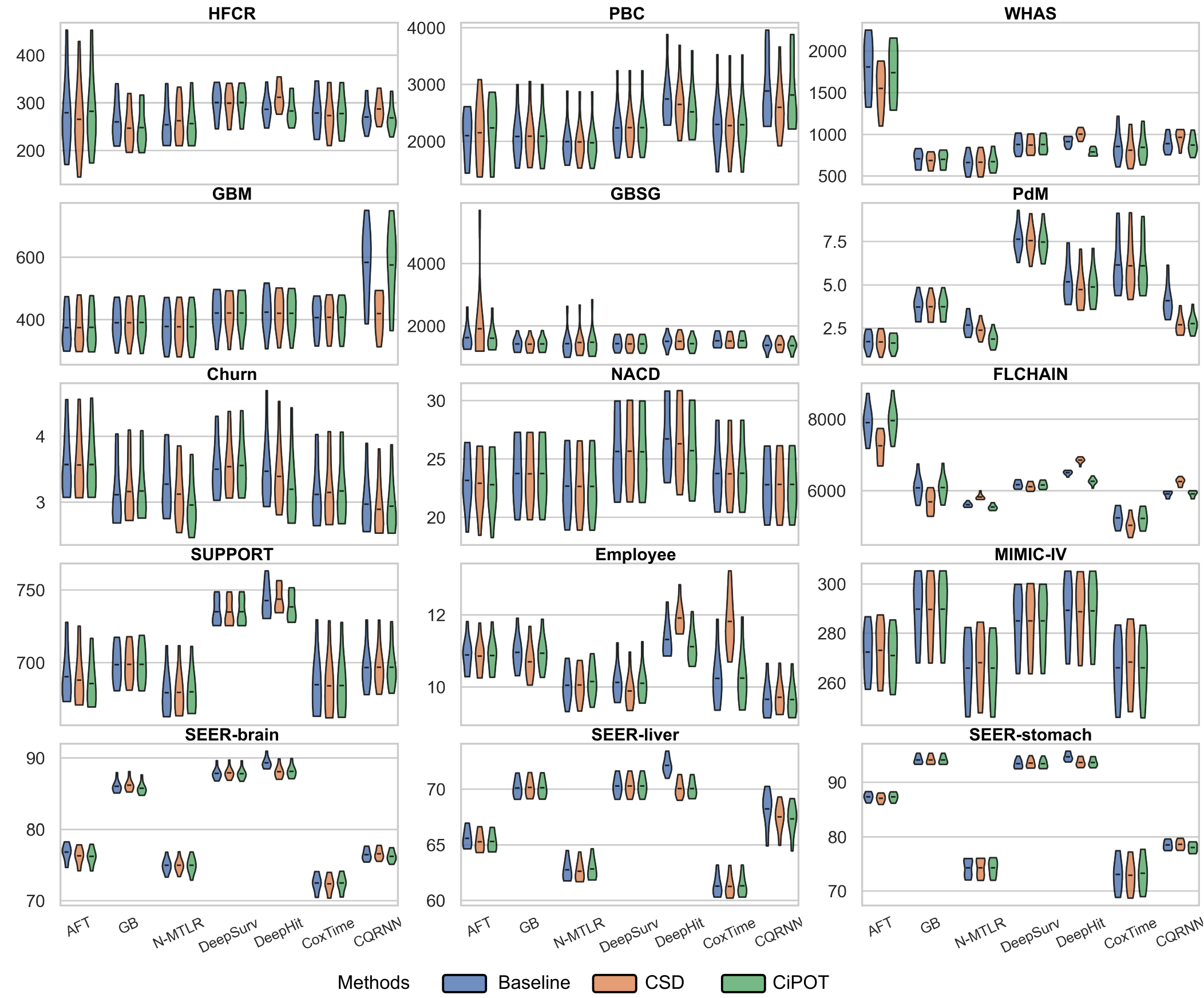}
    \caption{Violin plots of MAE-PO performance of our method (\texttt{CiPOT}) and benchmarks. A lower value indicates superior performance. The shape of each violin plot represents the probability density of the performance scores, with the black bar inside the violin indicating the mean performance.}
    \label{fig:mae}
\end{figure}

\subsection{Main results}
\label{appendix:results}

In this section, we present the comprehensive results from our primary experiment, which focuses on evaluating the performance of \CSDiPOT compared to the original non-post-processed baselines and \texttt{CSD}.

Note that for the \texttt{MIMIC-IV} datasets, \emph{CQRNN} fails to converge with any hyperparameter setting possibly due to the extremely skewed distribution. As illustrated in Figure~\ref{fig:datasets}, 80\% of event and censoring times happen within the first bin, and the distribution exhibits long tails extending to the 17th bin.

\paragraph{Discrimination}
In Figure~\ref{fig:cindex_all}, we demonstrate the discrimination performance of \CSDiPOT compared to benchmark methods, as measured by C-index.
The panels are ordered by dataset size, from smallest to largest.
In each panel, the performance of
the non-post-processed baselines is shown with blue bars, \texttt{CSD} with orange bars, and \CSDiPOT with green bars.
We can see from the figure that the three methods exhibit basically the same performance across all datasets.
Indeed, as we can see from the summary in Table~\ref{tab:summary}, \CSDiPOT ties with the baselines in 75 out of 104 times.
In the remaining 29 times they do not tie, none of them are significantly different from each other.
For the 22 times \CSDiPOT underperforms, most of them are wrt to \emph{DeepHit} or \emph{CQRNN} baselines (\eg \emph{DeepHit} for \texttt{HFCR}, \texttt{PBC}, \texttt{GBSG}, etc.).
And \CSDiPOT can even outperform 7 times (\eg \emph{GB} in \texttt{HFCR}).

\paragraph{Marginal calibration}
In Figure~\ref{fig:cal_all}, we present the marginal calibration performance of \CSDiPOT versus the benchmark models.
The arrangement and color schemes of the panels are consistent with those used previously. 
For marginal calibration evaluation, we add a ``dummy'' model -- Kaplan-Meier (KM) curve (depicted by red dashed lines in each panel) -- to serve as the empirical lower limit. 
We calculate each KM curve using the training set and apply it identically to all test samples.
It is called a ``dummy'' because it lacks the ability to discriminate between individuals. 
However, it asymptotically achieves perfect marginal calibration (see Appendix B in~\cite{qi2024conformalized}).

Our results in Figure~\ref{fig:cal_all} indicate a significant improvement in calibration performance with \CSDiPOT over both baselines and \texttt{CSD}.
Overall, our method outperforms the baselines in 95 out of 104 times (Table~\ref{tab:summary}).
In the remaining 9 times where it does not outperform the baseline, only once is the difference statistically significant, and the marginal calibration score in this single case (GB for \texttt{SEER-liver}) is still close to the empirical lower bound (red dashed line).

For datasets characterized by higher censoring rates or high KM ending probabilities (\texttt{HFCR}, \texttt{PBC}, \texttt{PdM}, \texttt{FLCHAIN}, \texttt{Employee}, \texttt{MIMIC-IV}), our method shows superior performance. 
For those datasets, \texttt{CSD} tends to produce non-calibrated predictions compared with baselines.
This outcome likely stems from the inaccuracies of the KM-sampling method under conditions of high censor rates or ending probabilities, as discussed in Section~\ref{sec:problems}. 
Nonetheless, our approach still improves the marginal calibration scores for the baselines under these circumstances.

\paragraph{Conditional calibration}
Figure~\ref{fig:cal_ws_all} showcases the conditional calibration performance of \CSDiPOT versus the benchmarks, evaluated using $\text{Cal}_{\text{ws}}$.

Compared with marginal calibration, we cannot use any method to establish the empirical upper bound.
One might think that once we establish the worst slab, we can calculate the KM curve on this slab and then use it as the empirical upper bound for the conditional calibration.
However, identifying a universal worst slab across all the models is impractical.
That means different models exhibit varying worst-slabs. 
\eg \emph{N-MTLR} can have the worst calibration for overweighted males,
while \emph{DeepSurv} might perform relatively good calibration for this group but poorly for disabled cardiovascular patients.

The results in Figure~\ref{fig:cal_all} indicate a significant improvement in conditional calibration performance using our method over both baselines and \texttt{CSD}.
Overall, our method improves the conditional calibration performance of baselines 64 out of 69 times, with significant improvements 29 times out of 64 (Table~\ref{tab:summary}).

Our method outperforms \texttt{CSD} in 51 out of 69 cases.
Most instances where our method underperforms are relative to the \emph{DeepHit} and \emph{CQRNN} baselines (the reasons are explained in Appendix~\ref{appendix:miscalibrated_models}). 
To evaluate the practical benefits of \CSDiPOT over \texttt{CSD}, we present four case studies in Figure~\ref{fig:case_study}. 
The figure showcases 4 concrete examples where \texttt{CSD} (orange) leads to significant miscalibration within certain subgroups (\ie elderly patients, women, high-salary, and non-white-racial), but \CSDiPOT (green) can effectively generate more conditional calibrated predictions which are closer to the optimal line. 
Moreover, all four examples illustrate that the miscalibration of \texttt{CSD} consistently occurs in low-probability regions, corroborating our assertion in Section~\ref{sec:method_theory} that the conditional Kaplan-Meier sampling method employed by \texttt{CSD} is problematic if the tail of the distribution is unknown.

\paragraph{IBS}

Figure~\ref{fig:ibs} illustrates the IBS performance of \CSDiPOT versus the benchmarks.
According to~\citet{degroot1983comparison}, BS can be decomposed into a calibration part and a discrimination part. 
This implies that IBS, an integrated version of BS, assesses aspects of calibration.
Our results in Figure~\ref{fig:ibs} and Table~\ref{tab:summary} show that our method improves the IBS score in most of the cases (63 wins, 18 ties, and 23 losses).

\paragraph{MAE-PO}

Our results in Figure~\ref{fig:mae} and Table~\ref{tab:summary} show that our method improves the MAE-PO in general (54 wins, 33 ties, and 17 losses).

\subsection{Computational analysis}
\label{appendix:computation}

\paragraph{Space complexity}

Although most of the computational cost arises from ISD interpolation and extrapolation, most of the memory cost of our method stems from storing the conformity scores into an array and performing the $\Pct(\cdot)$ operation. 

Let's reuse the symbol $N=|\Data^{\text{con}}|$ as the number of subjects in the conformal set, $\mathcal{P} = \{\rho_1, \rho_2, \ldots\}$ is the predefined
discretized percentiles, so that $|\mathcal{P}|$ be the number of predefined percentiles. 
And $R$ is the repetition parameter for KM-sampling.

\texttt{CSD}~\cite{qi2024conformalized} process first discretized the ISD curves into percentile times (PCTs) using $\mathcal{P}$. 
Then it calculates a conformity score for each individual at every percentile level $\rho$ (\ie each individual will contribute for $|\mathcal{P}|$ conformity scores).
Lastly, the ``KM-sampling'' process involves repeating each individual by $R$ times.
Therefore, the memory complexity of \texttt{CSD} is $O(N\cdot|\mathcal{P}|\cdot R)$.
In contrast, our \CSDiPOT method only calculates one iPOT score (as the conformity score) for each duplicated individual.  After sampling, the total memory complexity of \CSDiPOT is $O(N\cdot R)$.

Let consider an example of using \texttt{SEER-stomach} dataset in our main experiment in Appendix~\ref{appendix:results}, with a repetition parameter $R=1000$, and number of predefined percentiles $|\mathcal{P}| = 19$\footnote{As recommend by~\citet{qi2023an}, the performance reaches the optimal when $|\mathcal{P}| = 9$ or $19$. $|\mathcal{P}|=19$ means that the predefined percentile levels are $\mathcal{P} = \{5\%, 10\%, \ldots, 95\%\}$. }. 
\texttt{CSD}'s conformity score matrix requires $N \times |\mathcal{P}| \times R = (100360 * 0.9) \times 19 \times 1000 \times 8 \ \text{bytes} \approx 13.73 \ \text{Gb}$. 
Our \CSDiPOT method needs only $N \times R \approx 0.72$ Gb to store the conformity score.
If we change to an even larger dataset or increase $|\mathcal{P}|$, \texttt{CSD} may become infeasible.

Other memory costs, 
\eg storing features and ISD predictions, incur negligible memory costs.
This is because the number of feature $d$, and the length of the ISDs are much smaller than the repeat parameters $R$.

\begin{figure}
    \centering
    \includegraphics[width=\textwidth]{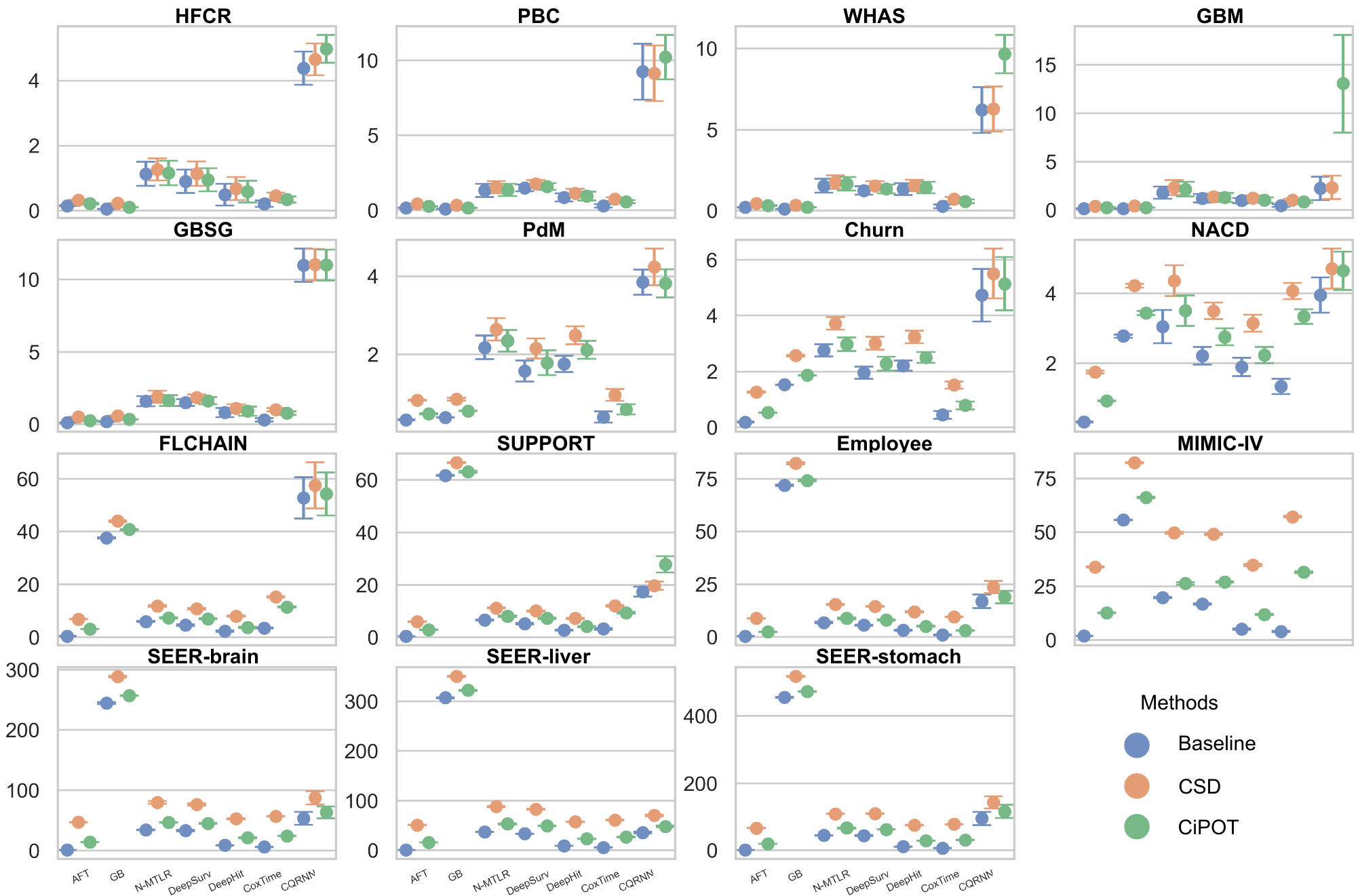}
    \caption{Training time comparisons (mean with 95\% confidence interval). }
    \label{fig:train_time}
\end{figure}

\begin{figure}
    \centering
    \includegraphics[width=\textwidth]{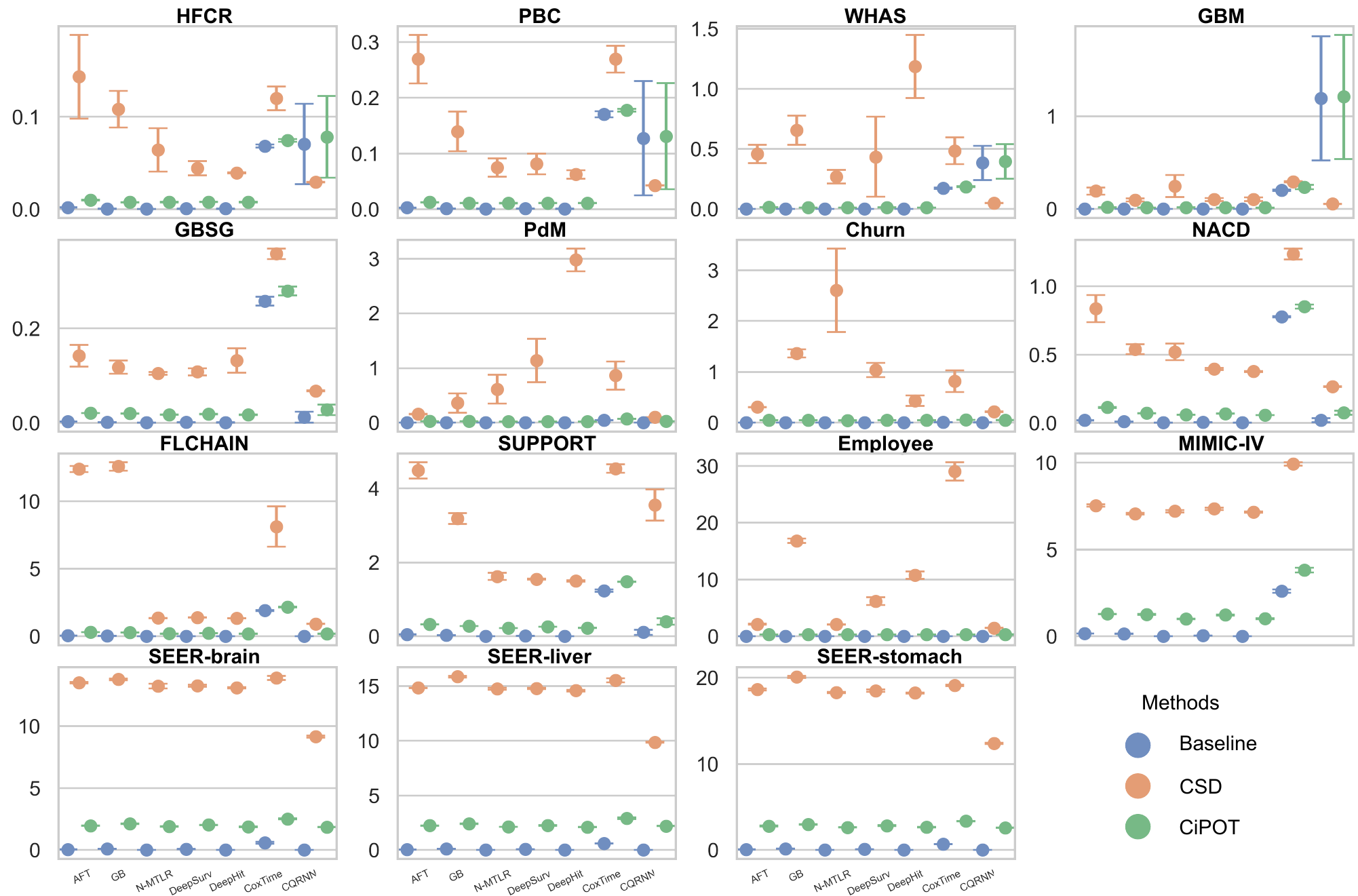}
    \caption{Inference time comparisons (mean with 95\% confidence interval).}
    \label{fig:infer_time}
\end{figure}

\paragraph{Time complexity}
The primary sources of time complexity in \CSDiPOT are two-fold:
\begin{itemize}
    \item ISD interpolation and extrapolation (line 4 in Algorithm~\ref{alg:csd_ipot});
    \item An optional monotonic step for the \emph{CQRNN} model (see discussion below).
\end{itemize}
Note that other time complexity, \eg running the $\Pct$ operation or adjusting the ISD curves using lines 14-15 in Algorithm~\ref{alg:csd_ipot}, incur negligible time cost.

Here we analyze two kinds of time complexity: training complexity and inference complexity.
Figure~\ref{fig:train_time} and Figure~\ref{fig:infer_time} empirically compare the training time and inference time of the \CSDiPOT method with those of non-post-processed baselines and \texttt{CSD} across 10 random splits. Both \texttt{CSD} and \CSDiPOT use the following hyperparameters to enable a fair comparison:

\begin{itemize}
    \item Interpolation: PCHIP
    \item Extrapolation: Linear
    \item Monotonic method:  Bootstrapping
    \item Number of percentile: 9
    \item Conformal set: Training set $+$ Validation set
    \item Repetition parameter: 1000
\end{itemize}

Each point in Figure~\ref{fig:train_time} represents an average training time of the method, where for the non-post-processed baselines it is purely the training time of the survival model, while for \texttt{CSD} and \CSDiPOT, it is the training time of the baselines plus the running time for the conformal training/learning.
In Figure~\ref{fig:train_time}, we see that \CSDiPOT can significantly reduce the additional time imposed by the \texttt{CSD} step on all survival analysis models (\emph{AFT}, \emph{GB}, \emph{N-MTLR}, \emph{DeepSurv}, \emph{DeepHit}, and \emph{CoxTime}).
The only exceptional is the quantile-based method \emph{CQRNN}, where for the 4 small datasets (\texttt{HFCR}, \texttt{PBC}, \texttt{WHAS}, and \texttt{GBM}) \CSDiPOT actually increase the training time.
This is because, for those datasets, the quantile curves predicted by \emph{CQRNN} are not monotonic.
To address this, we attempted three methods before we directly apply \CSDiPOT, including ceiling, flooring, and bootstrap rearranging~\citet{chernozhukov2010quantile} -- with the bootstrap method proving most effective, albeit at a significant computational cost.

Similarly, each point in Figure~\ref{fig:infer_time} represents an average inference time of the method, where for the non-post-processed baselines it is purely the inference time of the ISD predictions from the survival model, while for \texttt{CSD} and \CSDiPOT, it is the inference time of the ISDs plus the post-processing time.
We observe the inference time follows the same trend as the training time.
The extra cost for the 4 small datasets (\texttt{HFCR}, \texttt{PBC}, \texttt{WHAS}, and \texttt{GBM}) is still due to the non-monotonic issues.
In those cases, the predicted curves by \emph{CQRNN} become monotonic after applying \texttt{CSD}. However, surprisingly, after applying \CSDiPOT, the curve remains non-monotonic.
That is why the inference time for baselines and \CSDiPOT is higher than \texttt{CSD} on those four datasets.

\subsection{Ablation Studies}
\label{appendix:ablations}

\begin{figure}[ht]
    \centering
    \includegraphics[width=\textwidth]{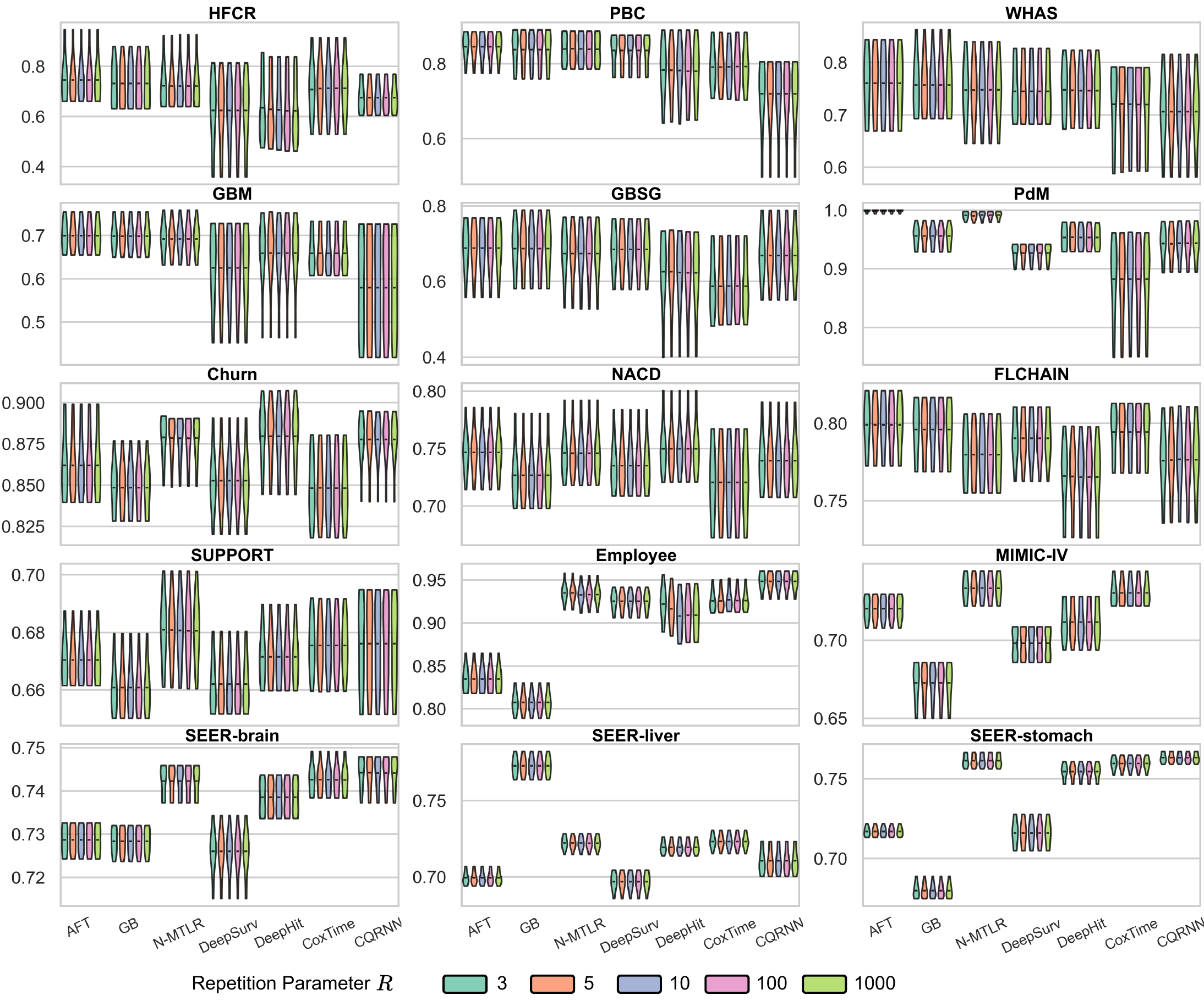}
    \caption{Violin plots of C-index performance for \textbf{Ablation Study \#1: impact of repetition parameter}. 
    A higher value indicates superior performance.}
    \label{fig:cindex_n_sampling}
\end{figure}

\begin{figure}[ht]
    \centering
    \includegraphics[width=\textwidth]{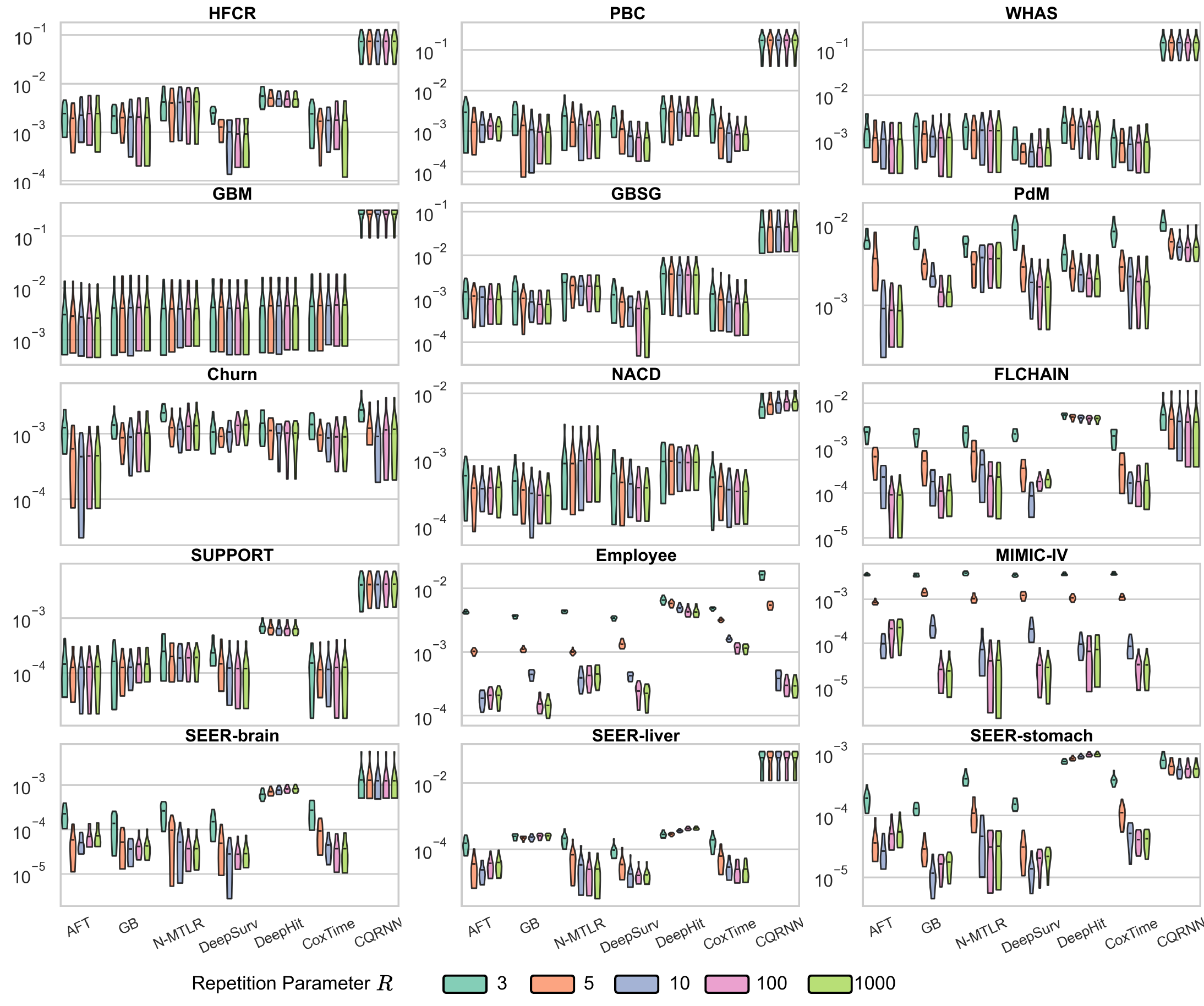}
    \caption{
    Violin plots of $\text{Cal}_{\text{margin}}$ performance for \textbf{Ablation Study \#1: impact of repetition parameter}. 
    A lower value indicates superior performance.}
    \label{fig:dcal_n_sampling}
\end{figure}

\begin{figure}[ht]
    \centering
    \includegraphics[width=\textwidth]{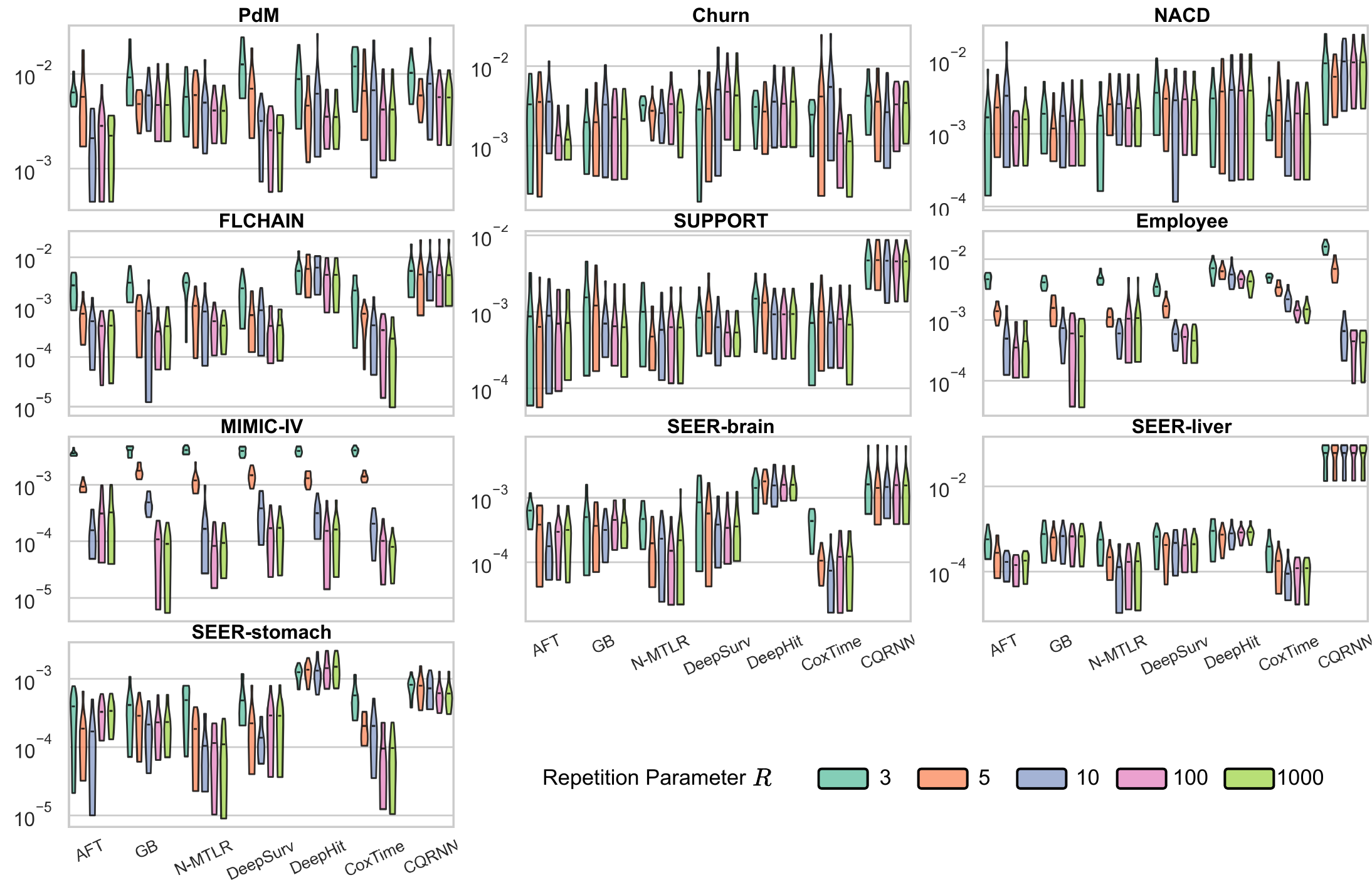}
    \caption{
    Violin plots of $\text{Cal}_{\text{ws}}$ performance for \textbf{Ablation Study \#1: impact of repetition parameter}. 
    A lower value indicates superior performance.}
    \label{fig:dcal_ws_n_sampling}
\end{figure}

\paragraph{Ablation Study \#1: impact of repetition parameter $R$}


As we proved in Theorem~\ref{theorem:margin_cal}, for a censored subject $j$, if we sample its iPOT value using $\mathcal{U}_{0, \hat{S}_\Model (c_i \mid \Bfx{i})}$, we can asymptotically achieve the exact marginal calibration.
However, empirically, due to limited sample sizes, we find that only making one sampling for each censored subject will not achieve a good calibration performance.
Instead, we propose the method of repetition sampling, \ie sampling $R$ times from $\mathcal{U}_{0, \hat{S}_\Model (c_i \mid \Bfx{i})}$.

This ablation study tries to find how this repetition parameter $R$ affects the performance, in terms of both discrimination and calibration.
We gradually increase $R$, from 3 to 1000, and assume the performance should converge at a certain level.
This ablation study uses the following hyperparameters to enable a fair comparison:
\begin{itemize}
    \item Interpolation: PCHIP
    \item Extrapolation: Linear
    \item Monotonic method:  Bootstrapping
    \item Number of percentile: 9
    \item Conformal set: Training set $+$ Validation set
    \item Repetition parameter: {3, 5, 10, 100, 1000}
\end{itemize}

Figure~\ref{fig:cindex_n_sampling}, Figure~\ref{fig:dcal_n_sampling}, and Figure~\ref{fig:dcal_ws_n_sampling} present the C-index, marginal calibration, and conditional calibration performances, respectively.

\textbf{TL;DR} The repetition parameter value has barely any impact on the C-index, and increasing $R$ can benefit the marginal and conditional calibration, with convergence observed around $R=100$.

In Figure~\ref{fig:cindex_n_sampling}, we see that the C-index for \CSDiPOT using 5 different repetition numbers has no visible differences, for almost all baselines and all datasets.
The only exception is there are slight differences for \textit{DeepHit} baseline on \texttt{HFCR} and \texttt{Employee} datasets, where higher $R$ will slightly decrease the C-index performance insignificantly.


In Figure~\ref{fig:dcal_n_sampling}, we can clearly see the repetition parameter has a great impact on the marginal calibration.
For most datasets, a higher repetition will significantly improve (decrease) the marginal calibration score. 
This trend is more clear for high censoring rate datasets (\texttt{FLCHAIN}, \texttt{Employee}, \texttt{MIMIC-IV},  etc),
while for low censoring rate datasets (\texttt{GBM}, \texttt{NACD}, \texttt{SUPPORT}, which have censoring rates less than 40\%), the trends still exist but the differences are not very significant.
The marginal calibration performance usually converges when $R \geq 100$ (the pink and green violins are almost the same for all datasets and all baselines).
As a higher repetition parameter will increase the memory usage and computational complexity (Appendix~\ref{appendix:computation}), we suggest using $R=100$.

In Figure~\ref{fig:dcal_ws_n_sampling}, the trends in conditional calibration are similar to those in marginal calibration. 
Most of the differences are more significant for high censoring rate datasets and less significant for low censoring rate datasets.
And the conditional calibration performance also converges at $R \geq 100$ (the pink and green violins are almost the same for all datasets and all baselines).

\paragraph{Ablation Study \#2: impact of predefined percentiles $\mathcal{P}$}

\begin{figure}
    \centering
    \includegraphics[width=\textwidth]{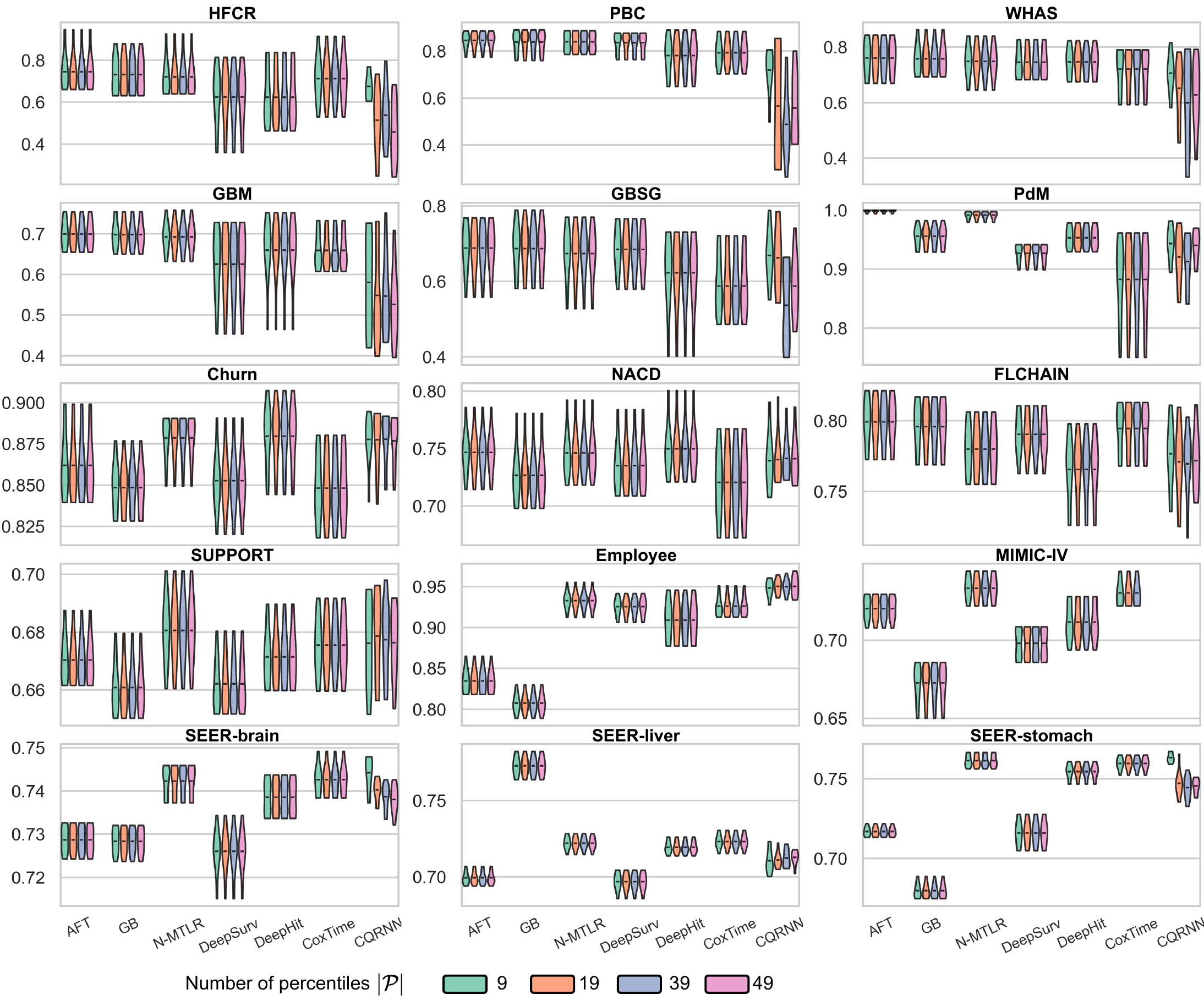}
    \caption{Violin plots of C-index performance for \textbf{Ablation Study \#2: impact of predefined percentiles}. 
    A higher value indicates superior performance. }
    \label{fig:cindex_n_percentile}
\end{figure}

\begin{figure}
    \centering
    \includegraphics[width=\textwidth]{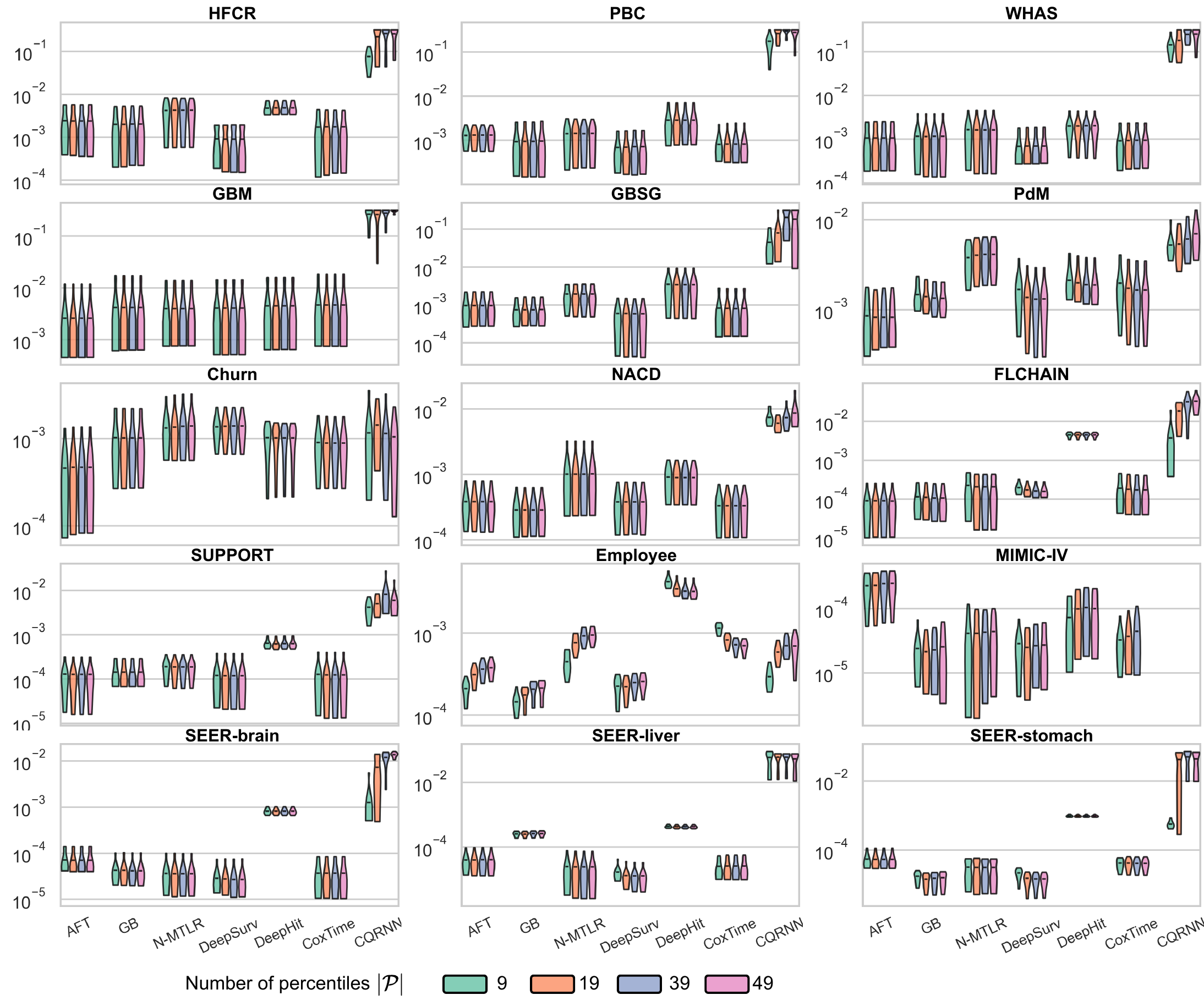}
    \caption{Violin plots of $\text{Cal}_{\text{margin}}$ performance for \textbf{Ablation Study \#2: impact of predefined percentiles}. 
    A lower value indicates superior performance. }
    \label{fig:dcal_n_percentile}
\end{figure}

\begin{figure}
    \centering
    \includegraphics[width=\textwidth]{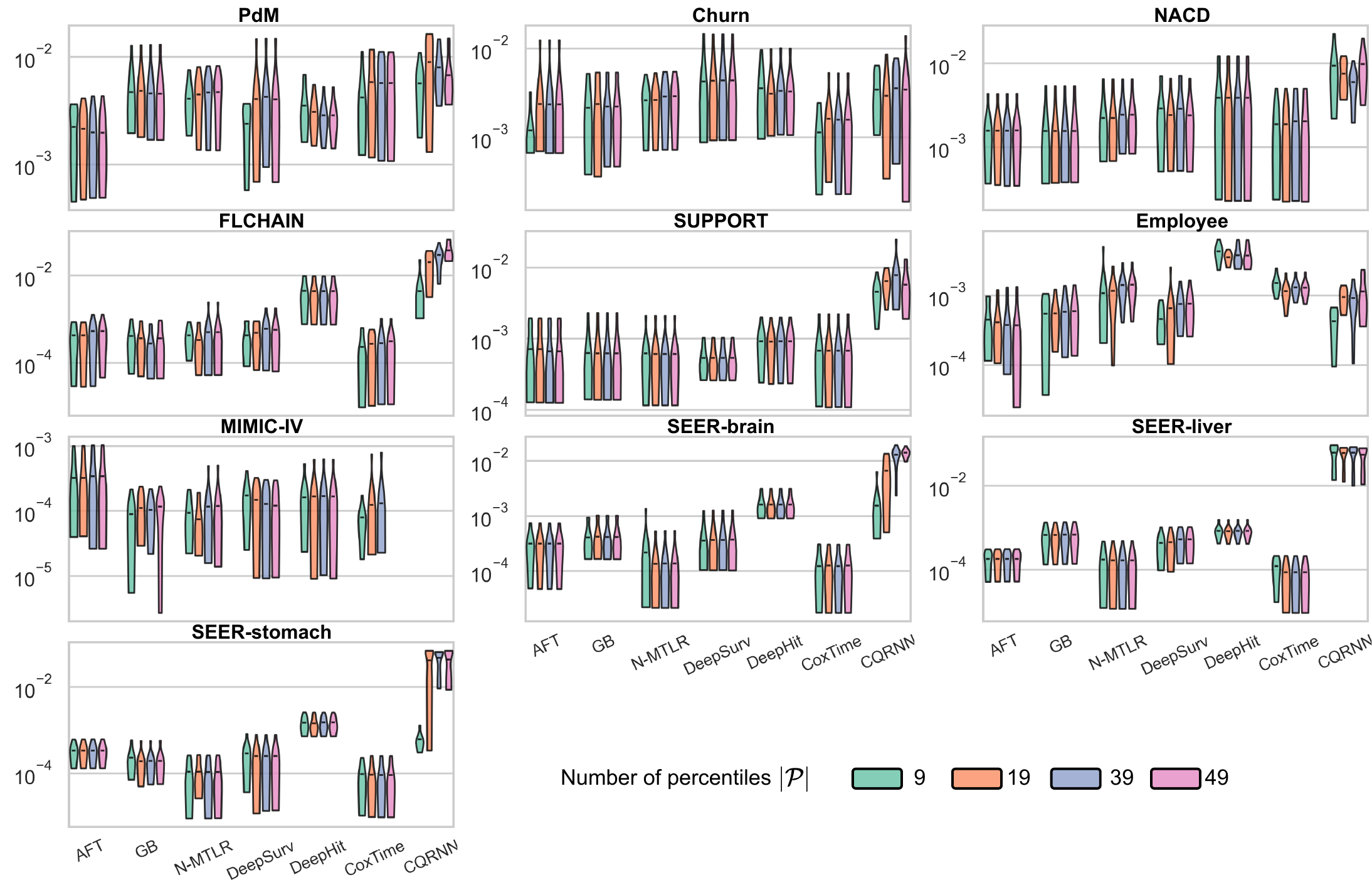}
    \caption{Violin plots of $\text{Cal}_{\text{ws}}$ performance for \textbf{Ablation Study \#2: impact of predefined percentiles}. 
    A lower value indicates superior performance.}
    \label{fig:dcal_ws_n_percentile}
\end{figure}

Different choices of $\mathcal{P}$ may lead to slightly different survival distributions, all of which allow us to obtain provable distribution calibration, as discussed next. 
Theoretically, discretizing a continuous curve into a series of discrete points may result in some loss of information. 
However, this can be mitigated by using a sufficiently fine grid for percentile discretization.
Therefore, we anticipate that if we select $\mathcal{P}$ that contains finer grid percentiles, the performance (both calibration and discrimination) will be better.

Previous studies have commonly employed 10 equal-width probability intervals for calculating distribution calibration~\cite{haider2020effective, goldstein2020x, qi2024conformalized}, making an intuitive starting choice for 9 percentile levels at $\mathcal{P}= \{10\%, 20\%, \ldots, 90\%\}$\footnote{Note that $0\%$ and $100\%$ are excluded because $\Tilde{S}_\Model^{-1}(1 \mid \Bfx{i}) = 0$ and $\Tilde{S}_\Model^{-1}(0 \mid \Bfx{i}) = t_{\text{max}} <\infty$ are fixed endpoints.}.

We compare this standard setting with 19 percentile levels ($\mathcal{P}= \{5\%, 10\%, \ldots, 95\%\}$), 39 percentile levels ($\mathcal{P}= \{2.5\%, 5\%, \ldots, 97.5\%\}$), and 49 percentile levels ($\mathcal{P}= \{2\%, 4\%, \ldots, 98\%\}$).
All the calibration evaluations (for both marginal and conditional) are performed on 10 equal-width intervals to maintain comparability, following recommendations by~\cite{haider2020effective}.
This ablation study uses the following hyperparameters to enable a fair comparison:

\begin{itemize}
    \item Interpolation: PCHIP
    \item Extrapolation: Linear
    \item Monotonic method:  Bootstrapping
    \item Number percentile: {9, 19, 39, 49}
    \item Conformal set: Training set $+$ Validation set
    \item Repetition parameter: 1000
\end{itemize}

Figure~\ref{fig:cindex_n_percentile}, Figure~\ref{fig:dcal_n_percentile}, and Figure~\ref{fig:dcal_ws_n_percentile} present the C-index, marginal calibration, and conditional calibration performances, respectively.

\textbf{TL;DR} The number of percentiles has no impact on the C-index, and has a slight impact on marginal and conditional calibration.

In Figure~\ref{fig:cindex_n_percentile}, we see that the C-index for the 4 percentile numbers has no visible differences, for all baselines except \emph{CQRNN}.
It is worth mentioning that the difference for \emph{CQRNN} is due to its quantile regression nature, requiring more trainable parameters as percentiles increase.

In Figure~\ref{fig:dcal_n_percentile}, we can see the number of percentiles has some impact on the marginal calibration.
For example, the results for \texttt{Employee} and \texttt{MIMIC-IV} show some perturbation as we increase the number of percentiles.
However, trends are inconsistent across models, \eg fewer percentile levels are preferable by \emph{AFT}, \emph{GB}, \emph{N-MTLR}, \emph{DeepSurv} for \texttt{Employee}, while more percentile levels are preferable by \emph{DeepHit} and \emph{CoxTime}.
Also, most differences in Figure~\ref{fig:dcal_n_percentile} are insignificant. 

In Figure~\ref{fig:dcal_ws_n_percentile}, the trends in conditional calibration are similar to those in marginal calibration. 
Most of the differences are insignificant and the trends are inconsistent.

These results suggest the choice of predefined percentiles $\mathcal{P}$ has minimal impact on the performance.
In practical applications, the reader can choose the best-performing $\mathcal{P}$ at their preference.
However, it is worth noticing that a higher percentile will result in more computational cost. 
Therefore, we suggest that it is generally enough to choose $\mathcal{P}= \{10\%, 20\%, \ldots, 90\%\}$.

\end{document}